\documentclass{article}

\usepackage{microtype}
\usepackage{graphicx}
\usepackage{subcaption}
\usepackage{booktabs} % for professional tables
\usepackage{mathrsfs} % for script
\usepackage{hyperref}

\usepackage[accepted]{icml2026}

\usepackage{amsmath,amssymb,amsfonts}%
\usepackage{inconsolata}
\usepackage[T1]{fontenc}    % Crucial for properly rendering quotes and symbols
\usepackage{textcomp}      % Enables proper straight quotes
\usepackage{mathtools}
\usepackage{amsthm}
\usepackage{tabularx,booktabs,adjustbox,makecell}
\newcolumntype{Y}{>{\raggedright\arraybackslash}X} % wrapped, left-aligned X column
\usepackage{enumitem}
\usepackage[capitalize,noabbrev]{cleveref}
\usepackage[english]{babel}
\usepackage{longtable}
\usepackage{lineno}
\usepackage{caption}
\usepackage{listings}
\usepackage{xcolor}

\usepackage{algorithm}
\usepackage{tabularray}
\usepackage{float}
\usepackage{graphicx}
\usepackage{etoc}
\usepackage{codehigh}
\usepackage[normalem]{ulem}
\UseTblrLibrary{booktabs}
\usepackage{setspace}

\usepackage{threeparttable} % For better table notes
\UseTblrLibrary{siunitx}

\NewTableCommand{\tinytableDefineColor}[3]{\definecolor{#1}{#2}{#3}}
\usepackage{makecell} % Required package
\makeatletter
\let\tableofcontents\relax
\makeatother
\usepackage{tocloft}
\usepackage{titletoc}

\theoremstyle{plain}
\newtheorem{theorem}{Theorem}[section]
\newtheorem{proposition}[theorem]{Proposition}

\newtheorem{corollary}[theorem]{Corollary}
\theoremstyle{definition}
\newtheorem{definition}[theorem]{Definition}

\theoremstyle{remark}
\newtheorem{remark}[theorem]{Remark}

\usepackage[textsize=tiny]{todonotes}

\icmltitlerunning{AI Cartography: Mapping the Latent Landscape of AI Benchmark Ecosystems}

\begin{document}

\twocolumn[
  \icmltitle{AI Cartography: Mapping the Latent Landscape of AI Benchmark Ecosystems}
  % \icmltitle{Noise Tectonics: Measuring the Stability of AI Benchmark Ecosystems}

  % It is OKAY to include author information, even for blind submissions: the
  % style file will automatically remove it for you unless you've provided
  % the [accepted] option to the icml2026 package.

  % List of affiliations: The first argument should be a (short) identifier you
  % will use later to specify author affiliations Academic affiliations
  % should list Department, University, City, Region, Country Industry
  % affiliations should list Company, City, Region, Country

  % You can specify symbols, otherwise they are numbered in order. Ideally, you
  % should not use this facility. Affiliations will be numbered in order of
  % appearance and this is the preferred way.
  \icmlsetsymbol{equal}{*}

  \begin{icmlauthorlist}
    \icmlauthor{Michael Hardy}{equal,yyy}
    \icmlauthor{Anka Reuel}{yyy}
    \icmlauthor{Lijin Zhang}{yyy}
    \icmlauthor{Jodi M. Casabianca}{comp}
    \icmlauthor{Sang Truong}{yyy}
    \icmlauthor{Yash Dave}{yyy}
    \icmlauthor{Hansol Lee}{yyy}
    \icmlauthor{Benjamin Domingue}{yyy}
    \icmlauthor{Sanmi Koyejo}{yyy}
  \end{icmlauthorlist}

  \icmlaffiliation{yyy}{Stanford University, Stanford, CA, USA}
  \icmlaffiliation{comp}{BroadMetrics, Edison, NJ, USA}
  % \icmlaffiliation{sch}{School of ZZZ, Institute of WWW, Location, Country}

  \icmlcorrespondingauthor{Michael Hardy}{\texttt{hardym[}$\alpha\tau$\texttt{]stanford[}$\circ$\texttt{]edu}}

  % You may provide any keywords that you find helpful for describing your
  % paper; these are used to populate the "keywords" metadata in the PDF but
  % will not be shown in the document
  \icmlkeywords{Machine Learning, ICML, benchmark, leaderboard, reliability, generalizability, factor analysis, evaluation, latent regression}

  \vskip 0.3in
]

% this must go after the closing bracket ] following \twocolumn[ ...

% This command actually creates the footnote in the first column listing the
% affiliations and the copyright notice. The command takes one argument, which
% is text to display at the start of the footnote. The \icmlEqualContribution
% command is standard text for equal contribution. Remove it (just {}) if you
% do not need this facility.

% Use ONE of the following lines. DO NOT remove the command.
% If you have no special notice, KEEP empty braces:
\printAffiliationsAndNotice{}  % no special notice (required even if empty)
% Or, if applicable, use the standard equal contribution text:
% \printAffiliationsAndNotice{\icmlEqualContribution}

\begin{abstract}
While aggregate leaderboard scores drive AI development, they contain substantial measurement noise whose sources and magnitudes remain unquantified, making it unclear when rankings reflect genuine capability differences versus evaluation artifacts. We introduce a framework for measuring the latent landscape in AI benchmark ecosystems. Applying Confirmatory Factor Analysis (CFA) and Generalizability Theory to 4,000+ models from the Open LLM Leaderboard, we decompose sources of ranking variance and establish: 
(1) structures assumed in current reporting practice underestimate the strength of relationships between benchmarks;
(2) evidence of local dependence among leaderboard items, undermining uses of benchmarks as measurement instruments under current scoring systems; 
(3) contributor metadata explains more rank-relevant variance ($\approx9\%$) than architecture or deployment categories in this context;
(4) a manifest-score ``scaling law'' slope has low reliability ($\mathcal{R}_{\beta}=0.53$); by contrast, the latent general-factor size slope is highly stable across ecosystem controls ($\mathcal{R}_g=0.97$).
We are able to provide unique insights into benchmark dynamics, such as which benchmarks are a function of LLM size and which can be oppositely impacted by post-training practices. We provide actionable diagnostics to determine how benchmark rankings can be trusted and how benchmark design can be improved.
\end{abstract}

% (1) we can more reliably rank LLMs by their human contributor information than by the architecture or deployment type, accounting for $\sim9\%$ and 4.8\%, respectively, of variance in scores, 
% human contributors account for more variance (9\%) than model architecture (4.8\%), revealing that nonrandom benchmark noise is identifiable as much from submission practices as from model characteristics; 
% (2) model size accounts for 40\% of observed variance, dominating aggregate rankings without explicit control; and (3) scaling laws exhibit context-dependent reliability (Benchmark PSI = 30\%). We introduce novel metrics, including Slope Signal-to-Noise Ratio (SNR$_\beta$) and Proportion of Slope Instability (PSI), and provide actionable diagnostics for when benchmark rankings can be trusted and how benchmark design can be improved.
% (1) Current reporting practices do not reflect the most likely organization of LLM capabilities. 
% that one can more reliably rank LLM performance by contributor information than by either architecture or post-training regimes; 
% a general ``scaling law'' of model size for leaderboard ranking has a reliability of 0.53, but after controlling for ecosystem influences, the reliability of general capability scaling increases to 0.97. 
% We measure human influences on AI performance from the data. 

\section{Introduction}
\label{sec:introduction}

AI benchmark ecosystems have become the measurement instruments of modern machine learning.
Open leaderboards such as HELM~\citep{liang_holistic_2023} and the Open LLM Leaderboard~\citep{fourrier_open_2024} summarize model behavior on diverse tasks (e.g., MMLU, \citealt{hendrycks_measuring_2020}; BBH, \citealt{suzgun_challenging_2022}) into a small set of per-benchmark scores and then a single composite.
This compression is operationally convenient, but it hides strong and rarely validated assumptions about what the scores \emph{mean}~\citep{salaudeen_measurement_2025,reuel_betterbench_2024,truong_fantastic_2025,casabianca_psychometrics_2025}. A benchmark score is typically treated as a scalar measure of a \textit{latent} intended construct (e.g., ``reasoning'').
Formally, this assumes a latent variable exists and that aggregation is valid: item responses reflect a low-dimensional capability vector plus measurement error.
% When these assumptions fail, aggregation does not merely add variance--it produces \emph{structural confounding}:
% benchmark totals can mix general capability, benchmark overlap, redundant items, and evaluation artifacts into a single number
When these assumptions fail, aggregation can produce \textit{structural confounding}: the same total score may combine general capability, benchmark-specific residual structure, item redundancy, and evaluation artifacts in unknown proportions.
(see \S~\ref{apx:related_work}).
This matters because the ecosystem is central in AI development, referenced for funding, and used to infer scaling laws and compare training interventions. 
% AI leaderboard rankings can conflate size with developer innovation. 
% Benchmarks are intended to measure \emph{latent constructs} that are not directly observable, but do so through imperfect instruments.  
We seek to improve practitioner decisions by disentangling the various signals within benchmark scores. 

\begin{figure}[th]
    \centering
    \includegraphics[width=1\linewidth]{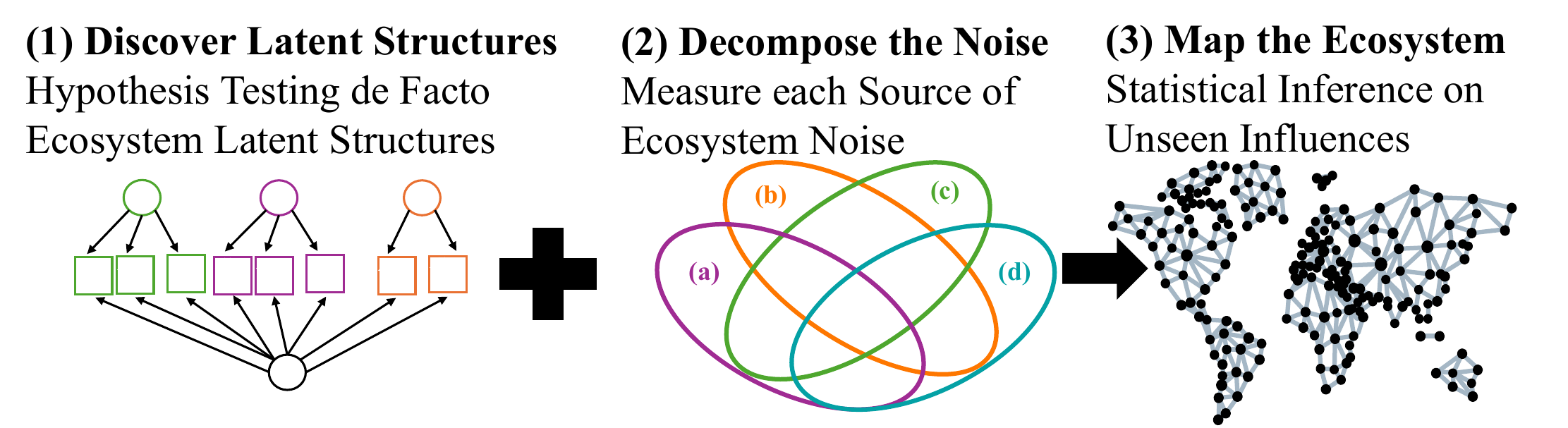}
    \caption{AI Latent Landscape Cartography: Study Overview}
    \label{fig:tectonics}
\end{figure}

\paragraph{Estimating instead of assuming.} A leaderboard ranking is implicitly a measurement-theoretic claim: that a scalar composite of item-level responses reflects one or more latent capabilities of interest, and that variation in the composite score corresponds to variation in the latent capability under measurement. Making this claim explicit requires:
\begin{enumerate}[leftmargin=*,nosep]
    \item \textbf{Exploring and confirming internal structure.} Which latent structure best explains the observed dependence among benchmark items? Current reporting assumes each benchmark measures a single independent construct; violating this assumption causes structural confounding (\S\ref{sec:m1}, Fig. \ref{fig:cfa_diagram}).
    \item \textbf{Ecosystem noise decomposition.} How much of the observed score variance is attributable to construct irrelevant ecosystem features--benchmark, contributor practices, architecture, deployment? Without quantifying these, rankings can conflate signal with artifact (\S\ref{sec:m2}).
    \item \textbf{Covariate-controlled inference.} Can model size be disentangled from training innovation once latent structure and ecosystem variance are modeled jointly? Na\"ive regressions on manifest scores confound all three (\S\ref{sec:m3}).
\end{enumerate}

In response to those needs, we present a framework for measuring AI capabilities and scaling laws controlling for many sources of variance in AI benchmark ecosystems.  We measure the score variation found within each LLM (\S~\ref{sec:m1}) and contributed by their ecosystem (\S~\ref{sec:m2}) before combining these tools to map de-noised scaling laws (\S~\ref{sec:m3}). In brief, we measure the leaderboard ecosystem as a noisy ``landscape'' of structural, ecosystem, and statistical variation. This framework has three progressive methods (Fig. \ref{fig:tectonics}):

\begin{enumerate}[leftmargin=*,nosep]
    \item \textbf{Method 1: Structural discovery (CFA/structural equation modeling [SEM]):} we statistically test hypothetical latent internal structures 
    % (single-$g$, correlated factors, hierarchical, bifactor) 
    % meta-analytically 
    across bootstrapped sets of items
    % with permutation controls, and we conduct 
    % including comprehensive 
    % % 1-degree of freedom (1-df) 
    % tests for measuring local dependence
    ~\citep{bollen_structural_1989,browne_alternative_1992,brown_confirmatory_2015,muthen_goodness_1993,saris_testing_2009}. Because the variables of interest are not observed, we take rigorous approaches to determine the latent structure that best explains AI capabilities, avoiding overfit propensities %, using multiple tests to interrogate these structures' propensities to overfit 
    (\S~\ref{sec:m1}, \ref{apx:models}-\ref{apx:math}). \textbf{Preview}: \textit{regardless of the latent structure tested, significant nonrandom error persists }(unrelated to the AI capability of interest) that could threaten the validity of leaderboard score comparisons.
    \item \textbf{Method 2: Ecosystem variance decomposition (G-theory):} we use a fully crossed mixed-effects variance decomposition model (architecture, benchmark, contributor/provenance, deployment type, \S~\ref{sec:m2}, \ref{apx:var_decomp}) to quantify sources of leaderboard variability and ranking  reliability~\citep{brennan_generalizability_2001,cronbach_my_2004,hardy_all_2025}. 
    % Thus, we decompose the ecosystem noise separating sources of variation and their interactions (\S~\ref{sec:m2}), observing the same generalizable distribution of variance across estimations. 
    % We show that 
\textbf{Preview}: \textit{we can more reliably rank-order AI performance by contributor metadata than we can based on architecture or deployment type}.
    \item \textbf{Method 3: Mixed-effects latent regression:} we regress AI behaviors on the bifactor structure from Method 1 above using noise structures from Method 2 in a latent regression to estimate scaling laws specific to observable practices in ML
    %, while correcting pseudo-replication and controlling deployment/provenance
    ~\citep{de_boeck_explanatory_2011,chalmers_extended_2015}. 
    Using latent regression (\S~\ref{sec:m3}, \ref{apx:latreg}), we estimate ecosystem effects on specific AI benchmark behaviors so we can infer what tuning practices help underlying capabilities as they scale. \textbf{Preview}: \textit{while scaling improves performance, this is not uniformly true}, as scaling progress in one benchmark can be negatively associated with performance in another. 
    % With explicit controls, \textit{we can disentangle scale vs innovation} in benchmark rankings, showing that scaling progress in one benchmark can be negatively associated with performance in another.
\end{enumerate}

\paragraph{How the three methods interlock}\label{sec:interlock}
% ============================================================
% Each question motivates one of the three methods below. 
% CFA tests which underlying capacities most likely give rise to the patterns of observed model responses among benchmark items; G-theory quantifies the proportion of observed variation in scores attributable to components within the ecosystem (benchmark, contributor, architecture, deployment); and, building on those, latent mixed regression asks how do model descriptors like size and fine-tuning approaches affect benchmark performance after controlling for sources of ecosystem variation.
The framework's sequential logic is deliberate: Method~1 establishes \emph{what} is being measured, Method~2 quantifies \emph{how noisy} measurements are, and Method~3 estimates \emph{what drives} variation in the latent quantities after controlling for both structure and noise. 
% The three methods form a progressive refinement, each building on insights from its predecessor: 
CFA identifies a bifactor latent structure, motivating an investigation into general scaling laws and reveals persistent residual dependencies, motivating generalizability studies of ecosystem noise sources that latent structures alone cannot absorb. Then, variance decomposition measures the facets of the noise landscape, such as contributor variance ($\approx$9\%, Table \ref{tab:varcomp_slope}) and model size-linked variance ($\approx$40\%, \S~\ref{sec:size_linked_variance_corrected}), but this estimation cannot disentangle these sources across \emph{latent} levels. 
This motivates latent regression, which regresses capacities directly on size and post-training while controlling for ecosystem variance, completing the decontaminated picture of how model capabilities are shaped. % via 

\paragraph{Contributions.}
% This study answers: \textit{so, what are we actually measuring?} 
To the best of our knowledge, this is the first study to use these advanced methods in AI benchmarking and to measure and control for ecosystem factors. By so doing, we show how current reporting misrepresents the underlying constructs measured (\S~\ref{sec:disc_latent_landscape}). We provide de-noised scaling laws for general AI capability and for benchmark-specific constructs (Figs. \ref{fig:scaling_law}-\ref{fig:scaling_law_deploy}, \S~\ref{sec:disc_scaling_vector}). We disentangle post-training practices from model size effects (Table \ref{tab:lreg_reliab}, \S~\ref{sec:disc_scaling_vector}), and measure the share of AI score variation attributable to each source of variation (\S~\ref{sec:disc_noise_layers}, Table \ref{tab:varcomp_slope}). We demonstrate how, after noise controls, some benchmarks have markedly less usable signal (\S~\ref{sec:specific_bench_analyses}). We offer novel methodological solutions that make advanced measurement theoretic tools usable in benchmarking ecosystems (\S~\ref{sec:roadmap}). By offering clarity around ecosystem and AI factors influencing benchmark rankings (\S~\ref{sec:rank_changes}), we provide a map of the latent landscape for a popular AI benchmark suite.

\section{Methods: from Benchmarking to Measuring}\label{sec:roadmap}
The central challenge in AI evaluation is separating signal (genuine capability) from noise (measurement artifacts, see Appendix \S~\ref{apx:related_work}). Leaderboards compress a high-dimensional response profile into per-benchmark accuracies and then a single composite. This compression is convenient but implicitly assumes that (i) each benchmark is an approximately unidimensional scale, (ii) benchmark totals are commensurate indicators of a smaller set of underlying capabilities, and (iii) local independence--items are no longer correlated after controlling for the latent construct. When these assumptions fail, aggregation makes scores \emph{structurally uninterpretable} and confounds signal with multiple forms of ecosystem noise (\citealt{salaudeen_measurement_2025,casabianca_psychometrics_2025}; \S~\ref{apx:related_work}).

\paragraph{Pseudo-replication.}
In particular, leaderboard submissions are not i.i.d.\@ draws from a population of independent models. Many entries are fine-tunes, merges, or quantizations of a small number of base architectures, creating hierarchical dependence that inflates precision and biases structure toward dominant families. Methods~2 and~3 control for such contribution phenomena explicitly. %: G-theory models contributor/provenance as a crossed random-effect facet, and the latent regression incorporates contributor-level random effects $\boldsymbol{\zeta}$ on latent abilities.

\paragraph{High-dimensional item pools.}
Each benchmark contains hundreds to thousands of items (Table~\ref{tab:bench_counts}), so fitting a CFA (Method 1, \S~\ref{sec:m1}) or IRT (Method 3, \S~\ref{sec:m3}) model to the full item set is computationally infeasible and invites overfitting to item-specific artifacts. We address this via \emph{meta-analytic item set bootstrapping}: for each of $B$ replications, we sample a subset of $r_k$ items per benchmark $k$, fit the target model, and aggregate results across replications % using mixed-effects meta-regression (Eq.~\ref{eq:meta_reg})
(see \ref{apx:boots}). The results are \emph{population quantities}, the expectations under the 
% expected fit under the item 
ecosystem-induced distribution. When meta-aggregating model statistics, Method 1 uses mixed-effects meta-regression for reported fit statistics (see \S \ref{apx:meta_regress}) and Method 3 uses inverse-error weighted methods drawn from the posterior (see \S \ref{apx:boots_inverr}).

\subsection{Method 1: CFA for latent structure discovery}\label{sec:m1}
% ============================================================
\begin{figure*}[htb] 
    \centering
    \includegraphics[width=0.725\textwidth]{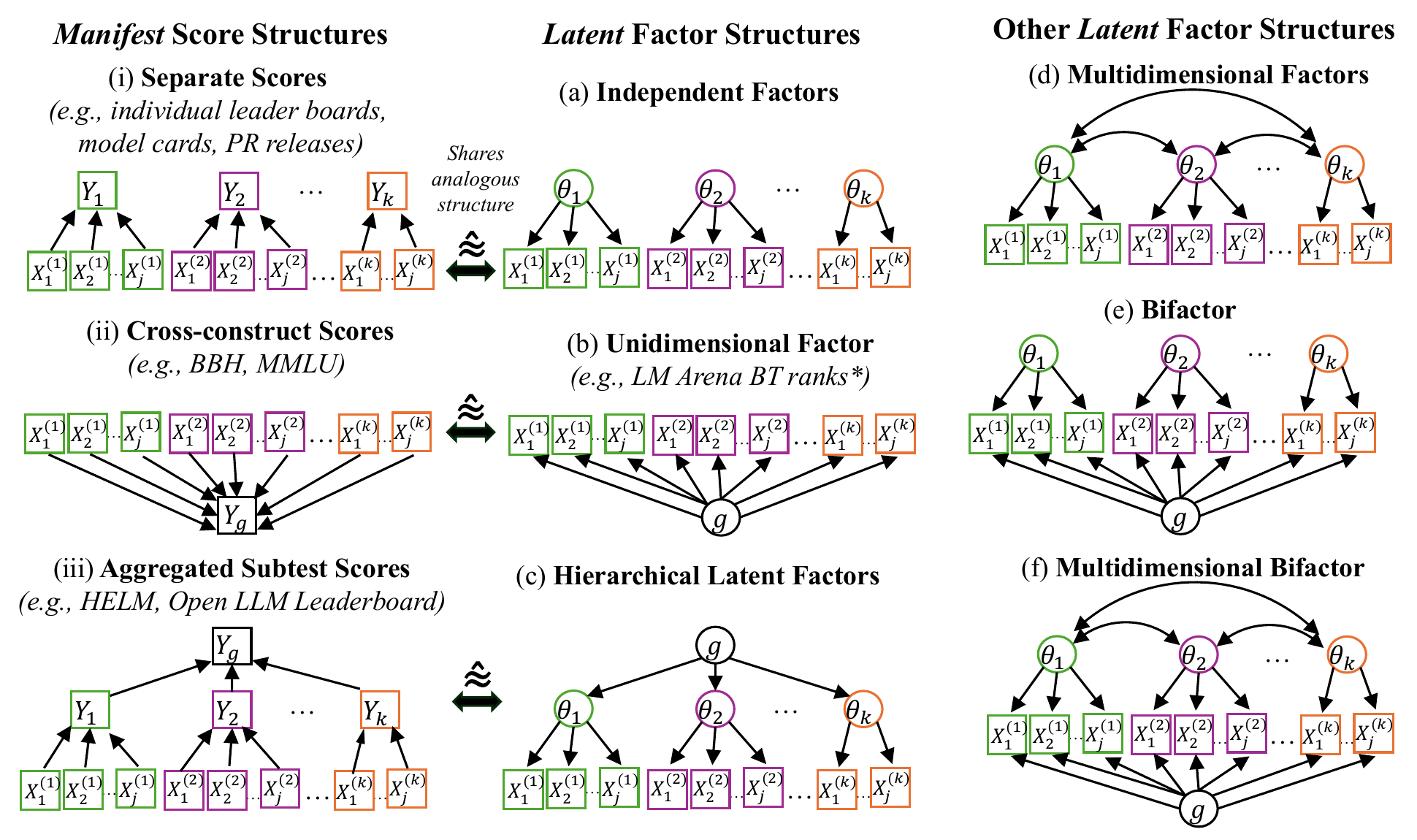}
    \caption{\textbf{Possible Manifest and Latent Scoring Structures} used or assumed in AI Benchmark Ecosystem. \textbf{Left (i, ii, iii)}: common manifest scores structures. \textbf{Middle (a, b, c)}: analogous latent structures, where pairs (a,i), (b,ii), and (c,iii) differ in the causal direction implied by their estimations. \textbf{Right (d, e, f)}: other potential latent structures which may be explanatory of observed scores. *Structure for a LMArena leaderboard (e.g., Text) under Bradley-Terry-style pairwise comparison unidimensionality assumptions \citep{chiang_chatbot_2024}.}
    \label{fig:cfa_diagram}
\end{figure*}
% \paragraph{Goal} 
CFA can statistically test which latent factor structure best predicts the observed dependence (tetrachoric/robust correlations) among benchmark items and LLM responses. Because the variables of interest--latent AI capabilities--are unobserved, we compare six competing structural hypotheses (Table~\ref{tab:structural_models}, Fig.~\ref{fig:cfa_diagram}) and let the data adjudicate among them. By so doing, we create a coarse map of LLMs' ``vectors of mind'' \citep{thurstone_vectors_1935} statistically testing each as a ``nomological network'' \citep{cronbach_construct_1955}.

\subsubsection{The General CFA measurement model}\label{sec:cfa_eqs}
For LLM $i$ and item $j$ ($j$=1, ...$p$),\footnote{A comprehensive notation guide can be found in \S~\ref{apx:roadmap:notation}} a general\footnote{The general form of the present study uses binary response data. Let $y_{ij}\in\{0,1\}$ be LLM submission $i$'s response to item $j$, and let $\boldsymbol{\eta}_i\in\mathbb{R}^m$ be latent abilities. Under a general latent-response formulation for \textit{binary} items:

% \begin{equation}
% \label{eq:found_latent_response}
$y^\star_{ij} = \nu_j + \boldsymbol{\lambda}_j^\top \boldsymbol{\eta}_i + \epsilon_{ij},\,
\epsilon_{ij}\sim\mathcal{N}(0,\varepsilon_j),\,
y_{ij}=\mathbb{1}[y^\star_{ij}>\tau_j].$
% % \end{equation}
% where $\boldsymbol{\Lambda}$ is sparse (which items are linked which factors as shown in Fig. \ref{fig:cfa_diagram}), $\boldsymbol{\Phi}=\mathrm{Cov}(\boldsymbol{\eta})$, and $\boldsymbol{E}$ is typically diagonal. 
} CFA model is
\begin{equation}\label{eq:cfa}
    \mathbf{y}_i = \boldsymbol{\nu} + \boldsymbol{\Lambda}\boldsymbol{\eta}_i + \boldsymbol{\epsilon}_i, \quad
    \boldsymbol{\eta}_i \sim \mathcal{N}(\mathbf{0},\boldsymbol{\Phi}),\;
    \boldsymbol{\epsilon}_i \sim \mathcal{N}(\mathbf{0},\boldsymbol{E}),
\end{equation}
where $\boldsymbol{\Lambda}$ is a $p \times m$ matrix of factor loadings encoding which items load on which latent factors (as in Fig. \ref{fig:cfa_diagram}), $\boldsymbol{\Phi}$ is the $m \times m$ factor covariance, and $\boldsymbol{E}$ is the (typically diagonal) residual covariance.\footnote{Traditionally, $\boldsymbol{\Theta}$ is used for $\mathrm{Cov}(\boldsymbol{\epsilon})$; we use $\boldsymbol{E}$ here to distinguish from $\boldsymbol{\Theta}$ in Method 3.} The model-implied covariance is
% \begin{equation}\label{eq:sigma}
    $\boldsymbol{\Sigma}(\boldsymbol{\theta}) = \boldsymbol{\Lambda}\boldsymbol{\Phi}\boldsymbol{\Lambda}^\top + \boldsymbol{E}$. Each candidate structure $s \in \mathcal{S}$ specifies a distinct set of constraints on $(\boldsymbol{\Lambda}, \boldsymbol{\Phi}, \boldsymbol{E})$,
% \end{equation}
and estimation proceeds by minimizing a discrepancy $F\bigl(\boldsymbol{\Sigma}(\boldsymbol{\theta}),\,\mathbf{S}\bigr)$ between the model-implied and observed sample covariance $\mathbf{S}$. CFA %does \emph{not} 
primarily predicts the implied \emph{dependence structure} (correlations), rather than labels; %it predicts the \emph{dependence structure} (tetrachoric/robust correlations) implied and 
this enables hypothesis testing %about benchmark structures 
by providing evidence confirming the internal structure of the leaderboard \cite{casabianca_psychometrics_2025}.

% \begin{remark}[CFA is not PCA]\label{rem:cfa_pca}
% Unlike PCA, CFA does not merely identify dominant directions of variance. It tests falsifiable hypotheses about \emph{dependence structure} by encoding benchmark theories as constraints, modeling measurement error explicitly, and testing whether residual covariance remains after conditioning on the latent construct. This is precisely the distinction needed to evaluate whether benchmark composites are valid measures.
% \end{remark}

\subsubsection{CFA methodological solutions}\label{sec:cfa_solutions}

% \paragraph{Meta-analytic bootstrapping across fitted item sets.} For each replication $b \in \{1,\ldots,B\}$, sample $r_k$ items per benchmark $k$ without replacement, fit all six structures $s \in \mathcal{S}$ to the pooled tetrachoric correlation matrix, and record fit statistics $t_{b,s}$ and parameter estimates $(\hat{\boldsymbol{\Lambda}}, \hat{\boldsymbol{\Phi}})$%, and local-fit diagnostics
% . Aggregate via a mixed-effects meta-regression:
%     \begin{equation}\label{eq:meta_reg}
%         t_{b,s,r} = \alpha_s + \beta_s r + u_b + v_b r + \varepsilon_{b,s,r},
%     \end{equation}
%     % where $r \in \{0,1\}$ indicates true versus permuted assignment and $(u_b, v_b)$ capture item-sample heterogeneity. This yields estimates (i)~an estimate $\alpha_s$ 
%     of each structure's expected fit under correct assignment ($\alpha_s$) and of %, (ii)~an estimate %$\beta_s$ 
%     of how much fit is attributable to true structure versus model flexibility ($\beta_s$), while correctly propagating item-sampling variability (\S~\ref{apx:boots}). %This targets a \emph{population quantity}, the expected fit of the item distribution induced by the ecosystem, rather than a point estimate conditional on one draw.

\paragraph{Addressing overfitting tendencies of flexible latent structures.}
Higher-capacity models (e.g., correlated bifactor) can absorb covariance artifacts unrelated to true latent organization, inflating apparent fit. \textbf{Solution 1:} \textbf{{Non-traditional out-of-sample prediction.}} For each bootstrap, we compute held-out AUC and MAE on within-sample latent factor scores, providing a direct overfitting diagnostic beyond traditional indices (Table~\ref{tab:cfa_fitstats_medians}, \S~\ref{apx:comparison_fit_stats}).  \textbf{Solution 2:} \textbf{{Within-replication permutation controls.}} For each bootstrap $b$, we additionally construct a randomized item-to-benchmark mapping $\pi^{(b)}$ preserving per-benchmark counts $\{r_k\}$ and refit. The contrast $\Delta T_s^{(b)} = T(s \mid \pi^{(b)}) - T(s \mid \text{id})$ quantifies how much of $s$'s advantage depends on true benchmark grouping rather than model flexibility (\S~\ref{apx:comparison_fit_stats}, Table \ref{tab:trad_fit_perm}). \textbf{Solution 3:} \textbf{{Multiple estimation methods.}} We fit with both Diagonally Weighted Least Squares (DWLS) and Metropolis-Hastings Robbins-Monro (MH-RM) estimations.

% \begin{description}[leftmargin=1.5em,nosep]
%     \item[Solution~2a:] \emph{Out-of-sample prediction.} For each bootstrap, we compute held-out AUC and MAE on within-sample latent factor scores, providing a direct overfitting diagnostic beyond traditional in-sample indices (Table~\ref{tab:cfa_fitstats_medians}).
%     \item[Solution~2b:] \emph{Within-replication permutation controls.} For each bootstrap $b$, we additionally construct a randomized item-to-benchmark mapping $\pi^{(b)}$ preserving per-benchmark counts $\{r_k\}$ and refit. The contrast $\Delta T_s^{(b)} = T(s \mid \pi^{(b)}) - T(s \mid \text{id})$ quantifies how much of $s$'s advantage depends on true benchmark grouping rather than generic flexibility (Appendix \ref{apx:comparison_fit_stats}).
% \end{description}

\paragraph{Identifying residual misfit in each structure by aggregating score-test trends.}
Global fit indices (e.g., RMSEA, CFI) do not localize \emph{where} a latent structure fails. 
% \textbf{Solution:} \textbf{{Comprehensive score-test (modification index, MI) trend aggregation.}} 
We compute the locally most powerful Lagrange-multiplier or modification-index (MI) statistics,\footnote{%Such single 1-df score tests are typically limited to hyperlocalized diagnostics and often 
Typically used to identify item redundancy, bootstrapped 1-df score tests can quantify residual covariances for each structure and identify violations of local dependence that threaten the use of benchmarks as a measurement tool: %gain insight into the latent fit of thousands of 1-df tests. Often used in practice to identify redundancy, we use these to test the edges of the fits of our models and to quantify violations of local dependence that threaten the use of benchmarks as a measurement tool. 
% \paragraph{Localized diagnostics (MI/SEPC).}
% To identify systematic local dependence and redundancy, 
% We compute modification indices ($\mathrm{MI}$) and SEPC for constrained residual covariances. 
\begin{proposition}[Modification indices as Lagrange-multiplier tests]\label{prop:mi_lm}
    Let $\ell(\tilde{\boldsymbol{\theta}})$ be twice continuously differentiable at the constrained optimum $\hat{\boldsymbol{\theta}}$, with $\psi$ constrained to zero. Under standard regularity conditions, the score statistic
    % \[
        $\mathrm{MI}(\psi) = U_\psi(\hat{\boldsymbol{\theta}})^\top\, \mathcal{I}_{\psi\psi\cdot\theta}(\hat{\boldsymbol{\theta}})^{-1}\, U_\psi(\hat{\boldsymbol{\theta}}) \;\xrightarrow{d}\; \chi^2_1$,
    % \]
    where $U_\psi(\hat{\boldsymbol{\theta}}) = \partial \ell / \partial \psi \big|_{\psi=0}$ is the score and $\mathcal{I}_{\psi\psi\cdot\theta}$ is the Schur complement of the partitioned Fisher information. The SEPC $\widehat{\Delta\psi} \approx \mathcal{I}_{\psi\psi\cdot\theta}^{-1} U_\psi(\hat{\boldsymbol{\theta}})$ estimates the magnitude and sign of the residual association on a correlation scale. \emph{(Proof: \S~\ref{sec:mi_score_test}, see Appendix \S~\ref{apx:math})}
\end{proposition}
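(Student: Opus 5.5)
The approach is the classical Rao score test argument, applied to the CFA log-likelihood $\ell$ from \eqref{eq:cfa} (or, for binary items, to the DWLS/latent-response quasi-likelihood). Partition the full parameter as $\tilde{\boldsymbol{\theta}}=(\boldsymbol{\theta},\psi)$. Here $\psi$ is the single residual covariance, or loading, fixed to zero by structure $s$, and $\hat{\boldsymbol{\theta}}$ is the constrained estimator with $\hat\psi=0$. "Standard regularity conditions" means the following: the true value $\tilde{\boldsymbol{\theta}}_0$ is interior, with $\psi_0=0$ under the null; the model is identified, so the Fisher information $\mathcal{I}$ is positive definite; the Hessian is uniformly LLN-convergent in a neighbourhood of $\tilde{\boldsymbol{\theta}}_0$; and the per-LLM scores satisfy a CLT. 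Under these, $n^{-1/2}U(\tilde{\boldsymbol{\theta}}_0)\xrightarrow{d}\mathcal{N}(0,\mathcal{I})$, and $\hat{\boldsymbol{\theta}}$ is $\sqrt n$-consistent. I will take these facts as given.

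\textbf{Step 1 (expand the constrained score).} The constrained first-order conditions give $U_\theta(\hat{\boldsymbol{\theta}})=0$. I then Taylor-expand both blocks of the score around $\tilde{\boldsymbol{\theta}}_0$; twice continuous differentiability lets the remainder be controlled by the Hessian LLN:
\[
n^{-1/2}U_\theta(\hat{\boldsymbol{\theta}}) = n^{-1/2}U_\theta(\tilde{\boldsymbol{\theta}}_0) - \mathcal{I}_{\theta\theta}\sqrt n(\hat{\boldsymbol{\theta}}-\boldsymbol{\theta}_0)+o_p(1),
\]
and the analogous expansion holds for $U_\psi$, with $\mathcal{I}_{\psi\theta}$ in place of $\mathcal{I}_{\theta\theta}$. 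Setting the first display to zero and solving for $\sqrt n(\hat{\boldsymbol{\theta}}-\boldsymbol{\theta}_0)$, then substituting into the second, yields
\[
n^{-1/2}U_\psi(\hat{\boldsymbol{\theta}})=n^{-1/2}\bigl(U_\psi-\mathcal{I}_{\psi\theta}\mathcal{I}_{\theta\theta}^{-1}U_\theta\bigr)(\tilde{\boldsymbol{\theta}}_0)+o_p(1).
\]
This is the efficient score for $\psi$.

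\textbf{Step 2 (asymptotic distribution).} By the joint CLT, the right-hand side is asymptotically normal with mean zero. Its variance is $\mathcal{I}_{\psi\psi}-\mathcal{I}_{\psi\theta}\mathcal{I}_{\theta\theta}^{-1}\mathcal{I}_{\theta\psi}=\mathcal{I}_{\psi\psi\cdot\theta}$, which is exactly the Schur complement named in the statement. I would verify this by direct block multiplication. Because $\mathcal{I}$ is positive definite, this scalar is positive. Consistency of $\hat{\boldsymbol{\theta}}$ and continuity of $\mathcal{I}$ give $\mathcal{I}_{\psi\psi\cdot\theta}(\hat{\boldsymbol{\theta}})\to_p\mathcal{I}_{\psi\psi\cdot\theta}(\tilde{\boldsymbol{\theta}}_0)$. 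Slutsky's theorem together with the continuous mapping $z\mapsto z^2/\sigma^2$ then gives $\mathrm{MI}(\psi)\xrightarrow{d}\chi^2_1$. The scaling by $n$ cancels between the score and the information, so the normalization is consistent with the statement.

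\textbf{Step 3 (SEPC).} I would interpret $\mathcal{I}_{\psi\psi\cdot\theta}^{-1}U_\psi$ as one Newton step for $\psi$ from the constrained optimum, taken after profiling out $\boldsymbol{\theta}$. Equivalently, it is the first-order approximation to the unconstrained estimate $\hat\psi_{\text{free}}$. To see this, expand the unconstrained score equations around $\hat{\boldsymbol{\theta}}$ and solve the $2\times2$ block system; the $\psi$-component is $\mathcal{I}_{\psi\psi\cdot\theta}^{-1}U_\psi+O_p(n^{-1})$. Standardizing by the model-implied standard deviations of the two residuals involved then puts it on a correlation scale, which justifies the "magnitude and sign" reading.

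\textbf{Main obstacle.} The delicate point is the regularity condition when $\psi$ is a residual covariance inside the constrained space. If a residual variance is near zero (a Heywood case), $\tilde{\boldsymbol{\theta}}_0$ may sit on the boundary and the Schur complement may degenerate. In the binary/DWLS setting, the "likelihood" is only a quasi-likelihood, so information-matrix equality fails and one obtains a scaled $\chi^2_1$ unless a sandwich correction is applied. I would handle this by stating the result for the ML case under interior-point, full-rank assumptions, and remarking that for DWLS the same algebra goes through with $\mathcal{I}$ replaced by the expected Hessian of the fit function and a robust rescaling.
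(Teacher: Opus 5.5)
Your proposal is correct and follows the same Rao score-test route as the paper's proof sketch in \S\ref{sec:mi_score_test}. The paper starts from the KKT identity $\hat{\lambda}=-U_\psi(\hat{\boldsymbol{\theta}})$, which is what justifies the ``Lagrange multiplier'' name, and then simply asserts $n^{-1/2}U_\psi(\hat{\boldsymbol{\theta}})\Rightarrow\mathcal{N}(0,\mathcal{I}_{\psi\psi\cdot\theta})$; your efficient-score expansion in Steps 1--2 actually derives that step, and your Newton-step reading of the EPC and your Heywood/DWLS caveats match, and slightly sharpen, the paper's conditions and its second-order-approximation justification of $\mathrm{EPC}(\psi)$.
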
} $\text{MI}(\psi)$, for every candidate residual covariance and cross-loading $\psi = \Theta_{ab}$, then aggregate Standardized Expected Parameter Changes (SEPCs) across bootstraps and structures. This maps 
%\emph{where} the benchmark ecosystem violates local independence, identifying 
item pairs whose residual associations exceed what the latent structure can explain, and whether such violations are stable structural properties or item-idiosyncratic (\S~\ref{apx:math}; Figs. \ref{fig:bench_rel_misfit}, \ref{fig:bench_item_misfit}).

% \paragraph{Exploiting locally most powerful statistical tests with bootstrapping.} 
% \footnote{For a constrained parameter $\psi$, $
% \mathrm{MI}(\psi)=U_\psi(\hat{\boldsymbol{\theta}})^\top
% \mathcal{I}_{\psi\psi\cdot\theta}(\hat{\boldsymbol{\theta}})^{-1}
% U_\psi(\hat{\boldsymbol{\theta}})
% \;\overset{H_0}{\approx}\;\chi^2_1,
% $ with standardized expected parameter change (SEPC) used as an effect size (derivation and conditions in Appendix~\ref{apx:math}). The formal statistical foundation for MIs as LM tests is provided in Appendix \ref{sec:mi_lm_proof}. % Persistent large $|\mathrm{SEPC}|$ flags ecosystem-level redundancy. 
% } 

% \begin{description}[leftmargin=1.5em,nosep]
%     \item[Solution:] \emph{Comprehensive score-test (modification index) trend aggregation.} We compute the Lagrange-multiplier statistic $\text{MI}(\psi)$ for every candidate residual covariance $\psi = \Theta_{ab}$ and every candidate cross-loading, then aggregate Standardized Expected Parameter Changes (SEPCs) across bootstraps and structures. This maps \emph{where} the benchmark ecosystem violates local independence---identifying item pairs whose residual association exceeds what the latent structure can explain---and whether these violations are stable structural properties or item-idiosyncratic (Appendix~\ref{apx:math}; Figs. ~\ref{fig:bench_rel_misfit},~\ref{fig:bench_item_misfit}).
% \end{description}

% ============================================================

\noindent
\begin{table*}[ht]
\centering
% \label{tab:structural_models}
\scriptsize
\caption{Latent Structural Factor Models for AI Benchmark Ecosystem Analysis and Basic Results}
% \resizebox{\ifdim\width>\linewidth\linewidth\else\width\fi}{!}{
\begin{tabular}{@{}p{0.105\textwidth}p{0.15\textwidth}p{0.27\textwidth}p{0.21\textwidth}p{0.0425\textwidth}p{0.03\textwidth}p{0.025\textwidth}@{}l}
\toprule
\textbf{Structure} & \textbf{Formulae:} $y_{ikj} = $ & \textbf{Covariance} %$\text{Cov}(\mathbf{y})=$ 
& \textbf{Key Idea} & \textbf{RMSEA}  & \textbf{${\text{AUC}}$}  & \textbf{${\text{MAE}}$}\\
\midrule
% \textbf{M2:} 
\makecell[lt]{Independent Factors \\ (Fig \ref{fig:cfa_diagram}.a, \S~\ref{apx:indepfact})} & \makecell[t]{$\lambda_{kj} f_{ik} + \epsilon_{ikj}$ \\ \ $ \boldsymbol{\Lambda}\mathbf{I}\boldsymbol{\Lambda}^\top + \boldsymbol{E}$} & 
Block-diagonal across benchmarks (no inter-benchmark covariance)  $\text{Cov}(f_{ik}, f_{ik'}) = 0$%, \; \forall k \neq k'$ 
& No meaningful cross-domain transfer. One factor per benchmark. (6 factors) & 0.09 &\textbf{0.95} & 0.22\\ % , k \neq k'
% \addlinespace[0.15ex]
% \textbf{M1:} 
\makecell[lt]{One General Factor/ \\ Unidimensional \\(Fig \ref{fig:cfa_diagram}.b, \S~\ref{apx:gfact})} & \makecell[t]{$\lambda_{kj} g_i + \epsilon_{ikj}$ \\ $\boldsymbol{\Lambda}\boldsymbol{\Lambda}^\top + \boldsymbol{E}$} & All  covariance explained by one source (+ diagonal residuals):  $\mathrm{Var}(g_i)=1$ \  & Single, general factor ($g$) explaining all shared variance.  (1 factor) &0.06 &0.90 & 0.23 \\
% \addlinespace[0.15ex]
% \textbf{M4:} 
\makecell[lt]{Hierarchical \\ Structure/2nd-Order \\(Fig \ref{fig:cfa_diagram}.c, \S~\ref{apx:hier2ord})} & \makecell[t]{$\lambda_{kj} f_{ik} + \epsilon_{ikj}$,\\  $f_{ik} = \gamma_k g_i + \delta_{ik}$ \\ $\boldsymbol{\gamma}\boldsymbol{\gamma}^\top + \boldsymbol{\Delta}$ }& Inter-benchmark covariance implied via $g$ through $f_k$ (hierarchically structured)
$\text{Cov}(f_{ik}, f_{ik'}) = \gamma_k \gamma_{k'}$
& $g$ acts as the cause of correlations among the first-order factors (7 factors) & 0.05  &0.94 & 0.24 \\
% \addlinespace[0.15ex]
% \textbf{M3:} 
\makecell[lt]{Correlated Factors/ \\ Multidimensional \\(Fig \ref{fig:cfa_diagram}.d, \S~\ref{apx:corrfact})} &\makecell[t]{ $\lambda_{kj} f_{ik} + \epsilon_{ikj}$ \\ $\boldsymbol{\Lambda}\boldsymbol{\Psi}\boldsymbol{\Lambda}^\top + \boldsymbol{E}$ } & 
Cross-benchmark covariance via $\mathbf\Psi$ (dense factor correlation matrix) \ $\text{Cov}(\mathbf{f}_i) = \mathbf{\Psi}$
& Benchmark abilities are distinct but interrelated  (6 factors) &0.05  &0.73 & 0.35\\
% \addlinespace[0.15ex]
% \textbf{M5:} 
\makecell[lt]{Bifactor/ Orthogonal \\General Factor) \\(Fig \ref{fig:cfa_diagram}.e, \S~\ref{apx:bifact}, \ref{apx:code})} & \makecell[tc]{\vspace{-1.5pt}$\lambda^{(g)}_{kj} g_i + \lambda^{(f)}_{kj} f_{ik} + \epsilon_{ikj}$\\$\boldsymbol{\Lambda}_g\boldsymbol{\Lambda}_g^\top + \boldsymbol{\Lambda}_f\boldsymbol{\Lambda}_f^\top + \boldsymbol{E}$}%$\text{Cov}(g, \mathbf{s}) = \mathbf{0}$ 
& Covariance decomposes into general + benchmark-specific; no correlations among factors: 
$g_i \perp f_{ik}, f_{ik} \perp f_{ik'}$
& Assumes benchmark-specific ($f_k$) variance is orthogonal to general ($g$) sources.  (7 factors) & 0.05  & \textbf{0.95} & \textbf{0.21}\\
% \addlinespace[0.15ex]
% \textbf{M6:} 
\makecell[lt]{Correlated Bifactor \\ (Fig. \ref{fig:cfa_diagram}.f, \S~\ref{apx:corrbifact})} & \makecell[t]{\vspace{-1.5pt}$\lambda^{(g)}_{kj} g_i + \lambda^{(f)}_{kj} f_{ik} + \epsilon_{ikj}$ \\ $\boldsymbol{\Lambda}_g\boldsymbol{\Lambda}_g^\top + \boldsymbol{\Lambda}_f\boldsymbol{\Psi}\boldsymbol{\Lambda}_f^\top + \boldsymbol{E}$}  & Residual cross-benchmark covariance after removing $g$ captured by $\mathbf\Psi$ over specifics: $g_i \perp \mathbf{f}_i, \; \text{Cov}(\mathbf{f}_i) = \mathbf{\Psi}$
  % $\text{Cov}(\mathbf{s}) = \boldsymbol{\Psi}$
& Tests for residual latent covariance after partialing $g$  (7 factors) & \textbf{0.04}  & 0.77 & 0.32\\
\bottomrule
\end{tabular} \label{tab:structural_models}
\vspace{0.25ex}
\caption*{\small Note: $y_{ikj}$ = response of LLM $i$ to item $j$ in benchmark $k$; $g_i$ = general factor; $f_{ik}$ = benchmark-specific factors; $\lambda$ = loadings; $\epsilon_{ikj} \sim \mathcal{N}(0, \theta_{kj})$ = residuals; $\boldsymbol{\Psi}$ = correlation matrix; $\boldsymbol{E}$ = diagonal residual covariance matrix. Model details are in \ref{apx:models} and Tables \ref{tab:oos_agg_mae_auc}--\ref{tab:trad_fit_perm}. Study-wide notation is in \S~\ref{apx:roadmap:notation} and Bifactor code is found in \ref{apx:code} (all code and model specifics are in the online repository). 

The final columns report the aggregated sample-weighted mean AUC and MAE across bootstraps for held out observations. RMSEA is a traditional CFA metric aggregated across estimations.  %Details can be found in Tables \ref{tab:oos_agg_mae_auc}--\ref{tab:trad_fit_perm}. 
 RMSEA values reported in an earlier version were the result of a transcription error.
}
\end{table*}

\begin{figure*}[H]
    \centering
    \includegraphics[width=0.8\linewidth]{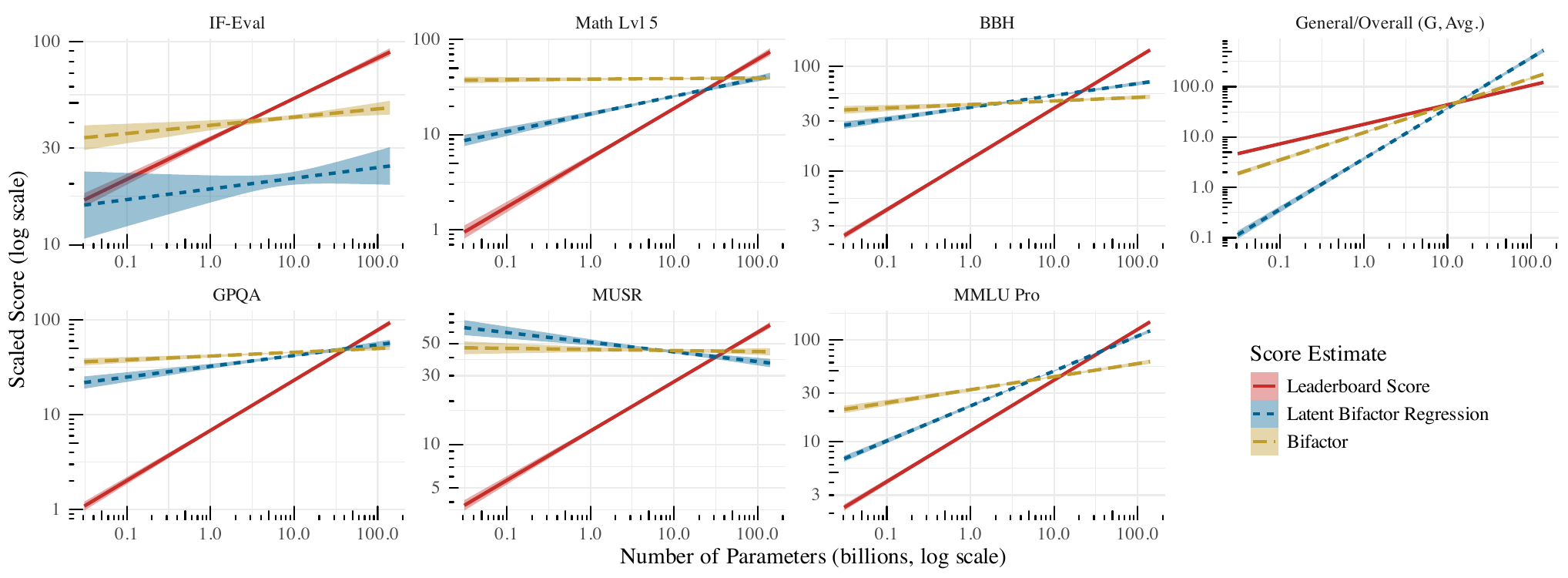}
    \caption*{\footnotesize Noise-controlled Scaling Laws: Log-log of Performance by Number of Parameters. Shaded regions indicate the 95\% confidence
bands for the linear relationship between x and y estimated by robust regression using an M estimator with Tukey's biweight \cite{venables_modern_2002}. Prior to log transformation of the y-axis, each estimated latent score $\Theta_k$ is converted from its assumed distribution to positive values via the inverse cumulative normal distribution function, $S_k = \Phi^{-1}(\Theta_k)$ before all scores are min-max scaled to the range $s \in S_k:[1 . . . 100]$.}
    \label{fig:scaling_robust}
\end{figure*}

% ============================================================

% ============================================================
\subsection{Method 2: Fully crossed generalizability study}\label{sec:m2}
% ============================================================
% \paragraph{Goal}   
Variance decomposition can quantify how much observed score variation is attributable to each ecosystem facet and interaction and assess the stability of observed rankings.
%under different decision rules
% While Method~1 asks ``what is the structure of ability?'', Method~2 asks ``how much of the \emph{observed variance} is signal versus noise, and noise from \emph{which source}?'' 

\subsubsection{The Generalizability study model}

We use leaderboard metadata to identify characteristics belonging to a set of benchmark scores: (a) architecture, (b) benchmark uniqueness, (c) contributors\footnote{Contributors represent provenance: the facet is an operational metadata grouping. It should not be interpreted as a person-level causal effect; it aggregates submission source, availability status, documentation practices, and unobserved training/provenance differences. This includes variation due to missing model card and subsequent removal \citep{clementine_fourrier__is_off_till_dec_2026_hiking_clefourrier_open_2024}.} to the leaderboard, and (d) deployment, post-training, and tuning decisions. (More information in \S~\ref{apx:var_decomp}). Let $s_i$ be the score of submission $i$ on benchmark $b_i$ with architecture $a_i$, contributor $c_i$, and deployment $d_i$. This crossed random-effects model is
\begin{equation}\label{eq:gstudy}
    s_i = \mu + \!\!\!\!\!\!\!\!\! \sum_{S \in \mathcal{S}:\mathcal{P}^*(\{a,b,c,d\})} \!\!\!\!\!\!\!\!\! u_{S(i)} + \varepsilon_i,
\end{equation}
where $\mathcal{P}^*$ denotes the non-empty Power Set defining the facets (i.e., all main effects and their interactions $A, B, AB, AC, \ldots, ABCD$), each $u_{S(i)} \sim \mathcal{N}(0, \sigma^2_S)$, and $\varepsilon_i$ absorbs the residual and highest-order interaction ($ABCD$). Variance shares $p_S = \sigma^2_S / (\sum_{S'} \sigma^2_{S'} + \sigma^2_\varepsilon)$ quantify the noise sources, with additional details in 
% Table \ref{tab:varcomp_slope} and in 
\S~\ref{apx:var_decomp}.

\subsubsection{G-study methodological solutions}

\paragraph{Controlling for influence of model size with random slopes.}
Without controlling for size, variance attributable to %architecture or contributor 
each facet may simply reflect scale differences rather than genuine ecosystem effects. Extend (\ref{eq:gstudy}) with the log-size covariate $x_i=\log_{10}(\#\mathrm{params_i (B)})$ %$x_i = \log_{10}(\texttt{\#params}_i)$ 
and random slopes:
    \begin{equation}\label{eq:gstudy_slopes}
        s_i = \mu + \beta x_i + \sum_{S \in \mathcal{S}} \bigl(u^S_{g_S} + v^S_{g_S} x_i\bigr) + \varepsilon_i,
    \end{equation}
    where $u^S_{g_S} \sim \mathcal{N}(0,\sigma^2_{S,0})$ are random intercepts and $v^S_{g_S} \sim \mathcal{N}(0,\sigma^2_{S,1})$ are random slopes, constrained uncorrelated within each term. This decomposes total variance into intercept components (baseline performance) and slope components (scaling-relationship heterogeneity), enabling the novel reliability diagnostics $\mathrm{SNR}_\beta$ and $\mathrm{PSI}_S$ defined below. 

    \paragraph{Robustness checks} Because the four-facet crossing is sparse, we interpret higher-order components descriptively and emphasize robustness of broad variance-ordering patterns rather than individual high-order interaction estimates. We perform robustness checks on decomposition sensitivity, including adjusting category granularity and Bayesian estimation. We verify that the observed order of $B \gg C > A > D$ in our main finding is stable.\footnote{We also estimate \eqref{eq:gstudy_slopes}${}^\star$, with correlations are estimated freely. \eqref{eq:gstudy_slopes}${}^\star$ resulted in a larger contributor-linked variance share, $p_C=0.12$. The fit was singular, so we only report it in Table \ref{tab:vcov_gstudies}. \S~\ref{apx:robust_var_decomp} has details.} 
% \begin{description}[leftmargin=1.5em,nosep]
%     \item[Solution:] \emph{Random-slopes extension.} Extend (\ref{eq:gstudy}) with the log-size covariate $x_i = \log_{10}(\texttt{\#params}_i)$ and random slopes:
%     \begin{equation}\label{eq:gstudy_slopes}
%         s_i = \mu + \beta x_i + \sum_{S \in \mathcal{S}} \bigl(u^S_{g_S} + v^S_{g_S} x_i\bigr) + \varepsilon_i,
%     \end{equation}
%     where $u^S_{g_S} \sim \mathcal{N}(0,\sigma^2_{S,0})$ are random intercepts and $v^S_{g_S} \sim \mathcal{N}(0,\sigma^2_{S,1})$ are random slopes, constrained uncorrelated within each term. This decomposes total variance into intercept components (baseline performance) and slope components (scaling-relationship heterogeneity), enabling the novel reliability diagnostics $\mathrm{SNR}_\beta$ and $\mathrm{PSI}_S$ defined below.
% \end{description}

\begin{definition}[Slope Signal-to-Noise Ratio and Generalizability]\label{def:snr}
    The SNR of the fixed-effect slope $\beta$ is the ratio of its square to the total variance of the random slopes:
    % \[
        $\mathrm{SNR}_\beta \;\equiv\; \frac{\beta^2}{\sum_S \sigma^2_{S,1}}\; \text{and}\; \mathcal{R}_\beta \equiv \frac{\beta^2}{\beta^2+\sum_S \sigma^2_{S,1}}$,
    % \]
    where $\mathcal{R}_\beta$ is the reliability of a general scaling law.  $\mathrm{SNR}_\beta \gg 1$ implies a robust, universal scaling law; $\mathrm{SNR}_\beta \approx 1$ or less indicates that the average scaling law is misleading because context-specific variation rivals the fixed effect (see \S~\ref{sec:disc_scaling_vector}).
\end{definition}
% We summarize reliability of the fixed size effect with (i) slope SNR, $\mathrm{SNR}_\beta=\beta^2/\sum_S\sigma^2_{S,1}$ and (ii) instability decomposition, $\mathrm{PSI}_S=\sigma^2_{S,1}/\sum_{S'}\sigma^2_{S',1}$ (details in Appendix~\ref{apx:scaling_reliability}).
\begin{definition}[Proportion of Slope Instability]\label{def:psi}
    For each facet or interaction term $S$:
    % \[
        $\mathrm{PSI}_S \;\equiv\; \frac{\sigma^2_{S,1}}{\sum_{S'} \sigma^2_{S',1}}$.
    % \]
    $\mathrm{PSI}_S$ attributes scaling-law instability to its ecosystem sources: a large $\mathrm{PSI}$ implies that different sources of variation respond to size in fundamentally different ways (see Table \ref{tab:varcomp_slope}).
\end{definition}

We use the \textit{bifactor} downstream because it balances covariance fit, predictive performance, parsimony, and interpretability. The \textit{correlated bifactor} acts as an upper-bound flexibility check rather than the preferred scientific structure.

% \begin{description}[leftmargin=1.5em,nosep]
%     \item[Solution~2a:] \emph{Multi-granularity robustness checks.} We re-estimate variance shares at three levels of categorical resolution (finest, fine, coarse; Table~\ref{tab:granularity_sensitivity}) to verify that the ordering $B \gg C > A > D$ is not an artifact of a particular coding scheme.
%     \item[Solution~2b:] \emph{Bayesian estimation.} We complement REML with a Bayesian MCMC framework (4~chains, weak priors) to obtain full posterior distributions over variance shares, providing 95\% HDI intervals for each component (Table~\ref{tab:bayesfit}).
% \end{description}

% ============================================================
\subsection{Method 3: Mixed-effects latent regression}\label{sec:m3}
% ============================================================

To estimate \emph{ecosystem-controlled scaling laws} on \emph{latent abilities}, both general and benchmark-specific, %, while respecting pseudo-replication via random effects%Where Methods~1 and~2 operate on observed scores or covariance structure,
Method 3 operates directly in the latent space, unifying measurement and regression into a single coherent model.

\subsubsection{Hierarchical two-component model}

The hierarchical mixed-effects latent regression model comprises a \textbf{measurement layer} (items $\to$ latent traits) and a \textbf{structural layer} (latent traits $\to$ covariates).

\paragraph{Measurement layer (bifactor IRT).}
Let $\boldsymbol{\theta}_i = (\theta_{i0}, \theta_{i1}, \ldots, \theta_{iK})^\top \in \mathbb{R}^{K+1}$ be the latent ability vector for LLM $i$, with general factor $g=\theta_{i0}$ and benchmark-specific factors $\theta_{ik}$. For item $j$ in benchmark $k(j)$:
\begin{equation}\label{eq:irt}
    \Pr(y_{ij} = 1 \mid \boldsymbol{\theta}_i) = \sigma\bigl(a_{j0}\,\theta_{i0} + a_{j,k(j)}\,\theta_{i,k(j)} - b_j\bigr),
\end{equation}
where $\sigma(t) = (1+e^{-t})^{-1}$, $a_{j0}$ and $a_{j,k(j)}$ are discrimination parameters for the general and specific factors (compare with $\lambda$ in Table \ref{tab:structural_models}, bifactor), and $b_j$ is item difficulty. Each item loads on the general factor \emph{and} one specific factor.%--the bifactor constraint.

\paragraph{Structural layer (mixed-effects regression on latent traits).}
The matrix of latent traits $\boldsymbol{\Theta} \in \mathbb{R}^{N \times (K+1)}$ is decomposed into fixed and random components:
\begin{equation}\label{eq:structural}
    \boldsymbol{\Theta} = \mathbf{V}\boldsymbol{\Gamma} + \mathbf{W}\boldsymbol{\zeta} + \mathbf{E},
\end{equation}
where $\mathbf{V}$ is the $N \times F$ fixed-effect design matrix (intercept, $\log_{10}$(\#params), deployment indicators), $\boldsymbol{\Gamma}$ is the $F \times (K+1)$ %matrix of 
fixed-effect coefficients (each column $\boldsymbol{\gamma}_k$ has covariate effects on latent dimension $k$), $\mathbf{W}$ is the random-effect design matrix for contributor group, $\boldsymbol{\zeta} \sim \mathcal{N}(\mathbf{0}, \boldsymbol{\Sigma}_\zeta)$ captures contributor-level heterogeneity, and $\mathbf{E}$ is residual latent variation with rows $\mathbf{e}_i \sim \mathcal{N}(\mathbf{0}, \boldsymbol{\Sigma}_E)$, accounting for pseudo-replication. Because regressing manifest benchmark scores on size conflates $g$, benchmark specifics, and benchmark-dependent measurement error, a key output is the \emph{scaling vector} $\boldsymbol{\beta} = (\beta_0, \beta_1, \ldots, \beta_K)$, where $\beta_k = \Gamma_{x,k}$ is the effect of log-size on latent ability $k$. This replaces the scalar ``scaling law'' with a dimension-specific profile.

% \begin{proposition}[Latent regression corrects size-effect confounding]\label{prop:atten}
%     Assume that the observed benchmark score $\bar{y}_{ik}$ is a noisy proxy for $\theta_{ik}$ with benchmark-dependent attenuation: $\bar{y}_{ik} = \lambda_k \theta_{ik} + e_{ik}$, $\mathbb{E}[e_{ik} \mid x_i] = 0$, $\mathrm{Var}(e_{ik})$ varies with $k$. Then OLS regression of $\bar{y}_{ik}$ on $x_i$ yields benchmark-dependent slope attenuation proportional to $\lambda_k$, while regression of $\theta_{ik}$ on $x_i$ (as in Eq.~\ref{eq:structural}) recovers the latent slope $\beta_k$ up to the mixed-model dependence structure. \emph{(Proof: Appendix~H, Proposition~3.1.)}
% \end{proposition}

\begin{proposition}[Latent regression corrects size effects confounded by measurement error]
\label{prop:latent_reg_advantage}
Assume the observed benchmark score $\bar{y}_{ik}$ is a noisy proxy for $\theta_{ik}$ with benchmark-dependent attenuation:
$
\bar{y}_{ik} = \lambda_k \theta_{ik} + e_{ik},\, \mathbb{E}[e_{ik}\mid x_i]=0,\, \mathrm{Var}(e_{ik}) \text{ varies with } k.
$
Then OLS regression of $\bar{y}_{ik}$ on $x_i$ yields benchmark-dependent slope attenuation proportional to $\lambda_k$, while regression of $\theta_{ik}$ on $x_i$ (as in~\ref{eq:structural}) recovers the latent slope $\beta_k$ up to the mixed-model dependence structure.
\end{proposition}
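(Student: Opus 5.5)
The plan is to write the latent model for $\theta_{ik}$ explicitly, substitute it into the measurement equation, and compute the population OLS slope directly. From the structural layer \eqref{eq:structural}, restricted to column $k$ and to the size covariate, we have
\[
\theta_{ik} = \alpha_k + \beta_k x_i + \mathbf{w}_i^\top\boldsymbol{\zeta}_{\cdot k} + e^{\theta}_{ik}.
\]
Here the random contributor effect and the latent residual have mean zero given $x_i$. This is an identifying assumption I will state explicitly: contributor effects are uncorrelated with size. The measurement assumption then gives
\[
\bar{y}_{ik} = \lambda_k\alpha_k + \lambda_k\beta_k x_i + \lambda_k\bigl(\mathbf{w}_i^\top\boldsymbol{\zeta}_{\cdot k} + e^{\theta}_{ik}\bigr) + e_{ik}.
\]

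\textbf{Step 1 (manifest slope).} The OLS probability limit is $\mathrm{Cov}(\bar{y}_{ik},x_i)/\mathrm{Var}(x_i)$. Every term except $\lambda_k\beta_k x_i$ has zero conditional mean given $x_i$, so it is uncorrelated with $x_i$. Hence $\hat\beta^{\mathrm{OLS}}_k \to \lambda_k\beta_k$: the manifest slope is the latent slope scaled by the benchmark-specific factor $\lambda_k$. This is the claimed attenuation proportional to $\lambda_k$ whenever $|\lambda_k|<1$ under the standardization $\mathrm{Var}(\theta_{ik})=1$. Because $\lambda_k$ varies with $k$, comparisons of manifest slopes across benchmarks are confounded. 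The heteroskedastic $\mathrm{Var}(e_{ik})$ does not enter the limit. It only inflates the sampling variance, through a sandwich formula I will note briefly.

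\textbf{Step 2 (latent slope).} When $\theta_{ik}$ is itself the outcome, as in \eqref{eq:structural}, the same moment computation gives slope $\beta_k$ exactly. The mixed-model GLS/ML estimator with covariance $\mathbf{W}\boldsymbol{\Sigma}_\zeta\mathbf{W}^\top+\boldsymbol{\Sigma}_E$ remains unbiased and consistent under the usual mixed-model conditions, such as a growing number of contributor groups and correct mean specification. The phrase ``up to the mixed-model dependence structure'' means that only efficiency and standard errors depend on $\boldsymbol{\Sigma}_\zeta$ and $\boldsymbol{\Sigma}_E$, not the target.

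\textbf{Main obstacle.} In practice $\theta_{ik}$ is not observed; it is estimated jointly through the bifactor IRT layer \eqref{eq:irt}. Regressing plug-in factor scores would reintroduce shrinkage-type attenuation. I would resolve this by appealing to the joint likelihood. Because the structural layer is embedded as the population distribution of $\boldsymbol{\theta}_i$, the marginal ML estimator of $\boldsymbol{\Gamma}$ targets $\beta_k$ directly, and its consistency follows from standard ML regularity conditions given identification. Identification requires fixing the scale of each latent dimension, for example a unit residual variance or a fixed loading. I would state that this fixed scale defines $\beta_k$, and that $\lambda_k$ is absorbed into the discrimination parameters $a_{j,k}$. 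This avoids the attenuation of Step 1.
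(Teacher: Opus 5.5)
Your proposal is correct and follows the paper's own argument. Like the paper, you substitute $\theta_{ik}=\alpha_k+\beta_k x_i+\text{noise}$ into the measurement equation, read off the population slope $\lambda_k\beta_k$ versus $\beta_k$, and note that correctly specified mixed effects preserve unbiasedness. Your additions make explicit what the paper's short proof leaves implicit: the assumption that contributor effects have mean zero given $x_i$, and the remark that $\beta_k$ is targeted by the joint marginal likelihood rather than by regressing plug-in factor scores.
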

\begin{proof}
By linearity, $\mathbb{E}[\bar{y}_{ik}\mid x_i]=\lambda_k\mathbb{E}[\theta_{ik}\mid x_i]$ and the population regression coefficient is $\lambda_k\beta_k$ when $\theta_{ik}=\alpha_k+\beta_k x_i+\text{noise}$, whereas regressing $\theta_{ik}$ yields $\beta_k$. Mixed effects preserve unbiasedness under correct specification by accounting for cluster dependence in the likelihood.
\end{proof}
% \paragraph{Challenge: Size vs Post-training and Deployment.}
% Raw scaling regressions on manifest scores confound model size with deployment choices (chat templating, RLHF, precision settings) and benchmark-specific measurement error. Thus, %the design matrix $\mathbf{V}$ in (\ref{eq:structural}) includes both $\log_{10}$(\#params) and deployment and tuning decisions (e.g., types, templates, merges). % \texttt{type}, \texttt{chat\_template}, \texttt{mo\_e}, \texttt{merged\_precision}). 
% Fixed effects $\boldsymbol{\Gamma}$ estimate the association of each covariate with each latent dimension \emph{simultaneously}, removing confounding at the latent level.

\subsubsection{Latent Regression Method Estimation}
 % to address challenges of infeasibility. Importantly, 
 The term $\mathbf{W}\boldsymbol{\zeta}$ in (\ref{eq:structural}) models contributor-level random effects on each latent dimension, absorbing the cluster structure induced by pseudo-replication and preventing inflated precision. %The residual contributor-specific variance $\boldsymbol{\Sigma}_\zeta$ provides a direct estimate of \emph{provenance variance}--the share of latent ability variation attributable to unobserved training practices.
% \paragraph{Challenge: Intractability of Traditional CFA  fit approaches.}
% The composite model--bifactor IRT with mixed-effects structural layer--has thousands of parameters, making standard MLE infeasible. \textbf{Solution:} \textbf{\emph{Meta-analytic item bootstrapping}} (as in Method~1). For each of $B=500$ replications, we sample a fixed number of items per benchmark, fit both the baseline bifactor and the full regression model, and aggregate fixed-effect estimates %$\{\hat{\beta}_d\}$ 
% via inverse-variance weighting. % Outputs are (i) $\{\hat{\beta}_d\}$ (size effects per latent ability), (ii) deployment effects $\{\hat{\gamma}_{d}\}$, and (iii) provenance variance $\boldsymbol{\Sigma}_u$.
% \begin{description}[leftmargin=1.5em,nosep]
%     \item[Solution:] \emph{Meta-analytic item bootstrapping} (as in Method~1). For each of $B=500$ replications, we sample a fixed number of items per benchmark, fit both the baseline bifactor and the full regression model, and aggregate fixed-effect estimates $\{\hat{\beta}_d\}$ via inverse-variance weighting.
% \end{description}
% \paragraph{Challenge: Identifiability in a complex latent landscape.}
The bifactor measurement structure coupled with fixed and random effects on $K+1$ latent dimensions creates a high-dimensional, multimodal likelihood surface.  %\textbf{Solution:} \textbf{\emph{Metropolis-Hastings Robbins-Monro (MH-RM) estimation.}} % We use the stochastic EM algorithm of \citet{cai_metropolis-hastings_2010}, which 
Thus, we fit~\eqref{eq:irt}--\eqref{eq:structural} with Metropolis-Hastings Robbins-Monro (MH-RM) \citep{cai_metropolis-hastings_2010,chalmers_extended_2015}.\footnote{MH-RM is a strong choice in high-dimensional IRT as it alternates between (i)~a MH step to sample from the conditional posterior of latent traits given current parameters, and (ii)~a RM step to update parameters along the stochastic gradient of the marginal log-likelihood.} Similar to Method 1 (\S~\ref{sec:cfa_solutions}), item set bootstrapping is used.  %MH-RM is specifically designed for high-dimensional IRT models and scales to the item-set sizes and random-effect structures required here (implemented via \texttt{mirt}; \citealt{chalmers_mirt_2012}).

\subsubsection{Slope reliability in latent space}

For each latent dimension $d \in \{0,1,\ldots,K\}$, we define the \emph{slope reliability index}:
% \begin{equation}\label{eq:rel}
    $\mathcal{R}_d \;\equiv\; \frac{\beta_d^2}{\beta_d^2 + \tau_d^2}$,
% \end{equation}
where $\tau_d^2 = \mathrm{Var}(v_{\cdot,d})$ is the random-slope variance for dimension $d$ across bootstraps (see \S~\ref{apx:boots}). High $\mathcal{R}_d$ indicates that the scaling law for latent ability $d$ is stable across contributors; low $\mathcal{R}_d$ signals context-dependent scaling (Table~\ref{tab:lreg_reliab}).

% \paragraph{Estimation and robustness.}
% We fit~\eqref{eq:m3_2pl_bifact}--\eqref{eq:theta_decomp} with MH-RM \citep{cai_metropolis-hastings_2010,chalmers_extended_2015}. To ensure conclusions are not item-idiosyncratic, we repeat estimation across $B$ item bootstraps and aggregate coefficients via inverse-variance weighting; we compare the regression model to the measurement-only bifactor using likelihood-based tests within each bootstrap (Appendix~\ref{apx:boots}). Outputs are (i) $\{\hat{\beta}_d\}$ (size effects per latent ability), (ii) deployment effects $\{\hat{\gamma}_{d}\}$, and (iii) provenance variance $\boldsymbol{\Sigma}_u$.

% \paragraph{Identifying capability-specific scaling.}
% Regressing manifest benchmark scores on size conflates $g$, benchmark specifics, and benchmark-dependent measurement error. Method~3 instead estimates scaling on $\theta_{i0}$ and $\theta_{ik}$ directly, i.e., a \emph{scaling vector} $\boldsymbol{\beta}=(\beta_0,\dots,\beta_K)$, disentangling general capability scaling from benchmark-specific scaling under the ecosystem’s dependence structure.

% \input{sections/2back_related_comb_draft}
% \input{sections/2relatedwork}
% \input{sections/3background}
% \input{sections/4methods}
\section{Experiment and Data}\label{sec:data}
% \subsection{Data}
We analyze public submissions to the Hugging Face Open LLM Leaderboard ~\cite{fourrier_open_2024},\footnote{\url{https://huggingface.co/spaces/open-llm-leaderboard/open_llm_leaderboard}} a comprehensive and diverse collection of AI benchmark data, encompassing over 4,000 unique Large Language Model (LLM) instances. This large-scale dataset provides an ideal testbed for noise decomposition: models vary across architectures, contributors, deployment types, and scales, enabling systematic quantification of noise sources. The public dataset contains model metadata, benchmark scores, and availability of item-level responses. The Open LLM Leaderboard is a large-scale initiative to transparently track, rank, and evaluate open-source LLMs.  The analyzed benchmarks are six widely used tasks with relatively objective metrics: IFEval~\cite{zhou_instruction-following_2023}, BBH~\cite{suzgun_challenging_2022}, MATH Level 5~\cite{hendrycks_measuring_2021}, GPQA~\cite{rein_gpqa_2023}, MuSR~\cite{sprague_musr_2024}, and MMLU-Pro~\cite{wang_mmlu-pro_2024}. Appendix \S~\ref{apx:data_detail} has additional data details.

The CFA and latent regression analyses use item-level responses, while the variance decomposition analyses focus on %the benchmark scores. % The primary dataset is sourced from the Open LLM Leaderboard, augmented with several other widely recognized benchmarks to ensure a broad representation of model capabilities. % Critically, our analyses utilize the raw, item-level response data for each LLM, rather than pre-aggregated summary scores, enabling a fine-grained examination of the latent structure underlying performance.
% Our analysis draws upon a comprehensive and diverse collection of AI benchmark data, encompassing over 4,000 unique Large Language Model (LLM) instances. 
% \subsubsection{Open LLM Leaderboard}
% \subsubsection{Benchmark Rankings}\label{subsec:data_vardecomp}
% For variance decomposition, the unit of analysis is 
a \emph{model--benchmark} score: each observation corresponds to one evaluated model (as represented by a specific leaderboard submission) on one benchmark, which is essential for identifying % (i) how much variance is attributable to benchmark difficulty and (ii) 
how reliably one can generalize across benchmarks rather than merely fit the leaderboard’s composite.
%We reshape the dataset to long format with one row per benchmark score:
% \[
% y_{i b} \equiv \texttt{val} \quad \text{for model submission } i \text{ and benchmark } b\in\mathcal{B}.
% \]
% \paragraph{Score normalization and comparability}
% The Open LLM Leaderboard aggregates benchmark scores by normalizing raw scores relative to a random baseline and a maximum possible score, then averaging normalized scores. 
%Our variance decomposition is conducted at the \emph{per-benchmark score} level which is essential for identifying (i) how much variance is attributable to benchmark difficulty and (ii) how reliably one can generalize across benchmarks rather than merely fit the leaderboard’s composite.

\begin{table}
\centering
\caption{Method 2 Variance Component Estimates for \eqref{eq:gstudy}-\eqref{eq:gstudy_slopes}}
\label{tab:varcomp_slope}
\resizebox{\ifdim\width>\linewidth\linewidth\else\width\fi}{!}{
\begin{tabular}{lrrr}
\toprule
\textbf{Variation Source} & \textbf{$p_S$ \eqref{eq:gstudy}} & \textbf{$p_S$ \eqref{eq:gstudy_slopes}}   &\textbf{$\text{PSI}_S$} \\
\midrule
A (Architecture) & 0.042& 0.063& 0.13 \\
B (Benchmark) & 0.431& 0.461& 0.30 \\
C (Contributor) & 0.086& 0.078& 0.17 \\
D (Deployment) & 0.019& 0.011& 0.03 \\
A$\times$B & 0.029& 0.037& 0.05 \\
A$\times$C & 0.035& 0.041& 0.06 \\
A$\times$D & 0.002& 0.003& 0.00 \\
B$\times$C & 0.037& 0.027& 0.03 \\
B$\times$D & 0.049& 0.046& 0.02 \\
C$\times$D & 0.008& 0.011& 0.01 \\
A$\times$B$\times$C & 0.023& 0.038& 0.07 \\
A$\times$B$\times$D & 0.004& 0.006& 0.01 \\
A$\times$C$\times$D & 0.080& 0.047& 0.07 \\
B$\times$C$\times$D & 0.006& 0.018& 0.05 \\
$\epsilon_{\text{resid}}$ (+ A$\times$B$\times$C$\times$D)  & 0.150& 0.113&  \\
\bottomrule
\end{tabular}}
\begin{tablenotes}
    \item \small Proportion of variance explained $p_S$ represents the total variance associated with (both slope and) intercept. The raw variances for the base ~\eqref{eq:gstudy} and slope \eqref{eq:gstudy_slopes} models can be found in Table \ref{tab:varcomp_base} and \ref{tab:varcomp_slope_apx}, respectively. Robustness checks and Bayesian posterior draws of $p_S$ ~\eqref{eq:gstudy} are in Tables \ref{tab:granularity_sensitivity} and \ref{tab:bayesfit}, respectively.
\end{tablenotes}
\end{table}

\section{Results and Discussion}
\label{sec:discussion}

\begin{figure*}[th]
    \centering
    \includegraphics[width=0.93 \linewidth]{figures/scaling_law_estimates_robust.pdf}
    \caption{Noise-controlled Scaling Laws: Log-log of Performance by Number of Parameters. Shaded regions indicate the 95\% confidence bands for the linear relationship estimated by robust regression using an M estimator with Tukey's biweight \cite{venables_modern_2002}. Prior to log transformation of the y-axis, each estimated latent score $\Theta_k$ is converted from its assumed distribution to positive values via the cumulative normal distribution function, $S_k= \mathbf{\Phi}(\Theta_k)$ before all scores are min-max scaled to the range $s \in S_k : [1\dots100]$.}
    \label{fig:scaling_law}
\end{figure*}

\begin{figure*}[th]
    \centering
    \includegraphics[width=0.97\linewidth]{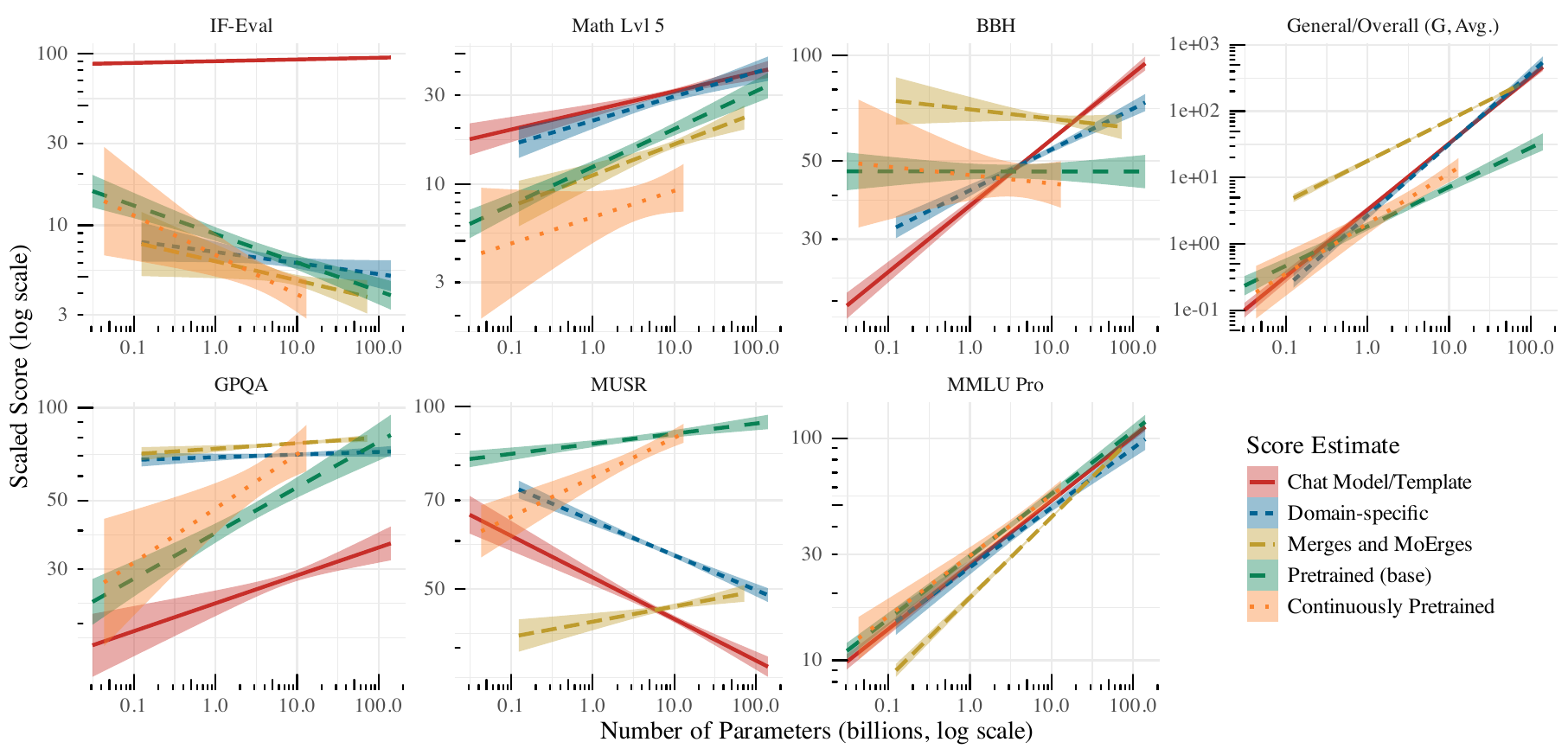}
    \caption{Noise-controlled Scaling Laws by Deployment Type: Log-log of Performance by Number of Parameters. Shaded regions indicate the 95\% confidence bands for the linear relationship between x and y estimated by robust regression using an M estimator with Tukey's biweight \cite{venables_modern_2002}. In this plot, the HF metadata categorizations of ``chat model'' and ``chat template'' are combined ($cm \vee ct$) to better highlight the effects of common SFT practices. Categories are mutually exclusive.  Prior to log transformation of the y-axis, each estimated latent score $\Theta_k$ is converted from its assumed distribution to positive values via the cumulative normal distribution function, $S_k= \mathbf{\Phi}(\Theta_k)$ before all scores are min-max scaled to the range $s \in S_k : [1\dots100]$.}
    \label{fig:scaling_law_deploy}
\end{figure*}

Benchmark ecosystems are now treated as scientific instruments, used to rank models, infer scaling laws, and justify training interventions. Our results show that, without explicit measurement modeling, these instruments mix signal with multiple layers of noise. We summarize the implications as a \emph{Latent Cartography}: we map %observed 
leaderboard topographies defined by latent ability structure, ecosystem facets of variation, %that induce pseudo-replication and systematic heterogeneity, 
LLM covariates, and residual item-level artifacts. % that violate local independence. %These findings inform actionable recommendations for benchmark designers, leaderboard operators, and researchers (see \S~\ref{apx:recs}).

\subsection{What the ecosystem is actually measuring}
\label{sec:disc_latent_landscape}

\paragraph{A size-related general factor dominates; benchmark factors are largely residual.}
Across item-resampled CFA comparisons, bifactor-family models best predict the dependence structure (see \S~\ref{apx:cfa_fit}), implying that much of the cross-benchmark covariance is captured by a single general factor $g$ with benchmark-specific residual structure. This supports a \emph{two-level} decomposition: $
\text{performance} \approx g \;+\; \{s_k\}_{k=1}^K \;+\; \text{item noise}$,
where $s_k$ captures what remains common within benchmark $k$ after partialing out $g$. $g$ is strongly correlated with number of parameters (Fig. \ref{fig:scaling_law}).

\begin{proposition}[Benchmark scores are not construct-preserving summaries]
\label{prop:disc_totals}
If the ecosystem is better approximated by a bifactor model than by per-benchmark unidimensional models, then benchmark total scores are not sufficient statistics for latent capability profiles: there exist submissions $i\neq i'$ such that
$
\mathrm{Score}_{ik}=\mathrm{Score}_{i'k}\;\; \forall k
\;\text{but}\;
(g_i,s_{i1},\dots,s_{iK})\neq(g_{i'},s_{i'1},\dots,s_{i'K}).$ %Models can be rank-equivalent on totals yet differ on $(g,s_1,\dots,s_K)$, inducing different generalization behavior across items and benchmarks.
\end{proposition}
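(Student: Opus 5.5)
The argument is a dimension count: $K$ total scores cannot determine the $K+1$ latent coordinates $(g,s_1,\dots,s_K)$. Under the bifactor measurement layer~\eqref{eq:irt} (equivalently the bifactor row of Table~\ref{tab:structural_models}), the total for submission $i$ on benchmark $k$ is $\mathrm{Score}_{ik}=\sum_{j\in k} y_{ij}$. Its conditional expectation is
\[
T_k(g,s_k)=\sum_{j\in k}\sigma\bigl(a_{j0}\,g+a_{jk}\,s_k-b_j\bigr).
\]
I will show that the map $(g,s_1,\dots,s_K)\mapsto(T_1,\dots,T_K)$, from $\mathbb{R}^{K+1}$ to $\mathbb{R}^K$, is not injective. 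I will then pass from expected totals to realized totals.

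\textbf{Step 1 (a level set for each benchmark).} Fix a benchmark $k$ with $a_{jk}\neq 0$ for some $j\in k$, which holds whenever the bifactor is not degenerate. Each summand of $T_k$ is strictly monotone in $s_k$ for fixed $g$. Hence $\partial T_k/\partial s_k\neq 0$. By the implicit function theorem, for any $g$ in a neighborhood of $g_0$ there is a unique $s_k(g)$ with $T_k(g,s_k(g))=T_k(g_0,s_{k,0})$. Doing this for all $k$ simultaneously gives a one-parameter curve $g\mapsto(g,s_1(g),\dots,s_K(g))$ along which all $K$ expected totals stay fixed. Taking two distinct points on this curve gives profiles that differ in $g$ (and generically in every $s_k$) but have the same expected totals. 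The linear CFA case is even simpler: $T_k=\nu_k+\bar\lambda^{(g)}_k g+\bar\lambda^{(f)}_k s_k$, and one can explicitly choose $s_k'=s_k-(\bar\lambda^{(g)}_k/\bar\lambda^{(f)}_k)\,\delta$ for $g'=g+\delta$.

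\textbf{Step 2 (from expected to realized totals).} The proposition asserts equality of the observed totals $\mathrm{Score}_{ik}$. I will take two submissions whose latent profiles are the two points from Step~1. Because $\mathrm{Score}_{ik}$ takes values in a finite set and every integer vector in the support has positive probability under both profiles, there is positive probability that the two realized total vectors coincide, while the latent profiles differ by construction. Equivalently, the per-benchmark total vector $(\mathrm{Score}_{i1},\dots,\mathrm{Score}_{iK})$ is a many-to-one image of a $(K+1)$-dimensional parameter, so it cannot be sufficient for that parameter. A cleaner way to say the same thing is through the likelihood: the bifactor likelihood depends on the item pattern through the weighted sums $\sum_j a_{j0}y_{ij}$ and $\sum_{j\in k}a_{jk}y_{ij}$, not through the unweighted totals unless all discriminations within a benchmark are equal. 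Two response patterns with equal totals but different weighted sums then yield different posteriors for $(g,s)$.

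\textbf{Step 3 (where the hypothesis enters).} The hypothesis that the bifactor fits better than per-benchmark unidimensional models is what guarantees both nonzero specific loadings and a nonzero general loading. If either set of loadings vanished, the model would collapse to one factor per benchmark, and the totals could be monotone summaries. I expect the main obstacle to be stating precisely what ``sufficient'' means in this statement, since the two formalizations behave differently. Under the deterministic reading, only non-injectivity is needed and Step~1 suffices. Under the statistical reading, I would first rely on the dimension argument, which works even when discriminations are equal, because $K$ statistics cannot be minimal sufficient for $K+1$ parameters. The weighted-sum argument serves as a backup.
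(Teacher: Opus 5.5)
Your Step~1 is essentially the paper's argument, made rigorous. The paper's proof, written for the appendix version (Proposition~\ref{prop:found_nonsuff}), is a one-line sketch. It constructs two latent vectors that trade strength across dimensions while preserving each benchmark's expected correctness, so totals match but the profiles differ; that is, totals do not uniquely identify the profile. Your level-set curve $g\mapsto(g,s_1(g),\dots,s_K(g))$ is exactly that construction, as is the explicit compensation $s_k'=s_k-(\bar\lambda^{(g)}_k/\bar\lambda^{(f)}_k)\,\delta$ in the linear case. It improves on the sketch by handling the fact that $g$ is shared, so a compensating shift in $s_k$ is needed in every benchmark simultaneously.

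The sufficiency discussion in Steps~2--3 contains a false claim and should be dropped. A statistic of lower dimension than the parameter can be sufficient when the parameter is not identified. Take the bifactor 2PL with discriminations constant within each benchmark ($a_{j0}=c_k$, $a_{jk}=d_k$ for $j$ in benchmark $k$). The only term of the log-likelihood coupling data and parameters is $\sum_k \mathrm{Score}_{ik}\,(c_k g+d_k s_k)$. So the totals are minimal sufficient for $(g,s_1,\dots,s_K)$, and the profile is then unidentified even from full item patterns. Hence ``$K$ statistics cannot be minimal sufficient for $K+1$ parameters'' is wrong, and so is ``many-to-one image, hence not sufficient.''

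Your weighted-sum argument is therefore not ``the same thing.'' It is what actually delivers statistical non-sufficiency, and it works only under within-benchmark heterogeneity of discriminations. The hypothesis that the bifactor fits better than per-benchmark unidimensional models does not supply that heterogeneity.

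Note also that your realized-totals argument never uses Step~1. Any two distinct profiles have positive probability of producing identical total vectors, even under per-benchmark unidimensional models. So the real content of the proposition is the non-injectivity of the expected-total map. That is the paper's (identification) reading, and your Step~1 already settles it.
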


\begin{table}
\centering
\caption{Reliabilities of Slopes in Latent Regression}\label{tab:lreg_reliab}
\resizebox{\ifdim\width>\linewidth\linewidth\else\width\fi}{!}{
\begin{tabular}{llrrrrrrr}
\toprule
Coefficient & stat & $g$ & bbh & gpqa & ifeval & mmlu & musr & math5\\
\midrule
$\log_{10}$(\#params) & $\boldsymbol{\beta}$ & 1.05 & -0.05 & 0.07 & -0.08 & 0.52 & 0.04 & 0.13\\
 & $\mathcal{R}_d$ & 0.97 & 0.05 & 0.20 & 0.39 & 0.57 & 0.07 & 0.18\\
 `Chat template'/IFT & $\boldsymbol{\gamma}$ & -0.22 & -0.03 & -0.66 & 1.82 & 0.08 & -1.03 & 0.29\\
& $\mathcal{R}_d$ & 0.86 & 0.05 & 0.44 & 0.99 & 0.19 & 0.47 & 0.86\\
Chat (RLHF, DPO, etc) & $\boldsymbol{\gamma}$ & -1.05 & 0.02 & 0.42 & -0.84 & -0.31 & 0.52 & -0.92\\
& $\mathcal{R}_d$ & 0.64 & 0.01 & 0.37 & 0.72 & 0.14 & 0.36 & 0.79\\
Domain-specific tune & $\boldsymbol{\gamma}$ & -1.16 & -0.07 & 0.20 & -0.80 & -0.46 & 0.39 & -0.43\\
 & $\mathcal{R}_d$ & 0.98 & 0.18 & 0.31 & 0.95 & 0.56 & 0.45 & 0.86\\
 Merges \& MoErges & $\boldsymbol{\gamma}$ & -0.65 & 0.09 & 0.24 & -0.79 & -0.64 & 0.41 & -0.58\\

& $\mathcal{R}_d$ & 0.89 & 0.05 & 0.35 & 0.95 & 0.65 & 0.31 & 0.80\\
Pretrained only & $\boldsymbol{\gamma}$ & -1.19 & 0.01 & 0.14 & -0.86 & -0.40 & 0.70 & -0.61\\
 & $\mathcal{R}_d$ & 0.76 & 0.00 & 0.21 & 0.91 & 0.35 & 0.80 & 0.87\\
Continuous pretrain & $\boldsymbol{\gamma}$ & -1.51 & -0.17 & 0.06 & -1.05 & -0.34 & 0.72 & -0.74\\
 & $\mathcal{R}_d$ & 0.99 & 0.32 & 0.07 & 0.86 & 0.26 & 0.77 & 0.79\\

Mixture of Experts & $\boldsymbol{\gamma}$ & -0.84 & -0.04 & 0.11 & 0.01 & -0.29 & -0.08 & 0.00\\

 & $\mathcal{R}_d$ & 0.97 & 0.10 & 0.19 & 0.00 & 0.35 & 0.27 & 0.00\\
\bottomrule
\end{tabular}}
\caption*{\small Results from bootstrapped latent regression analysis. Variables are taken directly from the Hugging Face Open LLM Leaderboard. Only $\log_{10}$(\#params) is continuous; all others are binary. $\boldsymbol{\beta}$ and $\boldsymbol{\gamma}$ represent the per-factor estimates associated with that coefficient and $\mathcal{R}_d$ represents the reliability of the slope defined by that estimate. Chat and `Chat template' are not mutually exclusive.}
\end{table}

\paragraph{Benchmark labels do not fully organize residual dependence.}
Permutation controls show that flexible SEMs
% (notably correlated bifactor) 
can retain strong fit even under randomized item-to-benchmark assignment. The implication %is not that benchmarks are useless; it 
is that benchmark membership is an incomplete index of the residual covariance structure once $g$ is removed. %Item types, 
Shared evaluation artifacts, and latent overlaps cut across benchmark labels. Consequently, benchmark ecosystems should be evaluated as \emph{measurement systems}, not as a set of independent test suites. Importantly, all latent structures tested still underestimate the strength of the interbenchmark item relationships (see also \S~\ref{apx:math}, \S~\ref{apx:cfa_fit} and Fig. \ref{fig:bench_item_misfit}). Taken together, the meta-analytic bootstrap, permutation control, and MI-based local diagnostics provide convergent evidence that (i) unidimensional and independent-benchmark assumptions are untenable, (ii) a strong general factor is real and stable across item perturbations, and (iii) benchmark labels only partially explain residual structure. %—motivating the subsequent methods that explicitly decompose ecosystem variance and connect latent structure to sources of variation beyond the benchmarks themselves.

\subsection{Actionable cartographic layers}
\label{sec:disc_noise_layers}

\paragraph{Predictable benchmark difficulty (a controllable nuisance)}
Variance decomposition shows that a large share of leaderboard score variance is attributable to the benchmark main effect (difficulty). This is ``nuisance'' variation in the sense that it is predictable from benchmark identity and can be controlled by (i) stratified reporting, (ii) difficulty-normalized scoring, or (iii) explicit modeling (our G-study). %The high generalizability of benchmark difficulty ordering ($G_B\approx 0.73$) implies that this component is stable enough to be corrected rather than averaged away.

\paragraph{Contributor variance (signal about training; noise for architecture)}
The contributor/provenance facet is consistently larger than the combined architecture and deployment main effects plus their interaction, across estimations and robustness checks. In this case, contributor variance is \emph{not} ``random noise''---it is structured variation associated with unobserved implementation factors (data mixtures, compute, tuning recipes). It is noise only relative to the question ``what does architecture explain?'' This motivates two distinct evaluation modes: \textbf{Architecture-centric evaluation:} treat provenance as a nuisance facet and report provenance-adjusted uncertainty; and \textbf{Method-centric evaluation:} treat provenance as the object of measurement and analyze what practices drive it. Mixed latent regression makes this separation explicit by estimating provenance variance \emph{in latent space}, after controlling for size and deployment.
% and redundancy (inflated reliability without information)
\paragraph{Violations of Local dependence}
Bootstrapped SEPC maps reveal persistent residual couplings between item pairs even under the best-fitting structures (see \S~\ref{apx:cfa_fit}): shared LLM behaviors unrelated to reported constructs persisted even after accounting for various latent structures, which can threaten the use of benchmarks as measurement instruments.

%This means that observed  % by freeing their covariance as a parameter and then aggregating across models and benchmarks

% : things done that improve IFEval may have an inverse relationship with MuSR. 

% This is the operational meaning of \emph{violations of local independence}: the model underpredicts specific item associations, indicating duplicated content, shared templates, or method effects. Redundancy is harmful because it increases apparent stability of benchmark totals without increasing construct coverage. A practical rule is to treat persistent large $|\mathrm{SEPC}(\Theta_{ab})|$ as a candidate for item revision/removal; doing so increases information per evaluation token.

\subsection{Scaling laws are a vector, not a scalar}
\label{sec:disc_scaling_vector}

One motivation for leaderboards is to infer scaling effects, and our results indicate that ``scaling'' is not a single ``law'' of the ecosystem but a \emph{vector of slopes} over latent abilities. From the mixed-effects covariance decomposition~\eqref{eq:gstudy_slopes}, we calculate the signal-to-noise ratio of the effects of size: %. Using ~\eqref{eq:gstudy_slopes}, we estimate that 
$\mathrm{SNR}_\beta = \frac{\beta^2}{\sum_{S} \sigma^2_{S,1}} = 1.12$ and $\mathcal{R}_\beta = 0.53$, suggesting that an average scaling law would be misleading and that scaling is context-dependent (see \S~\ref{apx:scaling_metrics}). 
% : $ \boldsymbol{\beta}=(\beta_g,\beta_{s_1},\dots,\beta_{s_K}),$ estimated most cleanly in Method~3
Using \eqref{eq:irt}-\eqref{eq:structural}, $g$ exhibits the strongest and most stable scaling with $\log$-parameters ($\mathcal{R}_g = 0.97$, Table \ref{tab:lreg_reliab}), consistent with the observation that leaderboard trends track $g$.\footnote{The $\mathcal{R}_\beta$ from Method 2 and $\mathcal{R}_d$ from Method 3 are conceptually similar but not directly exchangeable: the former describes the stability of a manifest-score scaling slope across ecosystem facets and the latter describes the stability of latent-dimension slopes after noise controls.} In contrast, several benchmark-specific factors have negligible size-slopes with deployment and provenance controls. This resolves a common ambiguity in raw-score regressions: apparent scaling of a benchmark can be mediated primarily through $g$ rather than through the specific skill the benchmark purports to isolate. Without size control, signal is buried in confounded noise (see \S~\ref{apx:size_control_implications}).

% \paragraph{A capability that does not scale is a measurement signal.}
% Two findings are especially diagnostic: (i) instruction-following (IF-Eval-specific) shows little size dependence after controls, and (ii) soft reasoning (MuSR-specific) can be mildly negative under full controls. We interpret these as ecosystem-level evidence that current optimization practices (often correlated with scale and deployment) do not uniformly improve all latent dimensions, such as explicit ``instruction-following'' or implicit ``soft reasoning''. For benchmarking, the key point is methodological: latent mixed regression can identify where leaderboard gains are driven by $g$ versus where they reflect targeted improvements (or regressions) in specific abilities.

\label{sec:latent_regression_results}

\paragraph{Disentangling Scaling Laws from Ecosystem Noise}
By regressing latent abilities on model size, we can isolate a ``denoised'' scaling relationship from confounding ecosystem factors. Figure \ref{fig:scaling_law} shows the estimated scaling laws for the general factor and specific abilities under three views: (i) the raw leaderboard scores, (ii) the basic bifactor latent scores, and (iii) the ``denoised'' latent scores from the mixed-effects regression. For the general factor, which closely tracks the overall leaderboard average, the bifactor $g$ reveals a scaling relationship that is as strong as the na\"ive leaderboard trend. After controlling for deployment choices and contributor effects, increasing model size robustly improves general cross-domain capability. The bifactor-only estimate, in contrast, shows a weaker relationship, demonstrating that failing to control for ecosystem noise attenuates the apparent effect of scale on latent ability.

\subsection{Insights into specific capabilities and benchmarks}\label{sec:specific_bench_analyses}
\paragraph{Does Knowledge Scale and Reasoning Stagnate?}
%The power of our approach becomes evident when examining the specific factors.
We gain insights into specific factors:
\begin{itemize} [left=0pt,nosep]
    \item \textbf{General Knowledge (MMLU-Pro):} The denoised scaling effect for the MMLU-specific factor remains strong, adding to the trend for the general factor. This provides rigorous, model-based evidence that suggests that gains on knowledge-intensive benchmarks are heavily driven by scale, even beyond the enhancement of general capability. %This provides rigorous, model-based evidence for the hypothesis that scaling disproportionately benefits capabilities related to knowledge recall or synthesis.
    \item \textbf{Complex Reasoning (GPQA and BBH):} In stark contrast, the denoised scaling laws for the reasoning-focused GPQA and BBH factors have trivial effect sizes%. While a slight, statistically significant positive trend remains, the effect size is trivial. This is a critical finding
    : after controlling for general capability and ecosystem factors, simply increasing model size yields diminishing returns for these complex reasoning skills.
\end{itemize}

\paragraph{Specific Benchmarks: Focus improves usable signal.}
Our approach has insights for specific benchmarks.
\begin{itemize} [left=0pt,nosep]
    \item \textbf{Heterogeneous Reasoning (BBH):} BBH is composed of a diverse set of the 23 most difficult tasks 
    % selected as the most difficult 
    from BIG-bench \citep{srivastava_beyond_2023} and fails to show a relationship with any of the post-training methods captured. % by deployment type. 
    Improvements on BBH in this ecosystem are driven by $g$, not a unique benchmark signal, \emph{challenging the practice of hill climbing on benchmarks primarily designed to be difficult}.
    \item \textbf{Clear Targets (IF-Eval and MATH)} By contrast, IF-Eval and Math Level 5 are benchmarks with more clearly defined tasks (verifiable instruction following and math problem solving, respectively), and their effects exhibit high reliability with deployment choices. Whether positive or negative, focused benchmarks give clearer signal. 
    % Figure \ref{fig:scaling_law} shows the difference in controlling for ecosystem for MATH.
\end{itemize}

\paragraph{Instruction Following and ``Soft Reasoning'': Evidence of a Cognitive Trade-off?}
Striking results emerge from IF-Eval (instruction following) and MuSR (soft reasoning). 
\begin{itemize}[left=0pt,nosep]
    \item For the \textbf{IF-Eval-specific factor}, we find no significant scaling effect attributable to model size after controls. Unique gains on this benchmark appear to be driven almost entirely by specific tuning choices not raw scale.
    \item For the \textbf{MuSR-specific factor}, we observe a significant, albeit small, \emph{negative} relationship with model size in the fully controlled model. Furthermore, several covariates associated with improved instruction-following %(e.g., \texttt{chat\_template}) 
    show a negative effect on the MuSR factor. This suggests a potential ``alignment tax'' or cognitive trade-off: the very techniques used to make models better instruction-followers may subtly impair their capacity for the kind of flexible, ``soft'' reasoning required by MuSR tasks. 
\end{itemize}
% This relationship was notably sensitive: in bootstrap replications with fewer items, the correlation between the IFEval and MuSR factors was highly unstable. This instability suggests a delicate and complex interplay between these constructs, where item selection can dramatically alter the perceived relationship. 
Interestingly, the most significant SEPC \textit{overfit} across latent structures occurred between MuSR--IF-Eval items, suggesting a possible inversion of effects of training practices. We interpret these findings as ecosystem-level evidence that current optimization practices (often correlated with scale and deployment) do not uniformly improve all latent dimensions, such as explicit ``instruction-following'' or implicit ``soft reasoning''. For benchmarking, the key point is methodological: latent mixed regression can identify where leaderboard gains are driven by $g$ versus where they reflect targeted improvements (or regressions) in specific abilities.

% \paragraph{Quantifying Provenance Variance: The Persistent Signal of Training Methodology.}
% Even after explicitly modeling item-level effects, model size, and deployment strategies, the random effect for contributor remains a significant predictor, accounting for 4\% of the residual variance in latent ability. We term this \textbf{provenance variance}. It is crucial to interpret this component correctly: from the perspective of comparing base model architectures, it is a source of noise, reflecting unobserved differences in training data, compute resources, and fine-tuning recipes that are confounded with the contributor. However, from the perspective of understanding what makes models effective, this variance is pure signal. It quantifies the substantial impact of proprietary or unstated implementation choices on model capability, highlighting that \textit{how} a model is trained remains as important as its architecture or scale. Our model allows us to disentangle these effects, providing a clearer view of architectural merits while simultaneously quantifying the magnitude of the ``secret sauce'' in model development.

\subsection{Ranking Changes under $g$ and Ceiling Sensitivity}\label{sec:rank_changes}

We compare reported leaderboard percent ranks with percent ranks derived from the precision-weighted means for $g$ \eqref{eq:irt}-\eqref{eq:structural} of posterior draws, $\boldsymbol{\widetilde{\theta}_{i0}}$ (see \S~\ref{apx:rank_agg}).
% Old long table
% \begin{table}[htbp]
% \centering
% \caption{Association between leaderboard rankings and general-factor rankings after controlling for ecosystem noise.}
% \label{tab:ranking_changes}
% \begin{tabular}{lcc}
% \toprule
% Metric & $r$ & 95\% CI \\
% \midrule
% Kendall   & 0.66 & [0.65, 0.68] \\
% Spearman  & 0.86 & [0.85, 0.86] \\
% Distance Correlation (dCor) & 0.82 & [0.81, 0.83] \\
% \bottomrule
% \end{tabular}
% \end{table}
%% Current wide table
% \begin{table}[htbp]
% \centering
% \caption{Association between leaderboard rankings and general-factor rankings after controlling for ecosystem noise.}
% \label{tab:ranking_changes}
% \resizebox{\ifdim\width>\linewidth\linewidth\else\width\fi}{!}{
% \begin{tabular}{ccc}
% \toprule
%  Spearman $\rho$ & Distance Correlation (dCor)  &Kendall $\tau$ \\
% \midrule
%  0.86 [0.85, 0.86] & 0.82 [0.81, 0.83]  &0.66 [0.65, 0.68]  \\
% \bottomrule
% \end{tabular}}
% \end{table}
Overall rankings (Spearman's $\rho = 0.86$) and their broader distribution (Distance Correlation $\mathrm{dCor} = 0.82$) are largely preserved after noise control, but with meaningful localized disruptions (Kendall's $\tau = 0.66$). In particular, only one model in the top 1\% of the original overall leaderboard remains in the top 1\% after conditioning on $g$, whereas approximately 90\% of models remain within the top decile. This is contrasted with the bottom 1\% of models: the majority (64\%) stayed in the bottom 1\%.  These findings suggest that top leaderboard positions are substantially sensitive to ecosystem noise, even when the broader ordering of models remains comparatively stable. This directly supports the central claim of the paper: na\"ive leaderboard rankings conflate underlying capability with ecosystem artifacts, with the largest practical consequences occurring at the decision-relevant margins of the ranking distribution.

% \subsection{Implications for benchmark design}
% \label{sec:disc_recommendations}
% \paragraph{Recommendation 1: report bifactor scores, not only totals.}
% Leaderboards should report (i) a general factor score $\hat{g}$ and (ii) benchmark-specific residual scores $\{\hat{s}_k\}$ (or a principled low-dimensional alternative), with uncertainty. This preserves construct structure and prevents benchmark totals from being overinterpreted as distinct skills.
% \paragraph{Recommendation: publish reliability context for rankings.}
% Rankings should be accompanied by conditional reliability: at minimum, variance shares (G-study) and a size-adjusted comparison mode. Without size control, a substantial fraction of explainable rank-order variance is size-linked, and architectural claims are easily confounded.

\section{Conclusion}
We introduced an advanced framework for AI benchmark ecosystems that systematically identifies, quantifies, and controls for measurement noise. Combining CFA, G-Theory, and latent regression, our analyses demonstrate that major sources of benchmark noise can be identified, quantified, and adjusted for. Collectively, the evidence points toward the strength of human decisions in impacting benchmark performance, whether good or bad: contributor information comprises a significant share of observed variation; instruction-tuning practices work but can also be negatively associated with ``soft reasoning''; and benchmark construction approaches distinguish usable signal. By measuring the entire latent landscape, the research community can make benchmark rankings more trustworthy and evaluation claims more credible.

\section*{Impact Statement}

A potential positive impact is more transparent and responsible interpretation of model comparisons, scaling claims, and architectural improvements, which may reduce overconfident conclusions drawn from small or unstable leaderboard differences. By providing tools to report uncertainty and disentangle general from benchmark-specific gains, this framework could support better research prioritization and evaluation standards. However, increasing the precision of benchmark analysis may also reinforce leaderboard-centric development or enable more targeted optimization for dominant latent factors while neglecting under-measured abilities such as safety or real-world robustness. Additionally, latent factors are statistical abstractions and could be misinterpreted as definitive constructs of intelligence. Overall, the work seeks to strengthen evaluation rigor, but its benefits depend on careful use and continued attention to broader societal and safety considerations beyond benchmark scores.

\paragraph{Limitations and scope}
% \label{sec:disc_limitations}
Our empirical results are based on a particular leaderboard snapshot and six benchmarks; variance proportions and factor strengths are ecosystem-dependent. The estimates are measured via available observational data and metadata and should not be interpreted causally. The metadata itself is of unknown quality, making  contributor/deployment labels noisy proxies. Further, models submitted to Open LLM Leaderboard are not representative of all LLMs. Leaderboards themselves can exhibit temporal instability, as their ecosystems change with time.
Latent structural equation model selection is not ontology: bifactor fit supports a parsimonious decomposition of covariance, but the semantics of $g$ and $s_k$ depend on the item pool and evaluation protocol. Bifactor is known to over-extract general factors and absorb local dependence on human data, but it has not been studied whether that is true for LLMs. We leave these open questions and lines of work for future studies.

\section*{Acknowledgments} We thank Mark Wilson for his feedback and support. We are also grateful for the members of Berkeley's BEAR Center, Stanford STAIR Lab, Stanford Psychometrics Lab, and reviewers for constructive feedback.

\bibliography{references}
\bibliographystyle{icml2026}

%%%%%%%%%%%%%%%%%%%%%%%%%%%%%%%%%%%%%%%%%%%%%%%%%%%%%%%%%%%%%%%%%%%%%%%%%%%%%%%
%%%%%%%%%%%%%%%%%%%%%%%%%%%%%%%%%%%%%%%%%%%%%%%%%%%%%%%%%%%%%%%%%%%%%%%%%%%%%%%
% APPENDIX
%%%%%%%%%%%%%%%%%%%%%%%%%%%%%%%%%%%%%%%%%%%%%%%%%%%%%%%%%%%%%%%%%%%%%%%%%%%%%%%
%%%%%%%%%%%%%%%%%%%%%%%%%%%%%%%%%%%%%%%%%%%%%%%%%%%%%%%%%%%%%%%%%%%%%%%%%%%%%%%
% APPENDIX
\newpage
\appendix
\onecolumn

\section{Background and Related Work}
\label{apx:related_work}
AI benchmark ecosystems now serve as de facto scientific instruments: they are used to compare models, to validate training interventions, and to infer scaling laws. Yet the dominant reporting primitive remains a \emph{sum score} (accuracy or normalized accuracy) per benchmark and then an average across benchmarks. This practice implicitly assumes (i) that each benchmark is approximately unidimensional and internally coherent, and (ii) that benchmark totals are commensurate indicators of a smaller number of underlying capabilities. When these assumptions fail, ecosystem conclusions become unstable: model rankings conflate general capability with benchmark-specific artifacts; benchmark overlap inflates perceived evidence; and scaling relationships can be driven by item-selection effects rather than transferable competence.

\subsection{Measurement critiques of leaderboards and benchmark aggregation}
Recent work has argued that leaderboard scores embed unvalidated measurement assumptions and can distort comparisons when measurement error and construct mismatch are ignored \citep{salaudeen_measurement_2025,reuel_betterbench_2024, hardy_knowledge_2026}. The unidimensionality assumption has been highlighted as particularly consequential: when violated, benchmark totals become confounded composites \citep{truong_fantastic_2025}, allowing structural misspecification to masquerade as signal \citep{casabianca_psychometrics_2025}. Our contribution is to make these critiques testable at the ecosystem level by explicitly comparing CFA/SEM structures and quantifying residual dependence. In our setting, local independence corresponds to a diagonal $\boldsymbol{E}$: conditional on $\boldsymbol{\eta}_i$, item residuals should be independent.

\begin{proposition}[Benchmark totals are not sufficient under multidimensional measurement]
\label{prop:found_nonsuff}
If item responses are generated by~\eqref{eq:cfa} with $m>1$ and nontrivial loadings on multiple dimensions, then there exist submissions $i,i'$ such that (i) they have identical benchmark total scores, yet (ii) $\boldsymbol{\eta}_i\neq \boldsymbol{\eta}_{i'}$, implying different capability profiles.
\end{proposition}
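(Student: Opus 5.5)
The plan is a dimension-counting argument on the map from latent profiles to benchmark totals. Under~\eqref{eq:cfa}, the expected total for benchmark $k$ is affine in $\boldsymbol{\eta}_i$. Write $T_{ik}=\sum_{j\in J_k} y_{ij}$. Then
\[
\mathbb{E}[T_{ik}\mid\boldsymbol{\eta}_i]=\sum_{j\in J_k}\nu_j+\Bigl(\sum_{j\in J_k}\boldsymbol{\lambda}_j\Bigr)^{\top}\boldsymbol{\eta}_i \equiv c_k+\mathbf{a}_k^{\top}\boldsymbol{\eta}_i .
\]
Stacking over benchmarks gives a $K\times m$ matrix $\mathbf{A}$ whose rows are the $\mathbf{a}_k^\top$, and the expected total vector is $\mathbf{c}+\mathbf{A}\boldsymbol{\eta}_i$. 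The statement then follows whenever $\mathbf{A}$ has a nontrivial kernel. Pick $\boldsymbol{\eta}_{i'}=\boldsymbol{\eta}_i+\mathbf{v}$ with $\mathbf{v}\in\ker\mathbf{A}\setminus\{0\}$; the two profiles then have identical expected totals for every $k$ but are different vectors.

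The first step is to show $\ker\mathbf{A}\neq\{0\}$ under the hypotheses. In the multidimensional structures of Table~\ref{tab:structural_models} this is immediate whenever $m>K$. That covers the bifactor and correlated bifactor, where $m=K+1$, and so also Proposition~\ref{prop:disc_totals}. Rank--nullity then gives $\dim\ker\mathbf{A}\ge m-K\ge 1$. For a concrete witness in the bifactor case, I would take $\mathbf{v}=(\delta,\,-\delta\,\bar a_{g1}/\bar a_{s1},\dots,-\delta\,\bar a_{gK}/\bar a_{sK})$. Here $\bar a_{gk}$ and $\bar a_{sk}$ are the summed general and specific loadings of benchmark $k$, nonzero by the nontrivial-loadings assumption. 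This trades $g$ against each specific factor and leaves every total unchanged.

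When $m\le K$, "nontrivial loadings on multiple dimensions" is not enough by itself to force a kernel. I would therefore make the operative hypothesis explicit: either (i) the number of latent dimensions exceeds the number of reported totals, or (ii) the summed loading vectors are linearly dependent. Under either, the construction above goes through. I would note that (i) is exactly the regime the paper argues is empirically preferred.

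The main obstacle is the passage from expected totals to the literal claim of \emph{identical observed} totals, because $T_{ik}$ is random (and discrete in the binary latent-response version). I would handle this in one of two ways. The first is to read "submissions" at the population level, where a submission is identified with its response distribution, so equal expected totals suffice. The second is to argue on realizations: for any fixed realized total vector $\mathbf{t}$, the conditional likelihood $\Pr(\mathbf{T}=\mathbf{t}\mid\boldsymbol{\eta})$ is positive for every $\boldsymbol{\eta}$ (Gaussian/probit errors). Hence two submissions with distinct $\boldsymbol{\eta}_i\neq\boldsymbol{\eta}_{i'}$ along a kernel direction can realize the same $\mathbf{t}$ with positive probability, and the totals alone cannot distinguish them. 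Positive probability of coincidence, together with a continuum of kernel-related profiles, yields existence, which is all the statement claims.
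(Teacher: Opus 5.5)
Your argument is correct, and at its core it is the paper's own idea made precise. The paper's one-line proof picks items with non-collinear $\boldsymbol{\lambda}_j$ and ``swaps strength across dimensions while preserving marginal correctness probabilities on the benchmark's mixture of items,'' which amounts to moving $\boldsymbol{\eta}$ along a level set of the profile-to-totals map. Your linear-algebra version adds what that sketch glosses over. Equalizing \emph{one} benchmark's total only needs that benchmark's items to span two or more dimensions, but the statement requires all $K$ totals to coincide at once, and that is exactly the condition $\ker\mathbf{A}\neq\{0\}$. Your caution about $m\le K$ is warranted. In the correlated-factors structure of Table~\ref{tab:structural_models}, $m=K$ and items in different benchmarks load on different factors, so the paper's non-collinearity hypothesis holds. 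Yet there $\mathbf{A}$ is diagonal with nonzero summed loadings, so the expected totals determine $\boldsymbol{\eta}$ uniquely. The proposition therefore genuinely needs your added condition: either $m>K$, as for the bifactor and correlated bifactor, or linearly dependent summed loading vectors. Your explicit bifactor witness $\mathbf{v}$ checks out.

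Two caveats concern your last paragraph, where each route is valid only under one of the two response models.

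\begin{itemize}
\item \textbf{The affine formula is model-specific.} The formula for $\mathbb{E}[T_{ik}\mid\boldsymbol{\eta}_i]$, and with it the kernel shift, holds only in the linear model~\eqref{eq:cfa}. In the binary latent-response version the expected total is a sum of normal-cdf terms, so $\boldsymbol{\eta}+\mathbf{v}$ no longer preserves it in general. You would need a level-set argument instead. For the bifactor with nonzero same-signed specific loadings, each expected total is continuous and strictly monotone in $s_k$ with range $(0,|J_k|)$, so for any $g'\neq g$ the intermediate value theorem yields matching $s'_k$. In general, no continuous map $\mathbb{R}^m\to\mathbb{R}^K$ with $m>K$ is injective.
\item \textbf{The realization-level route proves too much.} With discrete totals, $\Pr(\mathbf{T}=\mathbf{t}\mid\boldsymbol{\eta})>0$ holds for \emph{every} $\boldsymbol{\eta}$. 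Any two distinct profiles, even with $m=1$, can then produce identical totals, so the multidimensionality hypothesis is never used. Under the continuous linear model~\eqref{eq:cfa} itself, totals are continuous and exact coincidence has probability zero, so that route does not apply there.
\end{itemize}

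The population-level reading, under the linear model, is the one that carries the content of the proposition.
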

\begin{proof}
Let two items load on different factors (non-collinear $\boldsymbol{\lambda}_j$). Construct $\boldsymbol{\eta}_i,\boldsymbol{\eta}_{i'}$ that swap strength across dimensions while preserving marginal correctness probabilities on the benchmark's mixture of items; totals can match while latent vectors differ. Thus totals do not uniquely identify $\boldsymbol{\eta}$.
\end{proof}

% \paragraph{Critiques of Aggregation and Unidimensionality.} The measurement assumptions underlying benchmark aggregations have received growing scrutiny. \citeauthor{salaudeen_measurement_2025} (\citeyear{salaudeen_measurement_2025}) demonstrate that ignoring measurement error systematically distorts performance comparisons, while \citeauthor{reuel_betterbench_2024} (\citeyear{reuel_betterbench_2024}) show how aggregating non-commensurate items produces misleading composites. The implicit assumption of unidimensionality has been a particular focus; \citeauthor{truong_fantastic_2025} (\citeyear{truong_fantastic_2025}) argue that its violation renders single summary scores uninterpretable. Our work provides the first large-scale, confirmatory test of these assumptions across an entire benchmark ecosystem.

\subsection{Psychometric methods for dataset and benchmark evaluation}
IRT has been used to estimate item difficulty/discrimination, detect saturated datasets, and improve leaderboard rankings \citep{lalor_understanding_2018,vania_comparing_2021,rodriguez_evaluation_2021}. Adaptive testing has been proposed to reduce evaluation cost \citep{maia_polo_tinybenchmarks_2024}. These efforts demonstrate the value of item-level modeling but typically assume unidimensionality or focus on within-benchmark properties. Our focus is complementary: we test \emph{ecosystem latent structure} (e.g., bifactor vs.\ correlated factors) and use SEM diagnostics to localize redundancy and method effects (See \S \ref{sec:mi_local_dependence}) \citep{muthen_goodness_1993,saris_testing_2009}.

A key limitation of prior exploratory work (e.g., PCA on LLM response matrices) is that it identifies dominant directions of variance without specifying \emph{which} constructs are being measured. In contrast, Confirmatory Factor Analysis (CFA) and Structural Equation Modeling (SEM) are hypothesis-driven: they predict the \emph{dependence structure} among item responses and yield interpretable latent constructs, with explicit constraints corresponding to competing scientific theories of capability organization.

\begin{remark}[CFA is not PCA]\label{rem:cfa_pca}
Unlike PCA, CFA does not merely identify dominant directions of variance. It tests falsifiable hypotheses about dependence structure by encoding benchmark theories as constraints, modeling measurement error explicitly, and testing whether residual covariance remains after conditioning on the latent construct. This is precisely the distinction needed to evaluate whether benchmark composites are valid measures.
\end{remark}

\paragraph{Identification of latent constructs in LLM benchmarking}
Several works attempt to infer latent constructs but can produce unstable or poorly aligned factors when i.i.d.\ assumptions and ecosystem structure are ignored \citep{kipnis_textttmetabench_2024,burnell_revealing_2023}. Misalignment between inferred constructs and benchmark intent has also been observed, e.g., \citeauthor{polo_sloth_2025} find that a construct of general knowledge is ``needed'' 8x more for Winogrande (commonsense pronoun resolution, \cite{sakaguchi_winogrande_2019}) than it is for MMLU-Pro (general knowledge).  Our approach addresses these failure modes by (i) testing explicit SEM hypotheses with robustness controls, and (ii) embedding latent measurement within mixed-effects models to correct pseudo-replication.

\subsection{Noise sources and generalizability in benchmark ecosystems}
Structured noise in benchmarks has been discussed, but the relative magnitudes of ecosystem noise sources are rarely quantified. Generalizability theory provides a principled decomposition of score variance into crossed facets and their interactions for AI evaluation \citep{brennan_generalizability_2001,cronbach_my_2004,hardy_all_2025}. We adapt this to benchmark ecosystems to estimate conditional ranking reliability under different generalization decisions (e.g., ranking architectures while treating contributors/deployments as nuisance).

\subsection{Scaling laws under confounding}
Scaling laws are typically studied under controlled training runs \citep{kaplan_scaling_2020,hoffmann_training_2022,schaeffer_pretraining_2025,isik_scaling_2025,sun_breaking_2026}. In open ecosystems, size is confounded with deployment and provenance (training recipes, data mixtures, compute budgets). Our contribution is methodological: we introduce reliability metrics for scaling slopes in a fully crossed mixed-effects setting and, crucially, estimate \emph{capability-specific} scaling in latent space via mixed latent regression \citep{de_boeck_explanatory_2011,chalmers_extended_2015}. 

\section{Dataset information}\label{apx:data_detail}
The Hugging Face Open LLM Leaderboard \cite{fourrier_open_2024} is a public, large-scale initiative to transparently track, rank, and evaluate open-source LLMs, with 87,939,588 unique response observations. The version we use includes LLM submissions recorded on or before 2025-03-13 $(N=$4,287), according to their submission date. It employs the Eleuther AI Language Model Evaluation Harness to automate standardized scoring across a suite of benchmarks, providing a reproducible and continuously updated snapshot of the field. We employ all six prominent benchmarks in this leaderboard representing a range of cognitive tasks:
\begin{itemize}[leftmargin=*,nosep]
    \item \textbf{MMLU-Pro}~\cite{wang_mmlu-pro_2024} : A challenging extension of the Massive Multitask Language Understanding benchmark ~\cite{hendrycks_measuring_2020}, designed to measure general knowledge and problem-solving skills across a vast array of subjects.
    \item \textbf{IF-Eval} ~\cite{zhou_instruction-following_2023}: An instruction-following benchmark that assesses a model's ability to adhere to complex and nuanced user directives.
    \item \textbf{BBH} ~\cite{suzgun_challenging_2022}: The Big-Bench Hard benchmark, a collection of tasks identified as being particularly difficult for contemporary LLMs, focusing on multi-step and advanced reasoning.
    \item \textbf{MuSR}~\cite{sprague_musr_2024} : The Multi-Step Soft Reasoning benchmark, which evaluates a model's capacity for reasoning that requires integrating multiple pieces of information without strict logical formality.
    \item \textbf{GPQA} ~\cite{rein_gpqa_2023}: A graduate-level question-answering dataset composed of difficult questions written by domain experts, designed to probe deep, specialized knowledge.
    \item \textbf{Open LLM Math Level 5}/``MATH'' ~\cite{hendrycks_measuring_2021}: A benchmark testing mathematical problem-solving ability, aligned with high school mathematics curricula (abbreviated in some figures and tables as `openllm').
\end{itemize}
Counts of unique items/tasks and LLM models per benchmark are in Table \ref{tab:bench_counts}. 
\paragraph{Leaderboard Scores} The Open LLM Leaderboard aggregates benchmark scores, used in Method 2, by normalizing raw scores relative to a random baseline and a maximum possible score, then averaging normalized scores. Scores are the leaderboard-reported task metrics (e.g., accuracy, normalized accuracy, exact match), expressed on the leaderboard scale. 
\paragraph{Data Exclusion} We exclude from our analyses in Methods 2 and 3 the six observations with no positive model size (\# params$\le0$), leaving 4,281 total LLMs. For item set bootstraps in DWLS estimation method of Method 1, items with missingness are not included in the item set. For item set bootstraps in MH-RM estimation Methods 1 and 3, item-LLM-level items are not included if they have no variance across all LLMs (i.e., all  LLM responses are equal; for item j, $\sigma^2(x_j)=0$).

\begin{table}[!h]
\centering
\caption{Per-benchmark item and model counts after dropping missing values}\label{tab:bench_counts}
\resizebox{\ifdim\width>\linewidth\linewidth\else\width\fi}{!}{
\begin{tabular}{lrrrr}
\toprule
Benchmark & N(items/tasks) & N(models tested) & Avg. item score & SD item score \\
\midrule
BBH & 5758 & 4239 & 0.491 & 0.500\\
GPQA & 1192 & 4237 & 0.301 & 0.459\\
IF-Eval & 531 & 4240 & 0.418 & 0.493\\
MMLU-Pro & 11864 & 4240 & 0.340 & 0.474\\
MuSR & 712 & 4239 & 0.428 & 0.495\\
Open LLM Math Level 5 & 686 & 4239 & 0.124 & 0.329\\
\bottomrule
\end{tabular}}
\end{table}

\section{Structural Model Relationships}
\label{apx:models}

To determine the latent dimensional structure of the AI benchmark ecosystem, we employ a meta-analytic hypothesis-testing framework using Confirmatory Factor Analysis (CFA). This approach allows us to formally specify and compare a series of competing theoretical models about how different capabilities relate to one another. Given the vast number of items, we use a random-item subsampling procedure to ensure our findings are robust and not artifacts of a specific item selection.

\subsection{The Confirmatory Factor Analysis Model}

CFA is a measurement model within the broader SEM framework that tests hypotheses about the relationships between observed variables (item responses) and unobserved latent variables (factors or constructs). The general CFA model links a $p \times 1$ vector of observed item responses $\mathbf{y}_i$ for a given LLM $i$ to an $m \times 1$ vector of its latent factor scores $\boldsymbol{\eta}_i$ via \eqref{eq:cfa}:
\begin{equation*}
    \mathbf{y}_i = \boldsymbol{\nu} + \boldsymbol{\Lambda} \boldsymbol{\eta}_i + \boldsymbol{\epsilon}_i
\end{equation*}
where $\boldsymbol{\nu}$ is a vector of item intercepts, $\boldsymbol{\Lambda}$ is a $p \times m$ matrix of factor loadings specifying the relationship between items and factors, and $\boldsymbol{\epsilon}_i$ is a $p \times 1$ vector of item-specific residuals. The model-implied covariance matrix of the observed variables, $\boldsymbol{\Sigma}(\boldsymbol{\theta})$, is the target of prediction as shown in Sec. \ref{sec:cfa_eqs}:
\begin{equation}
    \boldsymbol{\Sigma}(\boldsymbol{\theta}) = \boldsymbol{\Lambda} \boldsymbol{\Psi} \boldsymbol{\Lambda}^\top + \boldsymbol{\Theta}
    \label{eq:cfa_cov}
\end{equation}
Here, $\boldsymbol{\Psi} = \text{Cov}(\boldsymbol{\eta})$ is the $m \times m$ covariance matrix of the latent factors, and $\boldsymbol{\Theta} = \text{Cov}(\boldsymbol{\epsilon})$ is the $p \times p$ (typically diagonal) covariance matrix of the residuals.\footnote{To support readers navigating other literature, for the remainder of this appendix, we will use the traditional notation: $\boldsymbol{\Theta}=\mathrm{Cov}(\boldsymbol{\epsilon})$.} The parameters $\boldsymbol{\theta} = \{\boldsymbol{\Lambda}, \boldsymbol{\Psi}, \boldsymbol{\Theta}\}$ are estimated to minimize the discrepancy between $\boldsymbol{\Sigma}(\boldsymbol{\theta})$ and the observed sample covariance matrix $\mathbf{S}$. 

For our binary item response data, we use two estimation algorithms to increase the robustness of the findings: one parameterization from the Factor Analysis tradition and another from the Item Response Theory tradition. In the dichotomous cases shown in this study, these parameterization were proved to have mathematical equivalence by \cite{takane_relationship_1987}.  For the former, we use the tetrachoric correlation matrix as $\mathbf{S}$ and the Diagonally Weighted Least Squares (DWLS) estimator \citep{rosseel_lavaan_2012}. For the latter, we use a Metropolis-Hastings Robbins-Monro algorithm based on full information likelihood \citep{chalmers_mirt_2012}. 

% \subsection{A Hierarchy of Competing Latent Structures}

% We test six increasingly complex models, depicted in Figure \ref{fig:cfa_diagram} and detailed in Table \ref{tab:structural_models}. Each represents a distinct hypothesis about the organization of LLM abilities. These range from a simple \textbf{Unidimensional} model, where all variance is explained by a single general factor ($g$), to a fully independent \textbf{Multi-unidimensional} model where each benchmark measures an orthogonal ability. More plausible are \textbf{Correlated Factors} and \textbf{Hierarchical} models, which allow abilities to be related. The central comparison is against two bifactor structures. The \textbf{Bifactor} model posits that item variance is partitioned between a general factor ($g$) and a set of orthogonal specific factors ($s_k$), one for each benchmark. This directly tests if performance is driven by a common ability plus benchmark-specific skills. The \textbf{Correlated Bifactor} model relaxes the orthogonality constraint, allowing specific factors to correlate, testing for residual covariance among benchmarks after accounting for the general factor.

\paragraph{Separating “structural signal” from flexibility with statistical comparisons and randomized permutation controls}
More flexible structural equation models can fit better even when benchmark assignment contains no information. In addition to the tests described next, we resolve the statistical question of these models by analyzing traditional covariance prediction measures, likelihood information criteria, out-of-sample prediction (which in CFA and structural equation modeling is unusual due to the statistical objective), and out-of-sample latent score reliabilities. A bootstrap iteration-wise average percent rank overview of these is in Table \ref{tab:cfa_fitstats_wide}. These far exceed typical reporting requirements, and we report all of these to demonstrate that, while imperfect and prone to overfit, the bifactor model is the most statistically justified latent structure. To quantify further the extent to which “benchmark structure” drives fit improvements, we add a \emph{within-replication permutation} condition. For each bootstrap replication $b$, we additionally construct a randomized mapping $\pi^{(b)}:\tilde{\mathcal{J}}^{(b)}\to\{1,\dots,K\}$ that preserves the per-benchmark counts $\{r_k\}$ but shuffles which items are labeled as belonging to which benchmark. We then refit each model under the permuted mapping, with randomization acting as a ``treatment''. Let $T(\cdot)$ be any fit statistic (e.g., scaled RMSEA). For a given structure $s$ and replication $b$, define $\Delta T_{s}^{(b)} \;=\; T\!\left(s\mid \pi^{(b)}\right) \;-\; T\!\left(s\mid \mathrm{id}\right)$,
where $\mathrm{id}$ denotes the true benchmark assignment. Large positive $\Delta T$ indicates that the true assignment carries structural information exploited by $s$ beyond generic flexibility.

\subsection{Detailed Overview of Competing Structural Models}
\label{apx:models_detailed}
% \paragraph{Candidate structural hypotheses}
We test six increasingly complex models, depicted in Figure \ref{fig:cfa_diagram} and detailed in Table \ref{tab:structural_models}. Each represents a distinct hypothesis about the organization of LLM abilities (Table~\ref{tab:structural_models}), spanning dominant hypotheses implicit in current benchmark practice. %:
% (i) a single general factor ($g$), (ii) independent benchmark factors, (iii) correlated benchmark factors, (iv) a second-order (hierarchical) model, (v) an orthogonal bifactor model, and (vi) a correlated bifactor model. 
% \begin{enumerate}
%     \item a simple Unidimensional model (Apx \ref{apx:gfact}), where all variance is explained by a single general factor ($g$), 
%     \item a fully independent Multi-unidimensional model  (Apx \ref{apx:indepfact}) where each benchmark measures an orthogonal ability 
%     \item  Correlated Factors  (Apx \ref{apx:corrfact})  
%     \item Hierarchical  (Apx \ref{apx:hier2ord}) models, which allow abilities to be related.
% \end{enumerate}
% A central comparison is against two bifactor structures. The Bifactor  (Apx \ref{apx:bifact}) model posits that item variance is partitioned between a general factor ($g$) and a set of orthogonal specific factors ($s_k$), one for each benchmark. This directly tests if performance is driven by a common ability plus benchmark-specific skills. The Correlated Bifactor  (Apx \ref{apx:corrbifact}) model relaxes the orthogonality constraint, allowing specific factors to correlate, testing for residual covariance among benchmarks after accounting for the general factor.
Each model is a constraint set on $(\boldsymbol{\Lambda},\boldsymbol{\Phi},\boldsymbol{\Theta})$; importantly, models differ not only in parameter count but in whether benchmark identity is treated as a substantive construct, a residual grouping, or an artifact after partialing out $g$. Critically, these models represent different strategies for partitioning measurement variance into signal (latent factors) and noise (error terms $\boldsymbol{\Theta}$).
Below are conceptual descriptions of the six structural models evaluated in this study (see Table \ref{tab:structural_models} for formal specifications). Each model represents a distinct hypothesis about the nature of signal and noise in the benchmark ecosystem.
\subsubsection{Model 1: Multi-unidimensional Independent Benchmark Factors Model}\label{apx:indepfact}
This model reflects the naive assumption of many leaderboards. It hypothesizes that each benchmark measures a completely distinct latent ability, and these abilities are uncorrelated. All cross-benchmark correlation is assumed to be zero. This model (see (a) Figure \ref{fig:cfa_diagram}%, Code \ref{code:lavaan_indep}
) assumes that each benchmark $k$ measures a distinct latent factor $f_k$ that is uncorrelated with all other benchmark factors. It serves as a baseline for assessing the presence of inter-benchmark correlations.
\begin{equation}
    y_{ikj} = \lambda_{kj} f_{ik} + \epsilon_{ikj}, \quad \text{with} \quad
% \end{equation}
% \begin{equation}
    \text{Cov}(f_{ik}, f_{ik'}) = 0 \text{ for } k \neq k'
\end{equation}
This represents a modular view of AI capabilities, where proficiency in one domain (e.g., math) is entirely independent of proficiency in another (e.g., instruction following).

\subsubsection{Model 2: Unidimensional, Single General Factor Model}\label{apx:gfact}
The most parsimonious model. It posits that all systematic shared variance among all items across all benchmarks is explained by a single, monolithic latent ability. Any remaining variance is treated as item-specific measurement error (noise). This model (see (b) Figure \ref{fig:cfa_diagram}%, Code \ref{code:lavaan_genfact}
) tests the hypothesis that a single latent factor, $g$, accounts for all non-error covariance among all items across all benchmarks. It represents the most parsimonious explanation of performance.
\begin{equation}
    y_{ikj} = \lambda_{kj} g_i + \epsilon_{ikj}
\end{equation}
where $\lambda_{kj} \in \mathbb{R}$ is the loading of item $j$ from benchmark $k$ on the general factor $g_i \sim \mathcal{N}(0,1)$, and $\epsilon_{ikj} \sim \mathcal{N}(0, \theta_{kj})$ is the item-specific residual variance. This model posits that all benchmarks are, in effect, measuring the same underlying construct.

\subsubsection{Model 3: Hierarchical Second-Order Factor Model}\label{apx:hier2ord}
This model introduces a more structured explanation for the correlations among factors. It proposes that a higher-order general factor ($g$) gives rise to the first-order benchmark factors. Here, $g$ does not influence items directly but acts through the specific benchmark abilities. This model (see (c) Figure \ref{fig:cfa_diagram}%, Code \ref{code:lavaan_hier}
) proposes a hierarchical structure where a higher-order general factor, $g$, explains the correlations among the first-order benchmark factors from Model 3.
\begin{align}
    y_{ikj} &= \lambda_{kj} f_{ik} + \epsilon_{ikj} \\
    f_{ik} &= \gamma_k g_i + \delta_{ik}
\end{align}
Here, the first-order factor $f_{ik}$ is regressed on the second-order factor $g_i$, with $\gamma_k$ being the loading and $\delta_{ik}$ the first-order factor disturbance. This structure implies that the general factor's influence on items is indirect, mediated entirely through the benchmark-specific factors.

\subsubsection{Model 4: Multidimensional Correlated Benchmark Factors Model}\label{apx:corrfact}
A more realistic version of the independent factors model. It posits distinct latent abilities for each benchmark but allows these abilities to be freely correlated. For example, it can model the idea that math ability and reasoning ability are separate but related constructs. This is a standard CFA model (see (d) Figure \ref{fig:cfa_diagram}%, Code \ref{code:lavaan_corrfact}
) that relaxes the strict independence assumption of Model 2. Each item loads onto its designated benchmark factor, but the benchmark factors are allowed to covary.
\begin{equation}
    y_{ikj} = \lambda_{kj} f_{ik} + \epsilon_{ikj}, \quad \text{with} \quad \text{Cov}(\mathbf{f}_i) = \mathbf{\Psi}
\end{equation}
where $\mathbf{f}_i$ is the vector of latent factor scores for LLM $i$, and $\mathbf{\Psi}$ is a symmetric correlation matrix with 1s on the diagonal. The off-diagonal elements $\phi_{kk'}$ quantify the degree of relationship between distinct benchmark abilities.

\subsubsection{Model 5: The Bifactor Model}\label{apx:bifact}
A fundamentally different structure. It posits that every item's response is influenced by two independent sources of signal: a general factor ($g$) that affects all items, and a benchmark-specific factor that affects only items within that benchmark. The key assumption is that $g$ and the specific factors are uncorrelated (orthogonal). This model is ideal for testing if benchmark-specific variance is merely a residual grouping effect after accounting for general ability. The bifactor model  (see (e) Figure \ref{fig:cfa_diagram}, Code \ref{code:lavaan_bifact}) \citep{dunn_place_2020} provides an alternative explanation for shared variance. It posits that each item's variance is simultaneously explained by a general factor $g$ (common to all items) and a benchmark-specific factor $s_k$ (common only to items within that benchmark). The factors are constrained to be orthogonal.
\begin{equation}
    y_{ikj} = \lambda^{(g)}_{kj} g_i + \lambda^{(s)}_{kj} s_{ik} + \epsilon_{ikj}
\end{equation}
where $\text{Cov}(g_i, s_{ik}) = 0$ and $\text{Cov}(s_{ik}, s_{ik'}) = 0$ for $k \neq k'$. This model cleanly partitions item variance into a general component and a specific component, allowing for a direct assessment of how much unique information each benchmark provides after accounting for general ability. While it is not likely that the bifactor model truly represents human latent factor organization (as capabilities are not likely orthogonal), this structure is not yet tested on the latent capability organization for AI models. The code for the model specifications for the bifactor model, as represented in Methods 1 and 3, are found in Appendix \ref{apx:code}.

\subsubsection{Model 6: Multidimensional/Correlated Bifactor Model}\label{apx:corrbifact}

A relaxation of the orthogonal bifactor model. It maintains the direct influence of $g$ on all items but allows the specific factors to be correlated with each other. This tests whether there is systematic residual covariance among benchmarks even after partialing out the main general factor.

This model (see (f) Figure \ref{fig:cfa_diagram}%, Code \ref{code:lavaan_corrbifact}
) is a relaxation of the bifactor model, allowing the specific factors $s_k$ to correlate with each other while remaining orthogonal to the general factor $g$.
\begin{equation}
    y_{ikj} = \lambda^{(g)}_{kj} g_i + \lambda^{(s)}_{kj} s_{ik} + \epsilon_{ikj}, \quad \text{with} \quad
    % \quad \text{Cov}(g_i, \mathbf{s}_i) = \mathbf{0} \text{ and } \text{Cov}(\mathbf{s}_i) = \mathbf{\Psi}
% \end{equation}
% \begin{equation}
\text{Cov}(g_i, \mathbf{s}_i) = \mathbf{0} \text{ and } \text{Cov}(\mathbf{s}_i) = \mathbf{\Psi}
\end{equation}

where $\mathbf{\Psi}$ is the correlation matrix of the specific factors. This model is useful for testing whether any residual covariance between benchmark domains remains after partialing out a broad general factor, which may be attributable to shared methods or sub-domain content.

In actuality, the more complicated model may be the best representation. If bootstrapped item set sizes were increased (e.g., 500 items per benchmark sampled), the more complex and expressive model would likely fit better. However, this does not mean that this is a more likely ``structural truth''. This would be a statistical artifact of a high-noise multidimensional system being explained by a higher capacity model. A key idea in comparing latent structural models is harmonizing the fit with explainable parsimony. 

\subsection{Traditional Model Evaluation and Comprehensive Comparison Framework}\label{apx:comparison_fit_stats}
We evaluate and compare the six models using a combination of global fit indices, formal hypothesis tests, and parameter-level diagnostics to build a convergent body of evidence.

% \subsubsection{Global Fit Indices}
% To assess how well a model's implied covariance matrix $\boldsymbol{\Sigma}(\boldsymbol{\theta})$ reproduces the observed sample covariance matrix $\mathbf{S}$, we use a standard set of fit indices.

\subsubsection{Traditional Measures of Fit}
In CFA, traditional measures of fit assess how well a model's implied covariance matrix $\boldsymbol{\Sigma}(\boldsymbol{\theta})$ reproduces the observed sample covariance matrix $\mathbf{S}$, we use a standard set of fit indices in addition to our more robust metrics and methods. 

\begin{itemize}[leftmargin=*,nosep]
    \item \textbf{Comparative Fit Index (CFI)} and \textbf{Tucker-Lewis Index (TLI)}: These incremental fit indices measure the improvement in fit of a target model over a null (independence) model. Values $\ge .95$ are typically considered to indicate good fit \citep{medsker_review_1994}. 
    \item \textbf{Standardized Root Mean Square Residual (SRMR)}: This index represents the average standardized difference between the observed and predicted correlations. Values $\le .08$ are considered indicative of good fit.
    \item \textbf{Root Mean Square Error of Approximation (RMSEA)}: This index estimates the model misfit per degree of freedom, with smaller values indicating better fit. Values $\le .06$ have traditionally been used to classify a close fit to the data \citep{medsker_review_1994, kenny_performance_2015}. 
\end{itemize}

\paragraph{Weaknesses of Traditional Measures}
Traditional measures of fit are expansive, but share weakeness. For example, RMSEA, while arguably one of the best options for fit, is sensitive to factors such as sample size, number of indicators, and data characteristics \cite{kenny_performance_2015,groskurth_why_2024}. Fixed cutoff values like $0.06$ derived from simulation studies may not be universally applicable across diverse real-world settings \citep{mcneish_dynamic_2023}. With $n \approx 4000, \; p \ge 30$, as in the present study, RMSEA variants can become sensitive to trivially small covariance residuals. Traditional cutoffs were calibrated on $n \in [200,500]$ with $p < 30$ (Remark \ref{remark:finitesamp}). At our scale, minor misspecification can inflate RMSEA while leaving substantive conclusions intact.

Thankfully, our model choice does not rest on covariance RMSEA; we establish practical and statistical fit with our robustness checks, such as out-of-sample AUC and MAE (Tables \ref{tab:cfa_fitstats_medians} and \ref{tab:cfa_fitstats_wide}), Lagrangian Multiplier tests (\S\ref{apx:math}), permutation controls, \S\ref{apx:cfa_fit}, etc.
% \S ~\ref{apx:math}, ~\ref{apx:cfa_fit}  
% Apdx C, E, Fig. 5-7, Tab. 5, etc.). 
Table \ref{tab:cfa_fitstats_wide} shows bifactor-family models dominate across traditional criteria (e.g., CFI, SRMR, BIC, log-likelihood) in within-bootstrap percent-rank comparisons estimated with DWLS. Tab \ref{tab:lrt_boots} confirms that the bifactor significantly improves over every simpler structure in 100\% of bootstraps via LRTs.

% \subsection{Latent Structure Fit Statistics}

\begin{table*}[h]
\centering
\caption{\textbf{Latent Structure Measurement Comparison Overview:}  Iteration-wise Median value across Bootstraps by Metric}
% \label{tab:cfa_fitstats_wide}
\label{tab:cfa_fitstats_medians}
\resizebox{\ifdim\width>\linewidth\linewidth\else\width\fi}{!}{
\begin{tabular}{lrrrrrrrrrrrrr}
\toprule
\multicolumn{1}{c}{ } & \multicolumn{4}{c}{Fit Indices / Covariance Fit}  & \multicolumn{2}{c}{Test Accuracy} & \multicolumn{7}{c}{Latent Factor Test Set Reliability Estimates} \\
\cmidrule(l{3pt}r{3pt}){2-5} \cmidrule(l{3pt}r{3pt}){6-7} \cmidrule(l{3pt}r{3pt}){8-14} %\cmidrule(l{3pt}r{3pt}){11-17}
Structure & RMSEA & CFI & TLI & SRMR & AUC & MAE & G & IF-EVAL & Math Lvl. 5 & MMLU-Pro & BBH & GPQA & MUSR\\
\midrule
indepfact & 0.09 & 0.31 & 0.28 & 0.22 & 0.90 & 0.25 &  & \textit{0.84} & \textbf{0.83} & 0.69 & 0.62 & 0.29 & 0.32\\
gfact & 0.06 & 0.70 & 0.69 & 0.14 & 0.87 & 0.28 & \textbf{0.89} &  &  &  &  &  & \\
hier2ord & 0.05 & 0.76 & 0.75 & 0.13 & 0.89 & 0.25 & \textit{0.88} & \textit{0.84 }& 0.77 & 0.59 & 0.48 & \textbf{0.30} & \textit{0.29}\\
corrfact & 0.05 & 0.76 & 0.75 & 0.13 & 0.55 & 0.25 &  & \textbf{0.85} & \textbf{0.83} & \textbf{0.72} & \textbf{0.65} & 0.24 & 0.19\\
bifact & \textit{0.05} & \textit{0.80} & \textit{0.78} & \textit{0.12} & \textbf{0.91} & \textbf{0.24} & 0.85 & 0.68 & 0.62 & 0.29 & 0.25 & \textit{0.29} & \textit{0.29}\\
corrbifact & \textbf{0.04} & \textbf{0.86} & \textbf{0.84} & \textbf{0.10} & 0.58 & 0.24 & 0.82 & 0.70 & 0.63 & 0.40 & 0.33 & 0.23 & 0.18\\

\bottomrule
\end{tabular}}
\vspace{0.25ex}
\caption*{\small \textbf{Fit Statistics for Hypothesized Latent Structures: aggregated by median across bootstraps}. All metrics were ordered so higher percent ranks correspond with more desirable structural fit. RMSEA, CFI, and TLI all use their scaled and standardized versions which better support comparisons in bootstraps. The RMSEA values differ from those in Table \ref{tab:structural_models} to be comparable with the other traditional metrics (e.g., TLI, SRMR), because these were only estimated with DWLS.}
\end{table*}

\begin{table*}[h]
\centering
\caption{\textbf{Latent Structure Measurement Comparison Overview:} Average Iteration-wise Percent Rank across Bootstraps by Metric}
\label{tab:cfa_fitstats_wide}
\resizebox{\ifdim\width>\linewidth\linewidth\else\width\fi}{!}{
\begin{tabular}{lrrrrrrrrrrrrrrrr}
\toprule
\multicolumn{1}{c}{ } & \multicolumn{4}{c}{Fit Indices / Covariance Fit} & \multicolumn{3}{c}{Information Criteria} & \multicolumn{2}{c}{Test Accuracy} & \multicolumn{7}{c}{Latent Factor Test Set Reliability Estimates} \\
\cmidrule(l{3pt}r{3pt}){2-5} \cmidrule(l{3pt}r{3pt}){6-8} \cmidrule(l{3pt}r{3pt}){9-10} \cmidrule(l{3pt}r{3pt}){11-17}
Structure & RMSEA & CFI & TLI & SRMR & AIC & BIC & log Lik. & AUC & MAE & G & IF-EVAL & Math Lvl. 5 & MMLU-Pro & BBH & GPQA & MUSR\\
\midrule
indepfact & 0.7 & 0.7 & 0.7 & 0.7 & 8.8 & 8.8 & 8.8 & 76.5 & 41.6 &   & \textit{74.0 }& \textbf{80.2} & \textbf{74.3} & \textit{66.8} & \textbf{56.5} & \textbf{60.2}\\
gfact & 25.6 & 25.7 & 25.9 & 24.6 & 10.9 & 10.9 & 10.9 & 38.2 & 0.0 & \textbf{86.0} &   &   &   &   &   &  \\
hier2ord & 50.4 & 45.0 & 50.3 & 40.3 & 41.5 & 41.5 & 41.5 & 59.5 & 38.7 & 54.9 & 51.3 & 53.0 & 56.8 & 52.4 & 35.3 & 37.9\\
corrfact & 53.9 & 55.7 & 54.1 & 54.0 & 56.8 & 56.8 & 56.8 & 19.1 & 37.9 &   & \textbf{83.8} & \textit{78.6} & \textit{69.6} & \textbf{76.3} & \textit{54.9} & \textit{55.3}\\
bifact & \textit{75.7} & \textit{75.7} & \textit{75.6} & \textit{75.8} & \textit{79.4} & \textit{79.4} & \textit{79.4} & \textbf{94.5} & \textit{86.5} & 40.5 & 18.9 & 14.2 & 24.0 & 21.4 & 50.7 & 48.3\\
corrbifact & \textbf{99.1} & \textbf{98.8} & \textbf{98.8} & \textbf{99.2} & \textbf{100.0} & \textbf{100.0} & \textbf{100.0} & 44.9 & \textbf{93.1} & 20.5 & 22.4 & 25.1 & 28.1 & 34.1 & 46.4 & 43.3\\
\bottomrule
\end{tabular}}
\vspace{0.25ex}
\caption*{\small \textbf{Fit Statistics for Hypothesized Latent Structures: aggregated across bootstraps by within bootstrap percent rank}. All metrics were ordered so higher percent ranks correspond with more desirable structural fit. RMSEA, CFI, and TLI all use their scaled and standardized versions which better support comparisons in bootstraps.}
\end{table*}

\section{Mathematical Set-up for Testing Local Dependence}\label{apx:math}
\subsection{Mathematical Foundations for Score Tests and Local Dependence}
\subsubsection{Modification indices as Lagrange-multiplier diagnostics for local dependence}
\label{sec:mi_lm_proof}

We formalize the common structural equation modeling “modification index” (MI) as a score/Lagrange-multiplier test and show how it quantifies (i) \emph{violations of local dependence} (unmodeled residual dependence between items) and, more generally, (ii) \emph{subject-dependent error} (unmodeled heterogeneity that induces residual correlations after conditioning on the latent variables).

\paragraph{Setup (latent-response CFA).}
Let $\mathbf{y}_i\in\{0,1\}^p$ be item responses for model/subject $i\in\{1,\dots,n\}$ based on Eq. \eqref{eq:cfa}, generated via an underlying continuous response $\mathbf{y}^\star_i\in\mathbb{R}^p$:
\[
\mathbf{y}^\star_i = \boldsymbol{\nu} + \boldsymbol{\Lambda}\boldsymbol{\eta}_i + \boldsymbol{\epsilon}_i,
\; 
\boldsymbol{\eta}_i\sim\mathcal{N}(\mathbf{0},\boldsymbol{\Phi}),
\;
\boldsymbol{\epsilon}_i\sim\mathcal{N}(\mathbf{0},\boldsymbol{\Theta}),
\]
with $y_{ij}=\mathbb{1}[y^\star_{ij}>\tau_j]$. In the standard CFA identification, $\boldsymbol{\Theta}$ is constrained to be diagonal. The implied covariance, defined in \S~\ref{sec:cfa_eqs}, of $\mathbf{y}^\star$ is
% \[
$\boldsymbol{\Sigma}(\boldsymbol{\theta}) \;=\; \boldsymbol{\Lambda}\boldsymbol{\Phi}\boldsymbol{\Lambda}^\top + \boldsymbol{\Theta},
\; 
\boldsymbol{\theta} \text{ collects all free parameters}$.
% \]
% $\boldsymbol{\theta}$ collects all free parameters

Let $\mathbf{S}$ be an estimate of the population correlation/covariance of $\mathbf{y}^\star$ (e.g., tetrachoric correlation matrix). Estimation proceeds by minimizing a discrepancy $F(\boldsymbol{\Sigma}(\boldsymbol{\theta}),\mathbf{S})$ (ML, WLSMV, etc.), equivalently maximizing a log-likelihood $\ell(\boldsymbol{\theta})$.

\paragraph{Local dependence as a constrained parameter.}
Pick a candidate residual covariance between items $(a,b)$, denoted
\[
\psi \;\equiv\; \mathrm{Cov}(\epsilon_a,\epsilon_b) \;=\; \Theta_{ab}.
\]
The standard model imposes the constraint $H_0:\psi=0$. Define the augmented parameter vector $\tilde{\boldsymbol{\theta}}=(\boldsymbol{\theta},\psi)$ and the constrained optimum
\[
\hat{\boldsymbol{\theta}} \;\in\; \arg\max_{\boldsymbol{\theta}} \ell(\boldsymbol{\theta},\psi=0).
\]

\subsubsection{MI as a Lagrange-multiplier / score test}
\label{sec:mi_score_test}

\begin{theorem}[Modification index as an LM statistic]
\label{thm:mi_lm}
Assume $\ell(\tilde{\boldsymbol{\theta}})$ is twice continuously differentiable in a neighborhood of the true parameter and that standard regularity conditions for score tests hold (listed below). Let the score for $\psi$ at the constrained optimum be
\[
U_\psi(\hat{\boldsymbol{\theta}}) \;\equiv\; \left.\frac{\partial \ell(\boldsymbol{\theta},\psi)}{\partial \psi}\right|_{(\boldsymbol{\theta},\psi)=(\hat{\boldsymbol{\theta}},0)}.
\]
Let $\mathcal{I}(\tilde{\boldsymbol{\theta}})$ be the expected Fisher information for $\tilde{\boldsymbol{\theta}}$ and write its block decomposition
\[
\mathcal{I} \;=\; 
\begin{bmatrix}
\mathcal{I}_{\theta\theta} & \mathcal{I}_{\theta\psi}\\
\mathcal{I}_{\psi\theta} & \mathcal{I}_{\psi\psi}
\end{bmatrix}.
\]
Define the \emph{partial} information (Schur complement)
\[
\mathcal{I}_{\psi\psi\cdot \theta} \;\equiv\; \mathcal{I}_{\psi\psi} - \mathcal{I}_{\psi\theta}\mathcal{I}_{\theta\theta}^{-1}\mathcal{I}_{\theta\psi}.
\]
Then the Lagrange-multiplier (score) statistic for testing $H_0:\psi=0$ is
\[
\mathrm{MI}(\psi)
\;=\;
U_\psi(\hat{\boldsymbol{\theta}})^\top \,
\mathcal{I}_{\psi\psi\cdot \theta}(\hat{\boldsymbol{\theta}})^{-1}
\,U_\psi(\hat{\boldsymbol{\theta}})
\;\;\xrightarrow{d}\;\; \chi^2_{1}.
\]
Moreover, $\mathrm{MI}(\psi)$ is locally (asymptotically) most powerful among tests of size $\alpha$ for alternatives $\psi = O(n^{-1/2})$ \citep{buse_goodness_1973,engel_chapter_1984}.
\end{theorem}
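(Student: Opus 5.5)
The plan is the classical Rao score-test argument. The only structural ingredient is the block decomposition of the information: the score for $\psi$ is evaluated at the constrained estimate $\hat{\boldsymbol{\theta}}$ rather than at the truth, so its asymptotic variance is the Schur complement $\mathcal{I}_{\psi\psi\cdot\theta}$ and not $\mathcal{I}_{\psi\psi}$. Under $H_0$, let $\tilde{\boldsymbol{\theta}}_0=(\boldsymbol{\theta}_0,0)$ denote the true parameter. The regularity conditions to be listed are:
\begin{enumerate}
\item $\tilde{\boldsymbol{\theta}}_0$ is interior; the nonzero tetrachoric/latent-response correlation parameters are not on the boundary.
\item The model is identified, so that $\mathcal{I}(\tilde{\boldsymbol{\theta}}_0)$ is positive definite; in particular $\mathcal{I}_{\theta\theta}$ is invertible and $\mathcal{I}_{\psi\psi\cdot\theta}>0$.
\item The score satisfies a CLT: $n^{-1/2}U(\tilde{\boldsymbol{\theta}}_0)\xrightarrow{d}\mathcal{N}(\mathbf{0},\mathcal{I})$, with the information equality holding.
\item The normalized observed Hessian converges uniformly in a neighborhood to $-\mathcal{I}$.
\item The constrained MLE $\hat{\boldsymbol{\theta}}$ is consistent.
\end{enumerate}

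\textbf{Step 1: expansion of the score.} Since $\hat{\boldsymbol{\theta}}$ maximizes $\ell(\cdot,0)$, we have $U_\theta(\hat{\boldsymbol{\theta}},0)=\mathbf{0}$. A mean-value expansion of $U_\theta$ around $\boldsymbol{\theta}_0$, together with condition 4, gives
\[
\sqrt{n}(\hat{\boldsymbol{\theta}}-\boldsymbol{\theta}_0)=\mathcal{I}_{\theta\theta}^{-1}\,n^{-1/2}U_\theta(\tilde{\boldsymbol{\theta}}_0)+o_p(1).
\]
Expanding $U_\psi$ in the same way yields
\[
n^{-1/2}U_\psi(\hat{\boldsymbol{\theta}})=n^{-1/2}U_\psi(\tilde{\boldsymbol{\theta}}_0)-\mathcal{I}_{\psi\theta}\sqrt{n}(\hat{\boldsymbol{\theta}}-\boldsymbol{\theta}_0)+o_p(1)=\bigl[-\mathcal{I}_{\psi\theta}\mathcal{I}_{\theta\theta}^{-1},\;1\bigr]\,n^{-1/2}U(\tilde{\boldsymbol{\theta}}_0)+o_p(1).
\]

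\textbf{Step 2: limiting distribution.} By condition 3 and Slutsky's lemma, this linear combination is asymptotically normal with variance
\[
\bigl[-\mathcal{I}_{\psi\theta}\mathcal{I}_{\theta\theta}^{-1},1\bigr]\,\mathcal{I}\,\bigl[-\mathcal{I}_{\psi\theta}\mathcal{I}_{\theta\theta}^{-1},1\bigr]^\top=\mathcal{I}_{\psi\psi\cdot\theta}.
\]
Under conditions 4 and 5, $\mathcal{I}(\hat{\boldsymbol{\theta}})$ is consistent for $\mathcal{I}(\tilde{\boldsymbol{\theta}}_0)$, and the Schur complement is a continuous function of the information blocks. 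The statistic $\mathrm{MI}(\psi)$ is therefore the square of an asymptotically standard normal scalar, hence $\chi^2_1$. The scaling of the score and the information by $n$ cancels in the quadratic form. The same expansion gives the SEPC: the one-step Newton update from the constrained solution is $\mathcal{I}_{\psi\psi\cdot\theta}^{-1}U_\psi$.

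\textbf{Step 3: local optimality.} For local alternatives $\psi_n=h/\sqrt{n}$, Le Cam's third lemma (contiguity under local asymptotic normality) shifts the normal limit of Step 2 to mean $\mathcal{I}_{\psi\psi\cdot\theta}h$. The MI is then asymptotically noncentral $\chi^2_1$ with noncentrality $h^2\mathcal{I}_{\psi\psi\cdot\theta}$. Because $\psi$ is scalar, the limit experiment is a Gaussian shift in one dimension after profiling out the nuisance parameters by the efficient score projection. In this limit the two-sided test based on the squared efficient score is uniformly most powerful among unbiased tests, and the one-sided version is most powerful outright. This yields the local optimality claim, for which I would cite \citet{buse_goodness_1973,engel_chapter_1984} rather than rederive.

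\textbf{Main obstacle.} The difficulty is not the algebra but justifying conditions 3 and 4 in the setting actually used, namely DWLS on tetrachoric correlations rather than full ML. There the discrepancy function is not a likelihood, so the information equality fails, and the statistic is $\chi^2_1$ only after a sandwich (robust) correction of the variance. I would state the theorem for full-information ML, which the MH-RM fits supply. For DWLS I would remark that one replaces $\mathcal{I}$ by the sandwich $\mathbf{H}^{-1}\mathbf{J}\mathbf{H}^{-1}$, with $\mathbf{H}$ the Hessian of the discrepancy function and $\mathbf{J}$ the asymptotic covariance of its gradient, in the Schur-complement formula, and that the same argument then goes through.
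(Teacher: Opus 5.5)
Your proposal is correct and follows the same classical Rao score-test argument as the paper's proof sketch: asymptotic normality of $n^{-1/2}U_\psi(\hat{\boldsymbol{\theta}})$ with variance $\mathcal{I}_{\psi\psi\cdot\theta}$, hence a $\chi^2_1$ limit for the quadratic form, and LAN for local optimality. Your mean-value expansion derives the Schur-complement variance that the paper only asserts, while the paper's KKT remark (the Lagrange multiplier equals minus the constrained score), which you omit, is needed only to justify the ``LM'' name, and your Step 3 restriction to asymptotically unbiased tests for the two-sided statistic is the correct reading of the paper's unqualified ``most powerful among tests of size $\alpha$'' claim.
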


\begin{proof}[Proof sketch]
The KKT conditions for constrained maximization yield $\hat{\lambda} = -U_\psi(\hat{\boldsymbol{\theta}})$: the Lagrange multiplier equals (minus) the score for the constrained parameter. Normalizing by the Schur complement $\mathcal{I}_{\psi\psi\cdot\theta}$ yields the stated quadratic form. Under $H_0$ and regularity, $n^{-1/2}U_\psi(\hat{\boldsymbol{\theta}})\Rightarrow \mathcal{N}(0,\mathcal{I}_{\psi\psi\cdot\theta})$, so the quadratic form converges to $\chi^2_1$. Local asymptotic optimality follows from standard LAN arguments \citep{buse_goodness_1973}.
\end{proof}

\paragraph{Statistical conditions (typical SEM/ML assumptions).}
A sufficient set of conditions for Theorem~\ref{thm:mi_lm} (and its robust analogs) includes:
\begin{enumerate}[leftmargin=*,nosep]
    \item \textbf{Identification:} the constrained model is locally identified at the true parameter; $\mathcal{I}_{\theta\theta}$ is nonsingular.
    \item \textbf{Interior point:} $\psi=0$ lies in the interior of the parameter space for the relevant parametrization (or use boundary-corrected tests if not).
    \item \textbf{Regularity/smoothness:} $\ell$ is twice continuously differentiable in a neighborhood; interchange of differentiation and integration is valid.
    \item \textbf{Asymptotics:} $n\to\infty$ with $p$ fixed (or $p$ increasing slowly under additional conditions); $\mathbf{S}$ is consistent for the population covariance of $\mathbf{y}^\star$.
    % \item \textbf{Robustness:} if non-normality/misspecification is present (common with binary items), use sandwich/robust information (e.g., WLSMV/MLR) to replace $\mathcal{I}$ by a consistent estimator of $\mathrm{Var}(U)$.
\end{enumerate}
\begin{remark}[Finite-sample performance]\label{remark:finitesamp}
Simulation studies show that MI statistics maintain approximately correct Type I error rates for $n\geq 200$ and moderate model complexity ($p\leq 50$), with scaled corrections essential for categorical data \citep{saris_testing_2009,kenny_performance_2015, mcneish_dynamic_2023,groskurth_why_2024}. In our bootstrap analysis, each replication uses $n\approx 3{,}000$ LLM submissions, well above the threshold for asymptotic validity.
\end{remark}

\subsubsection{MI quantifies violations of local dependence}
\label{sec:mi_local_dependence}

We now connect the MI for $\psi=\Theta_{ab}$ to the \emph{local dependence} principle.

\begin{definition}[Local independence]
Items are locally independent given the latent variables $\boldsymbol{\eta}$ if
\[
p(\mathbf{y}\mid \boldsymbol{\eta}) = \prod_{j=1}^p p(y_j\mid \boldsymbol{\eta}).
\]
In the Gaussian latent-response CFA above, local independence is equivalent to $\boldsymbol{\Theta}$ being diagonal.
\end{definition}

\begin{proposition}[Residual covariance as a measure of local dependence]
\label{prop:theta_local_dependence}
Under the latent-response CFA, if $\Theta_{ab}\neq 0$ then $y_a \not\!\perp\!\!\!\perp y_b\mid \boldsymbol{\eta}$ (local dependence fails). Conversely, if $\boldsymbol{\Theta}$ is diagonal, then $y_a \perp\!\!\!\perp y_b\mid \boldsymbol{\eta}$ for all $a\neq b$.
\end{proposition}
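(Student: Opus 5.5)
The plan is to reduce everything to the joint law of the latent responses conditional on the latent vector. Conditional on $\boldsymbol{\eta}$, the model gives
\[
\mathbf{y}^\star\mid\boldsymbol{\eta}\sim\mathcal{N}\bigl(\boldsymbol{\nu}+\boldsymbol{\Lambda}\boldsymbol{\eta},\,\boldsymbol{\Theta}\bigr),
\]
since $\boldsymbol{\epsilon}$ is independent of $\boldsymbol{\eta}$. The observed items are coordinatewise threshold functions $y_j=\mathbb{1}[y^\star_j>\tau_j]$. So every statement about $y_a,y_b$ given $\boldsymbol{\eta}$ becomes a statement about a bivariate normal pair $(\epsilon_a,\epsilon_b)$ with covariance block
\[
\begin{bmatrix}\Theta_{aa}&\Theta_{ab}\\ \Theta_{ab}&\Theta_{bb}\end{bmatrix},
\]
evaluated at the shifted thresholds $c_j(\boldsymbol{\eta})=\tau_j-\nu_j-\boldsymbol{\lambda}_j^\top\boldsymbol{\eta}$.

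The converse direction is the easy one, and I will do it first. If $\boldsymbol{\Theta}$ is diagonal, the Gaussian vector $\boldsymbol{\epsilon}$ has mutually independent coordinates, because uncorrelated jointly Gaussian variables are independent. Measurable functions of independent variables remain independent. Hence $y_a=\mathbb{1}[\epsilon_a>c_a(\boldsymbol{\eta})]$ and $y_b=\mathbb{1}[\epsilon_b>c_b(\boldsymbol{\eta})]$ are independent given $\boldsymbol{\eta}$, and in fact the full product factorization in the definition of local independence holds.

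For the forward direction, fix $\boldsymbol{\eta}$ and standardize $z_j=\epsilon_j/\sqrt{\Theta_{jj}}$, so that $(z_a,z_b)$ is standard bivariate normal with correlation $\rho=\Theta_{ab}/\sqrt{\Theta_{aa}\Theta_{bb}}\neq0$. Write $h_j=c_j(\boldsymbol{\eta})/\sqrt{\Theta_{jj}}$. I need to show
\[
P(z_a>h_a,\,z_b>h_b)\neq\bar\Phi(h_a)\,\bar\Phi(h_b)
\]
for binary outcomes. For binary variables, independence is equivalent to vanishing covariance, so this inequality is exactly what is required. The tool is Plackett's identity,
\[
\frac{\partial}{\partial\rho}P(z_a>h_a,z_b>h_b)=\phi_2(h_a,h_b;\rho)>0,
\]
where $\phi_2$ is the bivariate normal density. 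Together with the fact that at $\rho=0$ the probability factorizes, this makes the orthant probability strictly increasing in $\rho$. Integrating from $0$ to $\rho$ then gives a strictly positive covariance when $\rho>0$ and a strictly negative one when $\rho<0$. So $\operatorname{Cov}(y_a,y_b\mid\boldsymbol{\eta})$ has the sign of $\Theta_{ab}$ and is nonzero for every $\boldsymbol{\eta}$, thresholds included. This also justifies reading the SEPC sign as the direction of the residual association.

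The main obstacle is that step: the forward implication is genuinely nonlinear, because $\Theta_{ab}\neq0$ does not trivially transfer through the indicator maps. It needs the strict monotonicity argument above rather than a moment calculation. I will also note the required nondegeneracy, $\Theta_{aa},\Theta_{bb}>0$ with $|\rho|<1$, so that $\phi_2$ is a proper positive density and the thresholds are finite. A brief remark will add that for the continuous $\mathbf{y}^\star$ itself, the equivalence between diagonal $\boldsymbol{\Theta}$ and conditional independence is immediate from Gaussianity.
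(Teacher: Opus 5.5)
Your proof is correct, and it follows the paper's reduction while supplying the step the paper leaves unjustified. For the converse you match the paper: conditional on $\boldsymbol{\eta}$ the latent responses are Gaussian with covariance $\boldsymbol{\Theta}$, uncorrelated Gaussian coordinates are independent, and the threshold indicators inherit that independence.

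The forward direction is where you go beyond the paper. The paper's proof stops at the latent-response level: $\Theta_{ab}\neq 0$ makes $y^\star_a$ and $y^\star_b$ dependent given $\boldsymbol{\eta}$. It then simply asserts that this dependence ``transfers to the binary items.'' That step is not automatic, because dependence between two variables need not survive passage to functions of them; an argument specific to Gaussian thresholding is needed. Your use of Plackett's identity provides it. The orthant probability is strictly monotone in $\rho$ and factorizes at $\rho=0$, so $\mathrm{Cov}(y_a,y_b\mid\boldsymbol{\eta})\neq 0$ for every $\boldsymbol{\eta}$ and every finite threshold.

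Your route therefore makes the only nontrivial step rigorous. It also yields a sign result the paper's one-line argument does not: $\mathrm{Cov}(y_a,y_b\mid\boldsymbol{\eta})$ has the sign of $\Theta_{ab}$. That result is what links the sign of a residual covariance, and hence of an SEPC, to the direction of association among the observed binary items.

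One minor point: the nondegeneracy you list is essentially free. $\Theta_{ab}\neq 0$ together with positive semidefiniteness forces $\Theta_{aa},\Theta_{bb}>0$. The boundary case $|\rho|=1$ still gives a strict inequality, because the orthant probability is continuous in $\rho$ on $[-1,1]$ and strictly increasing on the interior.
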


\begin{proof}
For multivariate Gaussians, conditional independence holds iff the corresponding covariance is zero. Thus $\Theta_{ab}\neq 0$ implies dependence between $y^\star_a$ and $y^\star_b$ given $\boldsymbol{\eta}$, which transfers to the binary items.
\end{proof}

\paragraph{Interpretation of MI.}
In this setting, $\mathrm{MI}(\Theta_{ab})$ is precisely a test for whether the data demand a nonzero residual dependence between items $a$ and $b$ \emph{after} accounting for the latent structure in $\boldsymbol{\Lambda}\boldsymbol{\Phi}\boldsymbol{\Lambda}^\top$. The larger the MI, the stronger the evidence that the current factor structure under-explains (or over-explains) the empirical association between $(a,b)$. Geometrically, it measures the instantaneous rate of improvement in log-likelihood if constraint $(a,b)$ were relaxed by an infinitesimal amount. In information-geometric terms, it quantifies the projection of the unconstrained score onto the constraint manifold.

\subsubsection{MI detects subject-dependent error as induced residual dependence}
\label{sec:mi_subject_dependent_error}

Local dependence is not only “item redundancy.” It can also arise because the error is \emph{subject-dependent}: different subjects/models induce different item difficulties or different effective loadings, producing residual covariance when the model incorrectly assumes homogeneity.

\paragraph{A concrete heterogeneity model.}
Let $z_i\in\mathbb{R}$ be an unmodeled subject-specific variable (e.g., a latent “format adherence” trait, prompting style, or evaluator-specific artifact) independent of $\boldsymbol{\eta}_i$, and suppose the true latent response model is
\[
\mathbf{y}^\star_i
=
\boldsymbol{\nu}+\boldsymbol{\Lambda}\boldsymbol{\eta}_i
+ \mathbf{b}\,z_i
+ \boldsymbol{\xi}_i,
\]
\[
z_i\sim \mathcal{N}(0,\sigma_z^2),\ 
\boldsymbol{\xi}_i\sim \mathcal{N}(\mathbf{0},\boldsymbol{\Theta}_0) \text{ diagonal}.
\]
Then the \emph{marginal} covariance of $\mathbf{y}^\star$ is
\[
\mathrm{Cov}(\mathbf{y}^\star)
=
\boldsymbol{\Lambda}\boldsymbol{\Phi}\boldsymbol{\Lambda}^\top
+
\underbrace{\mathbf{b}\mathbf{b}^\top\sigma_z^2}_{\text{induced residual dependence}}
+
\boldsymbol{\Theta}_0.
\]
If we fit the misspecified CFA that omits $z_i$ and forces residuals diagonal, the term $\mathbf{b}\mathbf{b}^\top\sigma_z^2$ cannot be represented by $\boldsymbol{\Theta}$ and will appear as systematic misfit. In particular,
\[
\Theta_{ab}^{\text{true, marginal}} \;=\; b_a b_b \sigma_z^2 \neq 0
\]
whenever $b_a b_b\neq 0$, inducing residual covariance among the affected items.

\begin{corollary}[MI detects unmodeled subject heterogeneity through residual covariances]
\label{cor:mi_heterogeneity}
Under the heterogeneity model above, for any pair $(a,b)$ with $b_a b_b\sigma_z^2\neq 0$, the score $U_{\Theta_{ab}}(\hat{\boldsymbol{\theta}})$ is generically nonzero in the diagonal-residual CFA, and $\mathrm{MI}(\Theta_{ab})$ diverges at rate $O_p(n)$, yielding asymptotic power $1$ to detect the induced local dependence.
\end{corollary}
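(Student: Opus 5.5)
The argument runs through the standard misspecified-model (pseudo-true parameter) route, followed by the algebra of Theorem~\ref{thm:mi_lm}. Let $\boldsymbol{\Sigma}_0=\boldsymbol{\Lambda}\boldsymbol{\Phi}\boldsymbol{\Lambda}^\top+\mathbf{b}\mathbf{b}^\top\sigma_z^2+\boldsymbol{\Theta}_0$ be the true latent-response covariance. The sample correlation $\mathbf{S}$ converges to $\boldsymbol{\Sigma}_0$ by the consistency condition listed after Theorem~\ref{thm:mi_lm}. The constrained estimator $\hat{\boldsymbol{\theta}}$, with diagonal $\boldsymbol{\Theta}$ and $\psi=0$, then converges to a pseudo-true value $\boldsymbol{\theta}^\ast$ that minimizes the population discrepancy $F(\boldsymbol{\Sigma}(\boldsymbol{\theta}),\boldsymbol{\Sigma}_0)$. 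This uses standard M-estimation: uniform convergence of $n^{-1}\ell$ and a unique, well-separated minimizer, which I take as an added regularity assumption on the ``generic'' configuration.

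The key step is to show that the population score for $\Theta_{ab}$ at $\boldsymbol{\theta}^\ast$ is nonzero. For ML-type discrepancies,
\[
\partial F/\partial\Theta_{ab}\propto \bigl[\boldsymbol{\Sigma}^{\ast-1}(\boldsymbol{\Sigma}^\ast-\boldsymbol{\Sigma}_0)\boldsymbol{\Sigma}^{\ast-1}\bigr]_{ab},
\]
with the analogous weighted residual $[\mathbf{W}(\boldsymbol{\sigma}^\ast-\boldsymbol{\sigma}_0)]_{ab}$ for DWLS. I will argue by contradiction. Suppose the score vanished for every pair $(a,b)$ with $b_ab_b\neq0$. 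Together with the first-order conditions for the free parameters, $\boldsymbol{\theta}^\ast$ would then be a stationary point of the model in which those residual covariances are also free. That enlarged model contains $\boldsymbol{\Sigma}_0$ exactly. Under identification of the enlarged model, this forces the constrained fit to reproduce $\boldsymbol{\Sigma}_0$. That would require $\mathbf{b}\mathbf{b}^\top\sigma_z^2$ to be absorbable into $\boldsymbol{\Lambda}\boldsymbol{\Phi}\boldsymbol{\Lambda}^\top+\text{diag}$, which is impossible whenever $\mathbf{b}$ is not in the column span of the structured loadings. This non-absorbability is exactly the sense of ``generically'' in the corollary. Any single pair with a zero score then lies on a measure-zero set of $(\boldsymbol{\Lambda},\mathbf{b})$: the score is a nontrivial analytic function of these parameters, so it vanishes only on a null set.

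The rate follows from the growth of the score and the stability of the information. Write $U_\psi(\hat{\boldsymbol{\theta}})=n\,\bar u_\psi(\hat{\boldsymbol{\theta}})$ with $\bar u_\psi\to u^\ast\neq0$ in probability. The per-observation partial information $n^{-1}\mathcal{I}_{\psi\psi\cdot\theta}(\hat{\boldsymbol{\theta}})\to \iota^\ast\in(0,\infty)$ by continuity and nonsingularity of $\mathcal{I}_{\theta\theta}$. Therefore
\[
\mathrm{MI}=n\,(\bar u_\psi)^2/(n^{-1}\mathcal{I}_{\psi\psi\cdot\theta})=n\,(u^\ast)^2/\iota^\ast+o_p(n),
\]
which is of exact order $n$. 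Since the $\chi^2_1$ critical value is fixed, $\Pr(\mathrm{MI}>\chi^2_{1,1-\alpha})\to1$.

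I expect the main obstacle to be the second step, proving that the score is not zero. The constrained fit can partially absorb the $\mathbf{b}\mathbf{b}^\top$ term by distorting the loadings, so the residual at $(a,b)$ need not equal $b_ab_b\sigma_z^2$. My first approach is the contradiction argument above, which ties nonzero scores to non-absorbability. If that proves too delicate, I would fall back on a genericity statement obtained from the analytic dependence of $\boldsymbol{\theta}^\ast$ on $(\mathbf{b},\sigma_z^2)$ via the implicit function theorem.
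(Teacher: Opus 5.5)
Your outline matches the paper's. Its proof is two sentences: the diagonal-residual fit cannot reproduce $b_ab_b\sigma_z^2$, ``producing a nonzero population score,'' and then $\mathrm{MI}$ grows like $n$ by M-estimation theory. Your pseudo-true-value and rate-and-power paragraphs correctly expand the second sentence, with one correction: $\iota^\ast>0$ needs the full information, including $\psi$, to be positive definite at $\boldsymbol{\theta}^\ast$, not merely $\mathcal{I}_{\theta\theta}$ nonsingular. You are also right that the first sentence hides the real difficulty.

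Your contradiction argument for that step breaks at ``stationary point of the enlarged model $+$ identification $\Rightarrow$ exact fit.'' Identification only makes the exact-fit parameter unique. It does not exclude other critical points of a discrepancy that is nonconvex in $\boldsymbol{\Lambda}$. For example, take $\boldsymbol{\Lambda}=\mathbf{0}$ with the residual matrix set to the best fit in the enlarged sparsity pattern. The gradients in $\boldsymbol{\Lambda}$ and $\boldsymbol{\Phi}$, namely $2\mathbf{M}\boldsymbol{\Lambda}\boldsymbol{\Phi}$ and $\boldsymbol{\Lambda}^\top\mathbf{M}\boldsymbol{\Lambda}$ with $\mathbf{M}=\boldsymbol{\Sigma}^{-1}(\boldsymbol{\Sigma}-\boldsymbol{\Sigma}_0)\boldsymbol{\Sigma}^{-1}$, vanish at $\boldsymbol{\Lambda}=\mathbf{0}$. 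The residual gradients vanish by construction. Yet this point does not reproduce $\boldsymbol{\Sigma}_0$. So vanishing scores would only make $\boldsymbol{\theta}^\ast$ a critical point of the enlarged problem, possibly a saddle. Being the constrained global minimizer does not change that, so no contradiction with non-absorbability follows.

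Even granting the contradiction, it shows only that \emph{some} affected pair has a nonzero score. Your analyticity step needs a configuration where the \emph{given} pair $(a,b)$ has a nonzero score, and you do not exhibit one. Separately, ``$\mathbf{b}$ outside the column span'' does not characterize non-absorbability: a three-indicator one-factor model is saturated and absorbs any sufficiently small perturbation of an interior point. The relevant condition is $\boldsymbol{\Sigma}_0\notin\{\boldsymbol{\Sigma}(\boldsymbol{\theta})\}$, and it is necessary but not sufficient for a particular score to be nonzero.

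Your implicit-function-theorem fallback is the route that works, and it can be made concrete as follows.
\begin{itemize}
\item \textbf{Setup.} Let $\bar{\boldsymbol{\theta}}$ be the exact-fit parameter at $\sigma_z^2=0$, so $\boldsymbol{\Sigma}_0=\boldsymbol{\Sigma}(\bar{\boldsymbol{\theta}})+\sigma_z^2\mathbf{b}\mathbf{b}^\top$. Let $P_T$ be the projection onto the model's tangent space $T$ at $\bar{\boldsymbol{\theta}}$, orthogonal in the discrepancy metric $\langle\mathbf{A},\mathbf{C}\rangle_W$. For ML this metric is $\mathrm{tr}(\boldsymbol{\Sigma}^{-1}\mathbf{A}\boldsymbol{\Sigma}^{-1}\mathbf{C})$; for DWLS it uses the diagonal weight matrix.
\item \textbf{Expansion.} The population Hessian at $\bar{\boldsymbol{\theta}}$ is nonsingular under local identification (for ML it is proportional to $\mathcal{I}_{\theta\theta}$). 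The implicit function theorem then gives $\boldsymbol{\Sigma}(\boldsymbol{\theta}^\ast)=\boldsymbol{\Sigma}(\bar{\boldsymbol{\theta}})+\sigma_z^2P_T(\mathbf{b}\mathbf{b}^\top)+O(\sigma_z^4)$.
\item \textbf{Score.} Up to sign and a constant, the population score for $\Theta_{ab}$ is $\sigma_z^2\,q_{ab}(\mathbf{b})+O(\sigma_z^4)$. Here $q_{ab}(\mathbf{b})=\langle(\mathrm{Id}-P_T)\mathbf{G}_{ab},(\mathrm{Id}-P_T)\mathbf{b}\mathbf{b}^\top\rangle_W$, $\mathbf{G}_{ab}=\mathbf{e}_a\mathbf{e}_b^\top+\mathbf{e}_b\mathbf{e}_a^\top$, and $\mathbf{e}_a$ is the $a$-th unit vector.
\item \textbf{Witness.} At $\mathbf{b}=\mathbf{e}_a+\mathbf{e}_b$, the diagonal part of $\mathbf{b}\mathbf{b}^\top$ lies in $T$ because residual variances are free. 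So $q_{ab}=\|(\mathrm{Id}-P_T)\mathbf{G}_{ab}\|_W^2$, which for ML is proportional to $\mathcal{I}_{\psi\psi\cdot\theta}$. It is positive exactly when freeing $\Theta_{ab}$ is locally identified, which Theorem~\ref{thm:mi_lm} already presupposes.
\item \textbf{Genericity.} Hence $q_{ab}$ is a nonzero quadratic form that vanishes only on a quadric in $\mathbf{b}$. Consider a connected region, containing small $\sigma_z^2$, on which the pseudo-true minimizer is unique and nondegenerate. Real-analyticity of the score in $(\mathbf{b},\sigma_z^2)$ there extends the nonvanishing to all but a null set.
\end{itemize}
This is the precise content of ``generically nonzero'' for each fixed pair, and with it your rate argument completes the proof.
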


\begin{proof}
The fitted model cannot match the population covariance entry $b_a b_b\sigma_z^2$ when forcing $\Theta_{ab}=0$, producing a nonzero population score. By M-estimation theory, $\mathrm{MI}$ grows at rate $O_p(n)$, yielding consistency.
\end{proof}

\begin{remark}
Corollary~\ref{cor:mi_heterogeneity} is the key bridge to “subject-dependent error” in benchmark ecosystems: if some models systematically exploit (or fail on) a shared artifact (formatting, instruction parsing, choice biases), that unmodeled trait acts like $z_i$ and induces a low-rank residual covariance pattern among the affected items. MIs will concentrate on item pairs within that pattern—even if the original factor structure is otherwise correct—flagging a \emph{measurement failure mode} rather than merely “redundant questions.”
\end{remark}

\subsubsection{Effect-size direction via expected parameter change (EPC)}
\label{sec:epc_direction}

Beyond hypothesis testing, practitioners want an approximate \emph{magnitude and sign} of the violation. Under a second-order approximation, freeing $\psi$ yields
\[
\widehat{\Delta \psi}
\;\approx\;
\mathrm{EPC}(\psi)
\;=\;
\mathcal{I}_{\psi\psi\cdot\theta}(\hat{\boldsymbol{\theta}})^{-1}\,U_\psi(\hat{\boldsymbol{\theta}}).
\]
In particular:
\begin{itemize}[leftmargin=*,nosep]
    \item $\mathrm{EPC}(\Theta_{ab})>0$ indicates the model underpredicts the residual association of items $(a,b)$ after conditioning on the factors (shared method/content, induced heterogeneity);  the items share more dependence than the latent factors $\boldsymbol{\eta}$ can explain.
    \item $\mathrm{EPC}(\Theta_{ab})<0$ indicates overprediction (the factor structure implies stronger association than observed), often signaling cross-loading structure, suppressor effects, or miscoded item polarity.
\end{itemize}
The \textbf{Standardized EPC (SEPC)} by implied residual variances places this on a correlation scale, making it interpretable as the estimated residual correlation, as used in our redundancy analyses. 
% Standardizing EPC by implied residual variances gives the SEPC used in our redundancy analyses.

\subsubsection{Summary: what MI certifies and what it does not}
\label{sec:mi_summary}

\begin{itemize}[leftmargin=*,nosep]
    \item \textbf{What MI is:} a Lagrange-multiplier/score statistic measuring the (curvature-normalized) gradient pressure to relax a constraint. For $\Theta_{ab}=0$, it is a direct diagnostic for conditional dependence remaining after the latent structure.
    \item \textbf{What MI detects in specific:} (i) local dependence from item overlap effects, and (ii) residual dependence induced by unmodeled subject heterogeneity or other misspecification that manifests as correlated errors.
    \item \textbf{What MI detects across bootstraps:} (i) propensities in local dependence, and (ii) trends in residual dependence induced by unmodeled subject heterogeneity or other misspecification that manifests as correlated errors.
    \item \textbf{What MI does not certify:} a causal explanation. A large MI says “this constraint is wrong for the dependence structure,” not “this pair is semantically redundant.” Pairwise disambiguation can require substantive inspection or modeling the alternative mechanism (e.g., adding a method factor versus correlating residuals, bootstrapping for generalization).
\end{itemize}

This formal view justifies our use of MI/SEPC aggregated over bootstrap replications: it yields a statistically principled, computationally efficient, and interpretable map of \emph{where} the benchmark ecosystem violates local independence, and whether these violations are stable enough to be treated as structural (rather than idiosyncratic) properties of the item pool.

% \subsection{Bootstrap Details}\label{apx:boots}

% \subsubsection{Meta-analytic bootstrap aggregation}
% A second way of extracting results across bootstraps is using meta-analytic aggregation techniques. We compute the above item-wise fixed-effect metrics under bootstrap resampling of the observation table to quantify their stability. Let $b\in\{1,\dots,B\}$ index bootstrap samples, and let $Z_{ib}$ denote an aggregated item metric of interest (e.g., $R^2_i$, residual energy, or a composite) computed in bootstrap $b$. Because $Z_{ib}$ can be mechanically affected by design properties (how many models/items/observations appear in the bootstrap), we residualize these effects using:
% \[
% Z_{ib} = \kappa + \mathbf{x}_{ib}^\top \psi + w_b + \epsilon_{ib},
% \quad w_b\sim\mathcal{N}(0,\sigma^2_w),
% \]
% where $\mathbf{x}_{ib}$ includes (i) number of models, (ii) number of items, and (iii) total observations in bootstrap $b$ (and, when relevant, item-specific counts within $b$). The residualized item metric is
% \[
% \widetilde{Z}_{ib} = Z_{ib} - \widehat{\mathbf{x}}_{ib}^\top \hat\psi - \hat w_b,
% \]
% and we report $\widetilde{Z}_i=\frac{1}{B}\sum_{b=1}^B \widetilde{Z}_{ib}+\kappa $ as a design-adjusted measure of item idiosyncrasy. The readdition of the grand mean $\kappa $ has no effect on metric-based ordering, and serves only to recenter each metric. This step ensures that items are compared on an equal footing, separating substantive reliability differences from artifacts of missingness or bootstrap composition.

% \input{tables/oos_auc_mae_meta}

\section{Robustness via Meta-Analytic Bootstrapping}\label{apx:boots}

\subsection{Bootstrapped Methods 1 and 3 and Aggregations of Statistics}
\label{sec:method1_meta_cfa}

Fitting models to the thousands of available items is computationally intractable and risks overfitting to item-specific idiosyncrasies. To ensure our conclusions are stable and generalizable, we implement a meta-analytic bootstrapping procedure. For each of $B=400$ replications, we:
\begin{enumerate}[leftmargin=*,nosep]
    \item Randomly sample a small, computationally manageable set of items from each of the 6 benchmarks.
    \item Fit all six structural models to the tetrachoric correlation matrix of this item subset.
    \item Store key model fit indices (e.g., scaled RMSEA, CFI), parameter estimates ($\boldsymbol{\Lambda}, \boldsymbol{\Psi}$), and local-fit diagnostics (modification indices).
\end{enumerate}
This process yields distributions of fit indices and parameters, allowing us to perform a meta-analysis. We assess the central tendency and stability of each model's performance, effectively integrating out the noise from any single item sample. To formally compare models, we fit a mixed-effects regression to the stacked results from all bootstrap replications, predicting fit indices from model structure while accounting for between-replication variance. 

\paragraph{Bootstrap-by-item design (computational tractability and robustness)}
Let benchmarks be indexed by $k\in\{1,\dots,K\}$, with item sets $\mathcal{J}_k$ and $|\mathcal{J}_k|=p_k$. In bootstrap replication $b\in\{1,\dots,B\}$ we sample (without replacement) a subset $\tilde{\mathcal{J}}_{k}^{(b)}\subset\mathcal{J}_k$ of size $r_k\ll p_k$ for each benchmark and pool to $\tilde{\mathcal{J}}^{(b)}=\cup_k \tilde{\mathcal{J}}_{k}^{(b)}$ with $\tilde{p}=\sum_k r_k$. We then fit all candidate models to the same sampled item pool. This design targets a population quantity: the expected comparative fit of each structural hypothesis under the distribution of items induced by the benchmark ecosystem. It also provides a principled stability check: a structural conclusion is credible only if it persists across many item-resampled realizations. 

\subsubsection{Motivation for Meta-aggregations}\label{apx:boots:motivation}

Each bootstrap replication $b \in \{1,\ldots,B\}$ samples a different item subset $\tilde{\mathcal{J}}^{(b)} = \cup_k \tilde{\mathcal{J}}_k^{(b)}$, with $|\tilde{\mathcal{J}}_k^{(b)}| = r_k \ll p_k$ items per benchmark. The MH-RM algorithm \citep{cai_high-dimensional_2010} then returns, for each estimand $\xi$ of interest, a point estimate $\hat{\xi}^{(b)}$ together with a standard error $\mathrm{SE}^{(b)}(\hat{\xi})$ derived from the observed information matrix at convergence. Two properties of this design make na\"ive averaging ($\bar{\xi} = B^{-1}\sum_b \hat{\xi}^{(b)}$) statistically inefficient:

\begin{enumerate}[leftmargin=*,itemsep=2pt]
    \item \textbf{Heterogeneous precision.} Different item draws yield different effective test lengths, item discrimination profiles, and conditioning on the latent trait. A bootstrap that happens to sample highly discriminating items produces a more precise estimate of $\beta_d$ than one dominated by low-discrimination items. Treating both equally wastes information.
    \item \textbf{Occasional near-degeneracy.} Some item subsets may produce near-singular Fisher information for particular parameters (e.g., a specific factor receiving very few high-discrimination items), inflating $\mathrm{SE}^{(b)}$ by orders of magnitude. Na\"ive averaging allows these unstable estimates to contaminate the aggregate.
\end{enumerate}

Inverse-variance weighting addresses both issues: it automatically downweights imprecise estimates in proportion to their squared standard error, yielding the minimum-variance linear unbiased combination under independence.

\subsection{Summary of the aggregation pipeline}\label{apx:boots:summary}

The complete pipeline, applied for Methods 1 and 3, proceeds as follows:
\begin{enumerate}[leftmargin=*,itemsep=2pt]
    \item \textbf{Sample.} Draw item subset $\tilde{\mathcal{J}}^{(b)}$ with $r_k$ items per benchmark $k$, without replacement.
    \item \textbf{Fit.} Estimate the target model (CFA structures for Method~1 via both MH-RM and DWLS; bifactor IRT with/without latent regression for Method~3 via MH-RM), obtaining estimates $\hat{\xi}^{(b)}$ and standard errors $\mathrm{SE}^{(b)}$.
    \item \textbf{Weight and Aggregate via Mixed Effect Meta-regression (Method 1).} Obtain statistical estimates based on \eqref{eq:metaagg}.
    \item \textbf{Weight (Method 3).} Compute $w^{(b)} = 1/(\mathrm{SE}^{(b)})^2$, applying fallback rules (\ref{eq:ivw_fallback}) for degenerate cases.
    \item \textbf{Aggregate (Method 3).} Form IVW point estimates $\tilde{\xi}$ via (\ref{eq:ivw_mean}) and standard errors via (\ref{eq:ivw_se}).
    \item \textbf{Diagnose.} Compute between-bootstrap heterogeneity $\hat{\tau}^2$ and, for scaling slopes, the reliability $\mathcal{R}_d$ via (\ref{eq:rel_from_ivw}).
\end{enumerate}
This design ensures that final estimates reflect the most informative item draws, propagate both within- and between-bootstrap uncertainty, and provide built-in diagnostics for the stability of every reported quantity.

\subsection{Meta-analytic aggregation of bootstrapped estimates via mixed-effects regression}\label{apx:meta_regress}
For more robustness when evaluating statistics and out-of-sample prediction for Method 1, we estimate these relationships using two different estimation techniques: the \textit{full information }stochastic \textit{expectation maximizing} Metropolis-Hastings Robbins-Monro (MH-RM) algorithm \citep{cai_metropolis-hastings_2010,chalmers_extended_2015} and the robust \textit{limited information} \textit{likelihood maximizing} Diagonal Weighted Least Squares \citep{rosseel_lavaan_2012} (See Appendix \ref{apx:code} for further estimation details).  Each fitted model yields robust fit indices (we use scaled/robust variants because item responses are non-Gaussian). Let $t_{b,s,r}$ denote a fit statistic from replication $b$, structure $s$, and condition $r\in\{0,1\}$ (true vs.\ randomized assignment). We summarize across replications using a mixed-effects meta-analytic regression:
\begin{equation}\label{eq:metaagg}
    t_{b,s,r} \;=\; \alpha_s + \beta_s\,r + u_b + v_b\,r + \varepsilon_{b,s,r},
\end{equation}

where $u_b$ captures bootstrap-to-bootstrap heterogeneity (item-sample difficulty) and $v_b$ allows the randomization penalty to vary by sampled item pool. % We weight observations by the proportion of successful convergences in each cell to avoid overcounting unstable fits.
This regression yields (i) an estimate $\alpha_s$ of the expected fit of structure $s$ under correct item assignment, (ii) an estimate $\beta_s$ of how much of that fit is explainable by structure rather than flexibility, and (iii) uncertainty that correctly propagates item-sampling variability.

\begin{table*}
\centering
\caption{Estimated Overall Predictive Effects by latent Structure based on meta-analytic regression}
\label{tab:oos_agg_mae_auc}
\begin{talltblr}[         %% tabularray outer open
entry=none,label=none,
note{}={+ p \num{< 0.1}, * p \num{< 0.05}, ** p \num{< 0.01}, *** p \num{< 0.001}},
]                     %% tabularray outer close
{                     %% tabularray inner open
colspec={Q[]Q[]Q[]Q[]Q[]Q[]Q[]Q[]Q[]},
cell{2}{2}={}{halign=c,},
cell{2}{3}={}{halign=c,},
cell{2}{4}={}{halign=c,},
cell{2}{5}={}{halign=c,},
cell{2}{6}={}{halign=c,},
cell{2}{7}={}{halign=c,},
cell{2}{8}={}{halign=c,},
cell{2}{9}={}{halign=c,},
cell{3}{2}={}{halign=c,},
cell{3}{3}={}{halign=c,},
cell{3}{4}={}{halign=c,},
cell{3}{5}={}{halign=c,},
cell{3}{6}={}{halign=c,},
cell{3}{7}={}{halign=c,},
cell{3}{8}={}{halign=c,},
cell{3}{9}={}{halign=c,},
cell{4}{2}={}{halign=c,},
cell{4}{3}={}{halign=c,},
cell{4}{4}={}{halign=c,},
cell{4}{5}={}{halign=c,},
cell{4}{6}={}{halign=c,},
cell{4}{7}={}{halign=c,},
cell{4}{8}={}{halign=c,},
cell{4}{9}={}{halign=c,},
cell{5}{2}={}{halign=c,},
cell{5}{3}={}{halign=c,},
cell{5}{4}={}{halign=c,},
cell{5}{5}={}{halign=c,},
cell{5}{6}={}{halign=c,},
cell{5}{7}={}{halign=c,},
cell{5}{8}={}{halign=c,},
cell{5}{9}={}{halign=c,},
cell{6}{2}={}{halign=c,},
cell{6}{3}={}{halign=c,},
cell{6}{4}={}{halign=c,},
cell{6}{5}={}{halign=c,},
cell{6}{6}={}{halign=c,},
cell{6}{7}={}{halign=c,},
cell{6}{8}={}{halign=c,},
cell{6}{9}={}{halign=c,},
cell{7}{2}={}{halign=c,},
cell{7}{3}={}{halign=c,},
cell{7}{4}={}{halign=c,},
cell{7}{5}={}{halign=c,},
cell{7}{6}={}{halign=c,},
cell{7}{7}={}{halign=c,},
cell{7}{8}={}{halign=c,},
cell{7}{9}={}{halign=c,},
cell{8}{2}={}{halign=c,},
cell{8}{3}={}{halign=c,},
cell{8}{4}={}{halign=c,},
cell{8}{5}={}{halign=c,},
cell{8}{6}={}{halign=c,},
cell{8}{7}={}{halign=c,},
cell{8}{8}={}{halign=c,},
cell{8}{9}={}{halign=c,},
cell{9}{2}={}{halign=c,},
cell{9}{3}={}{halign=c,},
cell{9}{4}={}{halign=c,},
cell{9}{5}={}{halign=c,},
cell{9}{6}={}{halign=c,},
cell{9}{7}={}{halign=c,},
cell{9}{8}={}{halign=c,},
cell{9}{9}={}{halign=c,},
cell{10}{2}={}{halign=c,},
cell{10}{3}={}{halign=c,},
cell{10}{4}={}{halign=c,},
cell{10}{5}={}{halign=c,},
cell{10}{6}={}{halign=c,},
cell{10}{7}={}{halign=c,},
cell{10}{8}={}{halign=c,},
cell{10}{9}={}{halign=c,},
cell{11}{2}={}{halign=c,},
cell{11}{3}={}{halign=c,},
cell{11}{4}={}{halign=c,},
cell{11}{5}={}{halign=c,},
cell{11}{6}={}{halign=c,},
cell{11}{7}={}{halign=c,},
cell{11}{8}={}{halign=c,},
cell{11}{9}={}{halign=c,},
cell{12}{2}={}{halign=c,},
cell{12}{3}={}{halign=c,},
cell{12}{4}={}{halign=c,},
cell{12}{5}={}{halign=c,},
cell{12}{6}={}{halign=c,},
cell{12}{7}={}{halign=c,},
cell{12}{8}={}{halign=c,},
cell{12}{9}={}{halign=c,},
cell{13}{2}={}{halign=c,},
cell{13}{3}={}{halign=c,},
cell{13}{4}={}{halign=c,},
cell{13}{5}={}{halign=c,},
cell{13}{6}={}{halign=c,},
cell{13}{7}={}{halign=c,},
cell{13}{8}={}{halign=c,},
cell{13}{9}={}{halign=c,},
cell{14}{2}={}{halign=c,},
cell{14}{3}={}{halign=c,},
cell{14}{4}={}{halign=c,},
cell{14}{5}={}{halign=c,},
cell{14}{6}={}{halign=c,},
cell{14}{7}={}{halign=c,},
cell{14}{8}={}{halign=c,},
cell{14}{9}={}{halign=c,},
cell{1}{3}={}{halign=c, halign=c,},
cell{1}{5}={}{halign=c, halign=c,},
cell{1}{7}={}{halign=c, halign=c,},
cell{1}{9}={}{halign=c, halign=c,},
cell{2}{1}={}{halign=l,},
cell{3}{1}={}{halign=l,},
cell{4}{1}={}{halign=l,},
cell{5}{1}={}{halign=l,},
cell{6}{1}={}{halign=l,},
cell{7}{1}={}{halign=l,},
cell{8}{1}={}{halign=l,},
cell{9}{1}={}{halign=l,},
cell{10}{1}={}{halign=l,},
cell{11}{1}={}{halign=l,},
cell{12}{1}={}{halign=l,},
cell{13}{1}={}{halign=l,},
cell{14}{1}={}{halign=l,},
cell{1}{1}={}{halign=l, halign=c,},
cell{1}{2}={c=2,}{halign=c, halign=c, halign=c,},
cell{1}{4}={c=2,}{halign=c, halign=c, halign=c,},
cell{1}{6}={c=2,}{halign=c, halign=c, halign=c,},
cell{1}{8}={c=2,}{halign=c, halign=c, halign=c,},
hline{12}={1,2,3,4,5,6,7,8,9}{solid, black, 0.05em},
}                     %% tabularray inner close
\toprule
& (AUC) &  & (AUC) &  & (MAE) &  & (MAE) &  \\ \cmidrule[lr]{2-3}\cmidrule[lr]{4-5}\cmidrule[lr]{6-7}\cmidrule[lr]{8-9}
& Est. & S.E. & Est. & S.E. & Est. & S.E. & Est. & S.E. \\ \midrule %% TinyTableHeader
bifact  Fig. \ref{fig:cfa_diagram}.e, \S \ref{apx:bifact}                 & \textbf{0.905}*** & 0.010 & \textbf{0.954}*** & 0.011 & \textbf{0.244}*** & 0.007 & \textbf{0.208}*** & 0.007 \\
corrbifact  Fig. \ref{fig:cfa_diagram}.f, \S \ref{apx:corrbifact}            & 0.718*** & 0.008 & 0.771*** & 0.009 & 0.357*** & 0.005 & 0.321*** & 0.006 \\
corrfact Fig. \ref{fig:cfa_diagram}.d, \S \ref{apx:corrfact}                & 0.677*** & 0.008 & 0.731*** & 0.009 & 0.385*** & 0.005 & 0.349*** & 0.006 \\
gfact Fig. \ref{fig:cfa_diagram}.b, \S \ref{apx:gfact}                   & 0.868*** & 0.006 & 0.915*** & 0.008 & 0.279*** & 0.004 & 0.248*** & 0.005 \\
hier2ordFig. \ref{fig:cfa_diagram}.c, \S \ref{apx:hier2ord}                 & 0.885*** & 0.010 & 0.938*** & 0.011 & 0.259*** & 0.006 & 0.223*** & 0.007 \\
indepfact    Fig. \ref{fig:cfa_diagram}.a, \S \ref{apx:indepfact}              & 0.900*** & 0.008 & 0.953*** & 0.010 & 0.250*** & 0.005 & 0.215*** & 0.006 \\
SD (Intercept bootstrap) & 0.039    &       & 0.000    &       & 0.027    &       & 0.000    &       \\
SD (Observations)             & 0.103    &       & 0.102    &       & 0.065    &       & 0.066    &       \\
SD (num. items in estimation)     &          &       & 0.006    &       &          &       & 0.004    &       \\
Num.Obs.                      & 1082     &       & 1082     &       & 1082     &       & 1082     &       \\
R2 Marg.                      & 0.406    &       & 0.440    &       & 0.368    &       & 0.402    &       \\
R2 Cond.                      & 0.482    &       &          &       & 0.462    &       &          &       \\
\bottomrule
\end{talltblr}
\end{table*}

\subsection{Inverse-error meta-analytic aggregation of bootstrapped estimates}\label{apx:boots_inverr}

This subsection describes the statistical machinery that converts the raw output of $B$ bootstrap replications---each producing point estimates with associated standard errors from the MH-RM algorithm---into the final aggregated quantities reported throughout this study. The procedure applies uniformly to fixed-effect coefficients (scaling slopes $\hat{\beta}_d^{(b)}$), latent ability scores (posterior BLUPs $\hat{\theta}_{id}^{(b)}$), and variance components ($\hat{\sigma}^{2\,(b)}_{S}$), though we develop it here primarily for the fixed effects and latent scores of Method~3.

\subsubsection{General framework for inverse error aggregation}\label{apx:boots:framework}

Let $\xi$ denote a generic scalar estimand (a single element of $\boldsymbol{\Gamma}$, a latent score $\theta_{id}$, or a variance component). For each bootstrap $b$, the MH-RM algorithm returns
% \[
    $\hat{\xi}^{(b)} \quad \text{with} \quad \mathrm{SE}^{(b)} \equiv \mathrm{SE}\!\bigl(\hat{\xi}^{(b)}\bigr)$.
% \]
The posterior means of the latent trait BLUPs and the asymptotic standard errors from the observed information matrix are direct outputs of MH-RM's stochastic EM convergence diagnostics \citep{cai_metropolis-hastings_2010, chalmers_extended_2015}.

\paragraph{Inverse-variance weights.}
Define bootstrap-specific weights
\begin{equation}\label{eq:ivw_weights}
    w^{(b)} \;=\; \frac{1}{\bigl(\mathrm{SE}^{(b)}\bigr)^2},
\end{equation}
so that $w^{(b)}$ is large when the estimate from replication $b$ is precise and small when it is noisy. In the sequel, we write $W = \sum_{b=1}^{B} w^{(b)}$ for the total weight.

\paragraph{Aggregated point estimate.}
The inverse-variance weighted (IVW) mean is
\begin{equation}\label{eq:ivw_mean}
    \tilde{\xi} \;=\; \frac{\sum_{b=1}^{B} w^{(b)}\,\hat{\xi}^{(b)}}{\sum_{b=1}^{B} w^{(b)}} \;=\; \frac{1}{W}\sum_{b=1}^{B} w^{(b)}\,\hat{\xi}^{(b)}.
\end{equation}

\paragraph{Standard error of the aggregate.}
Under the assumption that bootstrap replications are conditionally independent given the data (which holds by construction, as each replication draws items independently), the variance of $\tilde{\xi}$ is
\begin{equation}\label{eq:ivw_se}
    \mathrm{Var}(\tilde{\xi}) \;=\; \frac{1}{W} \;=\; \frac{1}{\sum_{b=1}^{B} w^{(b)}}, \qquad
    \mathrm{SE}(\tilde{\xi}) \;=\; \frac{1}{\sqrt{W}}.
\end{equation}

\begin{proposition}[Optimality of inverse-variance weighting]\label{prop:ivw_optimal}
    Among all linear combinations $\hat{\xi}_{\mathbf{a}} = \sum_b a_b\,\hat{\xi}^{(b)}$ with $\sum_b a_b = 1$, the inverse-variance weighted mean $\tilde{\xi}$ uniquely minimizes $\mathrm{Var}(\hat{\xi}_{\mathbf{a}})$ when $\hat{\xi}^{(1)},\ldots,\hat{\xi}^{(B)}$ are uncorrelated with known variances $(\mathrm{SE}^{(b)})^2$.
\end{proposition}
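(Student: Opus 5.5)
The plan is to treat this as a constrained quadratic minimization. With the $\hat{\xi}^{(b)}$ uncorrelated and $\sigma_b^2 \equiv (\mathrm{SE}^{(b)})^2 > 0$, the variance of any linear combination is
\[
\mathrm{Var}(\hat{\xi}_{\mathbf{a}}) = \sum_{b=1}^{B} a_b^2 \sigma_b^2 .
\]
The cross terms vanish precisely because of the uncorrelatedness assumption. So the problem reduces to minimizing a strictly convex quadratic form $\mathbf{a}^\top \mathbf{D}\mathbf{a}$, with $\mathbf{D}=\mathrm{diag}(\sigma_1^2,\dots,\sigma_B^2)$ positive definite, over the affine hyperplane $\mathbf{1}^\top\mathbf{a}=1$. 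Note that unbiasedness of $\hat{\xi}_{\mathbf{a}}$ for $\xi$ (if each $\hat{\xi}^{(b)}$ is unbiased) is exactly the constraint $\sum_b a_b = 1$. This explains why the constraint is natural, but it plays no role in the variance calculation itself.

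The cleanest route avoids Lagrange multipliers and uses Cauchy--Schwarz directly:
\[
1 = \Bigl(\sum_b a_b\Bigr)^2 = \Bigl(\sum_b (a_b\sigma_b)(\sigma_b^{-1})\Bigr)^2 \le \Bigl(\sum_b a_b^2\sigma_b^2\Bigr)\Bigl(\sum_b \sigma_b^{-2}\Bigr) = \mathrm{Var}(\hat{\xi}_{\mathbf{a}})\, W .
\]
This uses $w^{(b)}=\sigma_b^{-2}$ from \eqref{eq:ivw_weights} and $W=\sum_b w^{(b)}$. Hence $\mathrm{Var}(\hat{\xi}_{\mathbf{a}})\ge 1/W$ for every admissible $\mathbf{a}$.

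Next I will show the bound is attained by $a_b = w^{(b)}/W$, which gives exactly $\tilde{\xi}$ of \eqref{eq:ivw_mean}. Plugging in,
\[
\sum_b (w^{(b)}/W)^2 \sigma_b^2 = \sum_b w^{(b)}/W^2 = 1/W ,
\]
which also confirms \eqref{eq:ivw_se}.

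For uniqueness I will use the equality case of Cauchy--Schwarz. Equality holds iff the vectors $(a_b\sigma_b)_b$ and $(\sigma_b^{-1})_b$ are proportional, i.e.\ $a_b = c\,\sigma_b^{-2}$ for some constant $c$. The constraint $\sum_b a_b=1$ then forces $c=1/W$. As an alternative check, strict convexity of $\mathbf{a}\mapsto\mathbf{a}^\top\mathbf{D}\mathbf{a}$ on the hyperplane gives uniqueness as well, and the Lagrangian condition $2\mathbf{D}\mathbf{a}=\mu\mathbf{1}$ recovers the same weights.

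No step is genuinely hard. The only point needing care is stating the hypotheses precisely: every $\sigma_b^2$ must be strictly positive and finite, since a zero variance would make $w^{(b)}$ undefined and an infinite one would mean the estimate should be dropped. This is where the fallback rules for degenerate cases mentioned in the pipeline would enter, but under the proposition's hypotheses (known, positive variances) they are excluded.
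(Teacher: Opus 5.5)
Your proof is correct, but your main route differs from the paper's. The paper sets up the Lagrangian for minimizing $\sum_b a_b^2(\mathrm{SE}^{(b)})^2$ subject to $\sum_b a_b=1$. It reads off $a_b=w^{(b)}/W$ from the stationarity conditions and reports the minimized variance $1/W$. It leaves implicit why this critical point is the global, unique minimizer; that step needs the strict convexity of the objective on the constraint hyperplane.

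Your Cauchy--Schwarz argument gives the bound $\mathrm{Var}(\hat{\xi}_{\mathbf{a}})\ge 1/W$ for every admissible $\mathbf{a}$ directly. Its equality case yields uniqueness at the same time, so it certifies the ``uniquely minimizes'' part of the statement with no separate convexity check. Your side remark (strict convexity plus $2\mathbf{D}\mathbf{a}=\mu\mathbf{1}$) is exactly the paper's argument with that missing piece supplied. The same equality case also shows that the unweighted mean ($a_b=1/B$) attains $1/W$ only when all $\mathrm{SE}^{(b)}$ coincide, which is the strict-improvement claim the paper makes later in Proposition~\ref{prop:ivw_noise}.

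The Lagrangian route is more mechanical and extends immediately to correlated estimates: with covariance $\mathbf{C}$ it gives $\mathbf{a}\propto\mathbf{C}^{-1}\mathbf{1}$. Your inequality extends equally well via $(\mathbf{a}^\top\mathbf{1})^2=(\mathbf{a}^\top\mathbf{C}\,\mathbf{C}^{-1}\mathbf{1})^2\le(\mathbf{a}^\top\mathbf{C}\mathbf{a})(\mathbf{1}^\top\mathbf{C}^{-1}\mathbf{1})$. Your explicit requirement that each $(\mathrm{SE}^{(b)})^2$ be strictly positive and finite is a worthwhile precision that the paper's statement leaves unstated.
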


\begin{proof}
    By the method of Lagrange multipliers, minimize $\sum_b a_b^2\,(\mathrm{SE}^{(b)})^2$ subject to $\sum_b a_b = 1$. The stationary conditions yield $a_b = w^{(b)}/W$, recovering (\ref{eq:ivw_mean}). The minimized variance is $1/W$.
\end{proof}

\subsubsection{Handling edge cases}\label{apx:boots:edge}

Two practical complications arise in our setting and must be addressed to ensure robustness.

\paragraph{Missing or degenerate standard errors.}
Occasionally, the MH-RM algorithm may fail to converge for a particular parameter in a given bootstrap, or may return $\mathrm{SE}^{(b)} = 0$ (exact boundary solution) or $\mathrm{SE}^{(b)} = \infty$ (non-identified parameter under that item draw). We handle these as follows:
\begin{equation}\label{eq:ivw_fallback}
    w^{(b)} = \begin{cases}
        1/\bigl(\mathrm{SE}^{(b)}\bigr)^2 & \text{if } 0 < \mathrm{SE}^{(b)} < \infty, \\[4pt]
        \displaystyle\frac{1}{2}\min_{b':\,\mathrm{SE}^{(b')}\text{ valid}} w^{(b')} & \text{if } \mathrm{SE}^{(b)} \text{ is missing or degenerate}.
    \end{cases}
\end{equation}
That is, degenerate replications receive half the weight of the least precise valid replication. This ensures they contribute minimally without being discarded entirely, preserving the unbiasedness of the mean while bounding their influence.

\paragraph{Infinite or zero weights.}
If $\mathrm{SE}^{(b)} = 0$ exactly (perfect information), we cap the weight at a large but finite value; if $\mathrm{SE}^{(b)}$ is undefined (e.g., non-convergence), the replication is assigned the fallback weight above. In our experiments, fewer than 0.5\% of replication--parameter pairs required fallback treatment.

\subsubsection{Application to fixed-effect coefficients}\label{apx:boots:fixed}

For the scaling slope on latent dimension $d \in \{0,1,\ldots,K\}$, the MH-RM algorithm returns in each bootstrap $b$ a coefficient $\hat{\beta}_d^{(b)} \equiv (\hat{\mathbf{B}}^{(b)})_{d,\,x}$ and its standard error $\mathrm{SE}^{(b)}_d$ from the observed information matrix. The aggregated scaling slope and its uncertainty are
\begin{equation}\label{eq:beta_agg}
    \tilde{\beta}_d = \frac{\sum_b w_d^{(b)}\,\hat{\beta}_d^{(b)}}{\sum_b w_d^{(b)}}, \qquad
    w_d^{(b)} = \frac{1}{\bigl(\mathrm{SE}_d^{(b)}\bigr)^2}, \qquad
    \mathrm{SE}(\tilde{\beta}_d) = \frac{1}{\sqrt{\sum_b w_d^{(b)}}}.
\end{equation}
This procedure is applied independently for each latent dimension, yielding the aggregated scaling vector $\tilde{\boldsymbol{\beta}} = (\tilde{\beta}_0, \tilde{\beta}_1, \ldots, \tilde{\beta}_K)$ reported in Table~\ref{tab:lreg_reliab} and Figures~\ref{fig:scaling_law}--\ref{fig:scaling_law_deploy}.

\begin{remark}[Connection to fixed-effects meta-analysis]\label{rem:meta}
    Equation~(\ref{eq:beta_agg}) is the standard fixed-effect meta-analytic estimator. In our context, the ``studies'' are bootstrap replications rather than independent experiments, but the statistical logic is identical: each replication provides an estimate of the \emph{same} population quantity $\beta_d$ (the scaling slope under the item distribution induced by the benchmark ecosystem), and precision varies across replications due to the random item draw. The fixed-effect assumption---that all replications target the same $\beta_d$---is justified because item subsets are drawn from the same benchmark population and models are held fixed across replications.
\end{remark}

\subsubsection{Application to latent ability scores}\label{apx:boots:scores}

Each MH-RM fit produces posterior means (BLUPs) $\hat{\theta}_{id}^{(b)}$ and associated posterior standard deviations $\mathrm{SE}^{(b)}(\hat{\theta}_{id})$ for every model $i$ and latent dimension $d$. The aggregated latent score for model $i$ on dimension $d$ is
\begin{equation}\label{eq:score_agg}
    \tilde{\theta}_{id} = \frac{\sum_b w_{id}^{(b)}\,\hat{\theta}_{id}^{(b)}}{\sum_b w_{id}^{(b)}}, \qquad w_{id}^{(b)} = \frac{1}{\bigl(\mathrm{SE}^{(b)}(\hat{\theta}_{id})\bigr)^2}.
\end{equation}
Note that the weights here are \emph{model-and-dimension-specific}: a model that responds to many high-discrimination items in bootstrap $b$ will have a more precisely estimated $\hat{\theta}_{id}^{(b)}$ (smaller posterior SD) and thus receive higher weight for that replication.

\subsubsection{Percentile ranks from aggregated scores.}\label{apx:rank_agg}
% To estimate general factor induced rankings based on the hierarchical latent mixed regression, for each bootstrap, we take 1,000 draws from the posterior and estimate the predicted distribution of the combined fixed and Best Linear Unbiased Prediction (BLUP) distributions for random ability effects for the general factor $g$ for each LLM. 
The leaderboard comparisons in \S \ref{sec:rank_changes} use percentile ranks derived from the aggregated general-factor scores $\{\tilde{\theta}_{i0}\}_{i=1}^N$ and from the aggregated latent-regression-adjusted scores. Because both are IVW aggregates, they reflect the full bootstrap distribution of item sampling uncertainty while upweighting the most informative item draws.  Percent ranking counts the total number of values less than $\theta_{gi}$, and divides it by the number of observations minus 1: $\frac{1}{n-1}\sum_{x} \mathbb{1}[x< \theta_{gi}]$.

\subsubsection{Decomposing bootstrap variability: within versus between}\label{apx:boots:decomp}

The total variability of $\hat{\xi}^{(b)}$ across bootstraps reflects two distinct sources: \textbf{Within-bootstrap estimation uncertainty} $(\mathrm{SE}^{(b)})^2$: the posterior or asymptotic variance of $\hat{\xi}^{(b)}$ conditional on item set $\tilde{\mathcal{J}}^{(b)}$. This is what the MH-RM information matrix measures. \textbf{Between-bootstrap item-sampling variability} $\tau^2_\xi$: the variance of the true population estimand across item draws, reflecting the sensitivity of the conclusion to the particular items selected. The observed variance of the estimates across bootstraps conflates both:
\begin{equation}\label{eq:total_var_decomp}
    \mathrm{Var}_b\bigl(\hat{\xi}^{(b)}\bigr) \;=\; \tau^2_\xi \;+\; \mathbb{E}_b\bigl[(\mathrm{SE}^{(b)})^2\bigr].
\end{equation}
The first term measures genuine item-sampling heterogeneity (how much the estimand depends on which items are drawn); the second measures average estimation noise. Inverse-variance weighting eliminates the second source from the aggregated point estimate, but both contribute to the uncertainty around any single bootstrap's result.

\begin{proposition}[IVW aggregation removes within-bootstrap noise from the point estimate]\label{prop:ivw_noise}
    Under the decomposition in (\ref{eq:total_var_decomp}), the variance of the IVW aggregate $\tilde{\xi}$ satisfies
    % \[
        $\mathrm{Var}(\tilde{\xi}) \;\leq\; \frac{1}{B}\,\mathrm{Var}_b\bigl(\hat{\xi}^{(b)}\bigr)$,
    % \]
    with equality only when all weights are identical. When weights vary, the IVW estimate achieves strictly lower variance than the unweighted mean, with the improvement proportional to the heterogeneity of $\{(\mathrm{SE}^{(b)})^2\}_b$.
\end{proposition}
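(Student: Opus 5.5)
The plan is to reduce the claim to the arithmetic--harmonic mean inequality applied to the within-bootstrap variances $s_b^2 \equiv (\mathrm{SE}^{(b)})^2$. First I fix what $\mathrm{Var}(\tilde{\xi})$ means. I will use the fixed-effect convention of Proposition~\ref{prop:ivw_optimal} and Remark~\ref{rem:meta}, so that $\mathrm{Var}(\tilde{\xi}) = 1/W = \bigl(\sum_b s_b^{-2}\bigr)^{-1}$ as in \eqref{eq:ivw_se}. For the right-hand side I use the decomposition \eqref{eq:total_var_decomp}, interpreting $\mathbb{E}_b[s_b^2]$ as the empirical average $\bar{s^2} = B^{-1}\sum_b s_b^2$ over the $B$ replications. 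The target inequality is then
\[
\frac{1}{\sum_{b} s_b^{-2}} \;\le\; \frac{1}{B}\Bigl(\tau^2_\xi + \frac{1}{B}\sum_b s_b^2\Bigr).
\]

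The key step is the harmonic--arithmetic mean inequality for the positive numbers $s_1^2,\dots,s_B^2$:
\[
\frac{B}{\sum_b s_b^{-2}} \;\le\; \frac{1}{B}\sum_b s_b^2 .
\]
This follows from Cauchy--Schwarz, since $\bigl(\sum_b s_b\cdot s_b^{-1}\bigr)^2 \le \sum_b s_b^2 \sum_b s_b^{-2}$. Dividing by $B$ gives $1/W \le B^{-2}\sum_b s_b^2 = B^{-1}\bar{s^2}$. Adding the nonnegative term $\tau^2_\xi/B$ then yields the claim.

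The comparison with the unweighted mean is immediate. Under conditional independence, $\mathrm{Var}(\bar{\xi}) = B^{-2}\sum_b s_b^2$, so the same inequality shows that IVW is never worse. This is also a special case of Proposition~\ref{prop:ivw_optimal} with $a_b = 1/B$.

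For the equality and strictness claims, I use the fact that Cauchy--Schwarz is tight iff $s_b$ is proportional to $s_b^{-1}$, that is, iff all $s_b$ are equal, which is the same as all weights being identical. For the improvement being ``proportional to heterogeneity'', I would quantify the gap as the AM--HM gap. I would write $s_b^2 = m(1+\delta_b)$ with $m = \bar{s^2}$ and $\sum_b\delta_b=0$, and then Taylor-expand $\sum_b (1+\delta_b)^{-1}$ to second order. This gives
\[
B^{-1}\bar{s^2} - 1/W \;\approx\; \frac{1}{B}\,\frac{\widehat{\mathrm{Var}}_b(s_b^2)}{\bar{s^2}},
\]
so the gap is, to leading order, proportional to the dispersion of the $s_b^2$.

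The main obstacle is interpretive rather than technical, and I would flag it explicitly in the proof. When $\tau^2_\xi>0$, the quantity $1/W$ understates the true sampling variance of $\tilde{\xi}$: the actual variance is $\sum_b w_b^2 (s_b^2+\tau^2)/W^2$, which is a random-effects expression. Equality in the displayed inequality also requires $\tau^2_\xi=0$ in addition to identical weights, so the statement's ``equality only when all weights are identical'' has to be read under the fixed-effect assumption of Remark~\ref{rem:meta}. If the true variance is wanted, the same Cauchy--Schwarz argument still bounds the $s_b^2$ part. The $\tau^2$ part contributes $\tau^2\sum_b w_b^2/W^2 \ge \tau^2/B$, so the clean inequality then holds only in its within-bootstrap (conditional) form. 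I would state this restriction as part of the proof.
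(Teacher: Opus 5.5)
Your proposal is correct and follows essentially the same route as the paper. Both take $\mathrm{Var}(\tilde{\xi}) = 1/W$ from Proposition~\ref{prop:ivw_optimal} and apply Cauchy--Schwarz to compare the IVW weights with uniform weights $a_b = 1/B$, with equality exactly when all $(\mathrm{SE}^{(b)})^2$ coincide. Your explicit handling of $\tau^2_\xi$ is in fact more careful than the paper's proof: the paper writes $1/(B\bar{w}) = \mathrm{Var}_b(\hat{\xi}^{(b)})/B$ under identical weights and thereby tacitly assumes $\tau^2_\xi = 0$.
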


\begin{proof}
    By Proposition~\ref{prop:ivw_optimal}, $\mathrm{Var}(\tilde{\xi}) = 1/W$. Under identical weights $w^{(b)} = \bar{w}$, we have $W = B\bar{w}$ and $\mathrm{Var}(\tilde{\xi}) = 1/(B\bar{w}) = \mathrm{Var}_b(\hat{\xi}^{(b)})/B$. For heterogeneous weights, the Cauchy-Schwarz inequality applied to $\sum_b a_b^2 (\mathrm{SE}^{(b)})^2$ with optimal $a_b = w^{(b)}/W$ yields strict improvement whenever the $(\mathrm{SE}^{(b)})^2$ are not all equal.
\end{proof}

\subsubsection{Slope reliability from the IVW framework}\label{apx:boots:reliability}

A natural by-product of the IVW aggregation is the \emph{slope reliability index} for each latent dimension $d$. From the bootstrapped estimates $\{\hat{\beta}_d^{(b)}\}_b$, we compute the weighted between-bootstrap variance
\begin{equation}\label{eq:tau_boots}
    \hat{\tau}_d^2 \;=\; \frac{\sum_b w_d^{(b)}\bigl(\hat{\beta}_d^{(b)} - \tilde{\beta}_d\bigr)^2}{\sum_b w_d^{(b)}} \;-\; \frac{1}{\bar{w}_d},
\end{equation}
where $\bar{w}_d = W_d / B$ is the average weight, and the subtracted term is a bias correction removing the expected contribution of within-bootstrap noise (analogous to the DerSimonian-Laird estimator in meta-analysis). The slope reliability index is then
\begin{equation}\label{eq:rel_from_ivw}
    \mathcal{R}_d \;=\; \frac{\tilde{\beta}_d^2}{\tilde{\beta}_d^2 + \hat{\tau}_d^2}.
\end{equation}
High $\mathcal{R}_d$ (close to 1) indicates that the scaling slope for latent dimension $d$ is stable across item subsets and contributor clusters: the estimated law is a property of the ecosystem, not of a particular item draw. Low $\mathcal{R}_d$ signals that the slope is sensitive to item composition, and any single estimate should be interpreted cautiously. Table~\ref{tab:lreg_reliab} reports $\mathcal{R}_d$ for each latent dimension.

\section{Method 1 Fit Statistics}\label{apx:cfa_fit}

\begin{figure}
    \centering
    \includegraphics[width=1\linewidth]{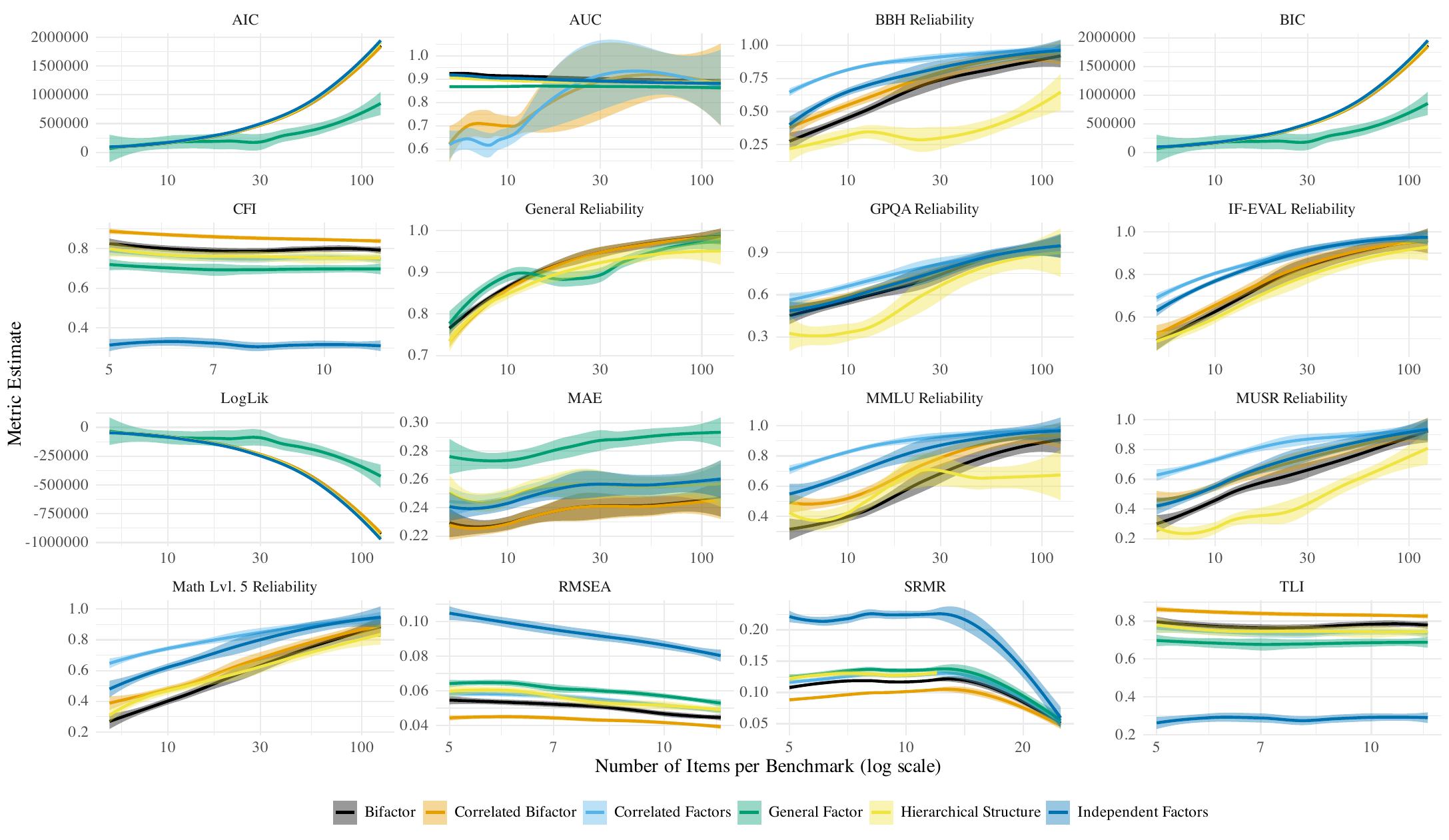}
    \caption{Sensitivity of Fit Estimates to Bootstrap sample size, meta-aggregated across both estimation methods.}
    \label{fig:itemsperboot}
\end{figure}

\paragraph{Localized Diagnostics with Modification Indices}

Global fit indices (e.g., CFI, RMSEA) assess overall model-data congruence but do not identify specific sources of misfit or redundancy. For this, we use modification indices (MIs). An MI estimates the expected decrease in the model's $\chi^2$ statistic if a single fixed parameter (e.g., a cross-loading or a residual covariance fixed at zero) were freely estimated \citep{saris_testing_2009}. This provides a powerful, localized diagnostic for detecting systematic measurement noise that could inflates apparent reliability without increasing construct coverage. More importantly, in the bootstrap context, these serve as generalized indicators of where benchmarks are over and underfitting their respective constructs: this is the extent to which items are measuring each other rather than the latent constructs intended by their respective benchmark.

\begin{proposition}[Modification Indices as Score Tests]
Let $L(\boldsymbol{\theta})$ be the log-likelihood function for the model parameters $\boldsymbol{\theta}$. Consider a model with a constraint $c(\boldsymbol{\theta})=0$. The modification index for relaxing this constraint is the score statistic (or Lagrange Multiplier test) for the null hypothesis $H_0: c(\boldsymbol{\theta})=0$. It is calculated as $s(\hat{\boldsymbol{\theta}}_r)^T I(\hat{\boldsymbol{\theta}}_r)^{-1} s(\hat{\boldsymbol{\theta}}_r)$, where $\hat{\boldsymbol{\theta}}_r$ is the maximum likelihood estimate under the constraint, $s(\cdot)$ is the score vector ($\partial L / \partial c$), and $I(\cdot)$ is the Fisher information matrix for the parameter $c$.
\end{proposition}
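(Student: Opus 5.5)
The plan is to reduce this proposition to Theorem~\ref{thm:mi_lm}, so the work lies in matching notation and checking that the two forms of the statistic agree. First I will write the constraint $c(\boldsymbol{\theta})=0$ as a parameter restriction. Locally, around a point where $\partial c/\partial\boldsymbol{\theta}$ has full row rank, the implicit function theorem lets us choose coordinates $\tilde{\boldsymbol{\theta}}=(\boldsymbol{\theta}_{\text{free}},\psi)$ with $\psi=c(\boldsymbol{\theta})$. For the residual covariances and cross-loadings used in the paper, $c$ is already a coordinate ($\psi=\Theta_{ab}$ or a fixed loading), so this step is trivial there. In these coordinates the restricted MLE $\hat{\boldsymbol{\theta}}_r$ coincides with the constrained optimum $\hat{\boldsymbol{\theta}}$ of Theorem~\ref{thm:mi_lm}.

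Next I will use the first-order conditions at $\hat{\boldsymbol{\theta}}_r$. The Lagrangian $L(\boldsymbol{\theta})-\lambda c(\boldsymbol{\theta})$ has zero gradient in the free coordinates, so the full score vector is $s(\hat{\boldsymbol{\theta}}_r)=(\mathbf{0}^\top,U_\psi)^\top$, and the multiplier is identified with the constraint score, $\hat\lambda=U_\psi$ up to sign. This is the step that justifies calling the modification index a Lagrange-multiplier statistic. Because the free block of the score vanishes, the full quadratic form $s^\top I^{-1}s$ with the complete information matrix reduces, by the block-inverse formula, to $U_\psi\,[I^{-1}]_{\psi\psi}\,U_\psi$, and $[I^{-1}]_{\psi\psi}=\mathcal{I}_{\psi\psi\cdot\theta}^{-1}$. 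That is exactly the statistic in Theorem~\ref{thm:mi_lm}. The $\chi^2_1$ limit under $H_0$ then follows from that theorem, and its identification and interior-point conditions supply the invertibility of $\mathcal{I}_{\theta\theta}$.

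The main obstacle is the statement's phrase ``$I(\cdot)$ is the Fisher information matrix for the parameter $c$''. Read literally as the marginal block $\mathcal{I}_{\psi\psi}$, it gives the wrong statistic, because it ignores nuisance-parameter estimation and is not asymptotically $\chi^2_1$ unless $\mathcal{I}_{\psi\theta}=0$. I will therefore interpret $I(\hat{\boldsymbol{\theta}}_r)^{-1}$ as the $\psi\psi$ entry of the inverse of the full information matrix, which equals the Schur complement, and state this explicitly as the reading under which the proposition holds. I will add a remark that the naive version is valid only under orthogonality of $\psi$ and $\boldsymbol{\theta}_{\text{free}}$.

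Finally, I will note that the relation $\Delta\chi^2\approx\mathrm{MI}$ cited from \citet{saris_testing_2009} is the usual asymptotic equivalence of the score and likelihood-ratio tests under local alternatives. It follows from a second-order Taylor expansion of $L$ around $\hat{\boldsymbol{\theta}}_r$ in the $\psi$ direction, profiling out $\boldsymbol{\theta}_{\text{free}}$.
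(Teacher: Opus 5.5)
Your proposal is correct and follows the paper's route. The paper gives no separate proof of this proposition and relies on the sketch for Theorem~\ref{thm:mi_lm}: the KKT conditions identify the multiplier with the constraint score, the statistic is normalized by the Schur complement $\mathcal{I}_{\psi\psi\cdot\theta}$, and the $\chi^2_1$ limit comes from asymptotic normality of $n^{-1/2}U_\psi$. Your implicit-function reparametrization, your block-inverse check that $s^\top I^{-1}s=U_\psi^2\,\mathcal{I}_{\psi\psi\cdot\theta}^{-1}$ once the free block of the score vanishes, and your reading of ``information for $c$'' as the Schur complement rather than the marginal block $\mathcal{I}_{\psi\psi}$ are all sound, and they make explicit what the sketch only asserts. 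One correction: the block inverse also requires $\mathcal{I}_{\psi\psi\cdot\theta}>0$, i.e.\ local identification of the model with $\psi$ freed, and invertibility of $\mathcal{I}_{\theta\theta}$ alone does not give this.
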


% Table of Likelihood ratio test significance

\begin{table*}
\centering
\caption{Likelihood Ratio Tests of Statistical Significance across Bootstraps}
\label{tab:lrt_boots}
\resizebox{\ifdim\width>\linewidth\linewidth\else\width\fi}{!}{
\begin{tabular}{lccccc}
\toprule
\multicolumn{1}{c}{Simpler Structure $\rightarrow$} & \multicolumn{5}{c}{Proportion Significant LRT ($\alpha =  0.05$)} \\
\cmidrule(l{3pt}r{3pt}){2-6}
% \makecell{Complex Structure $\downarrow$} & General & Independent & Correlated & Bifactor & Hierarchical \\
% \midrule
% Independent Factors & 0.42 & -- &   --&   --&  --\\
% General Factor & -- & 0.58 &   --&   --&  --\\
% Hierarchical Structure & 1.00 & 1.00 &   --&   --& 0.00\\
% Correlated Factors & 1.00 & 1.00 & --&   --& 0.91\\
% Bifactor & 1.00 & 1.00 & 1.00& --& 1.00\\
% Correlated Bifactor & 1.00 & 1.00 & 1.00& 1.00& 1.00\\

\makecell{Complex Structure $\downarrow$} & Independent & General & Hierarchical & Correlated & Bifactor\\
\midrule
General Factor & $0.58^\dagger$ & -- & -- & -- & --\\
Independent Factors & -- & $0.42^\dagger$ & -- & -- & --\\
Hierarchical Structure & 1.00 & $1.00^\dagger$ & -- & -- & --\\
Correlated Factors & 1.00 & $1.00^\dagger$ & $0.91^\dagger$ & -- & --\\
Bifactor & 1.00 & 1.00 & $1.00^\dagger$ & $1.00^\dagger$ & -- \\
Correlated Bifactor & 1.00 & 1.00 & $1.00^\dagger$ & 1.00 & 1.00 \\

\bottomrule
\end{tabular}}
\caption*{\footnotesize Values represent the proportion of total bootstraps where likelihood ratio tests indicated that the more complex model was statistically significantly a better fit. $\dagger$ indicates pseudo-LRT for non-nested models using evidence ratio and combined information criteria indices.}
\end{table*}

\begin{figure}[H]
    \centering
    \includegraphics[width=0.5\linewidth]{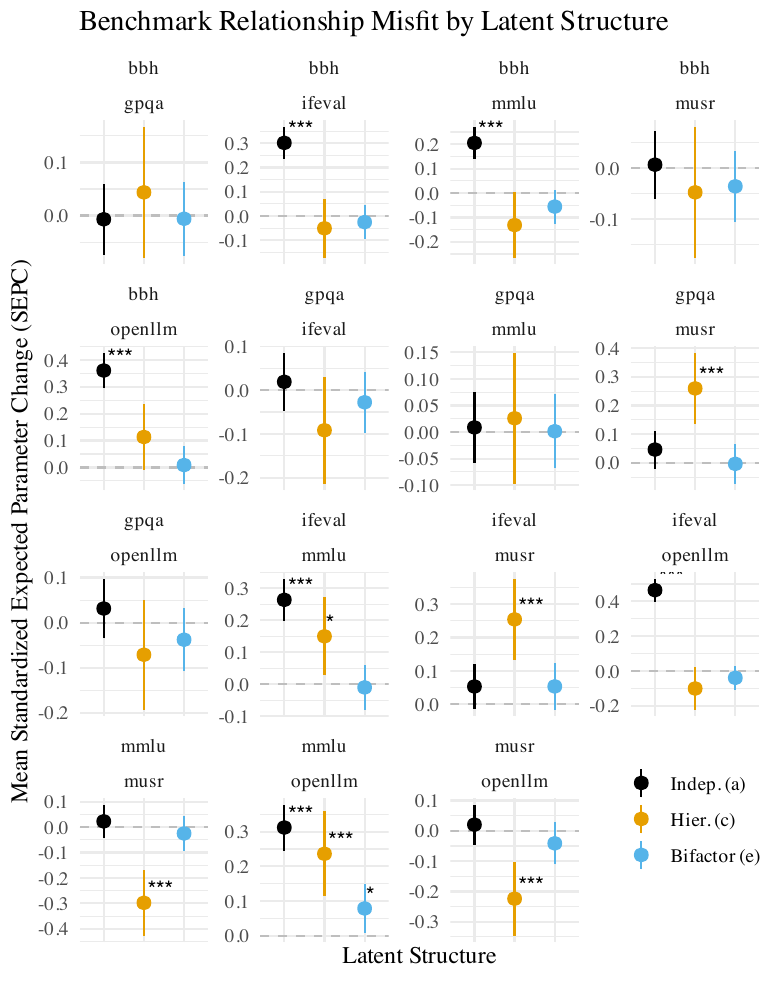}
    \caption{Estimated mean misfit of relationships between benchmark constructs by latent structure using Langrangian Multiplier test standardized expected parameter changes over bootstraps. Higher SEPC values indicate the model underestimates the inter-benchmark relationships, and lower values indicate overestimation. Labels above each plot indicate which two benchmarks and stars indicate levels of significance: ‘***’$< 0.001$; ‘**’$< 0.01$; ‘*’$< 0.05$.  }
    \label{fig:bench_rel_misfit}
\end{figure}

\begin{figure*}[h!]
    \centering
    \includegraphics[width=1\linewidth]{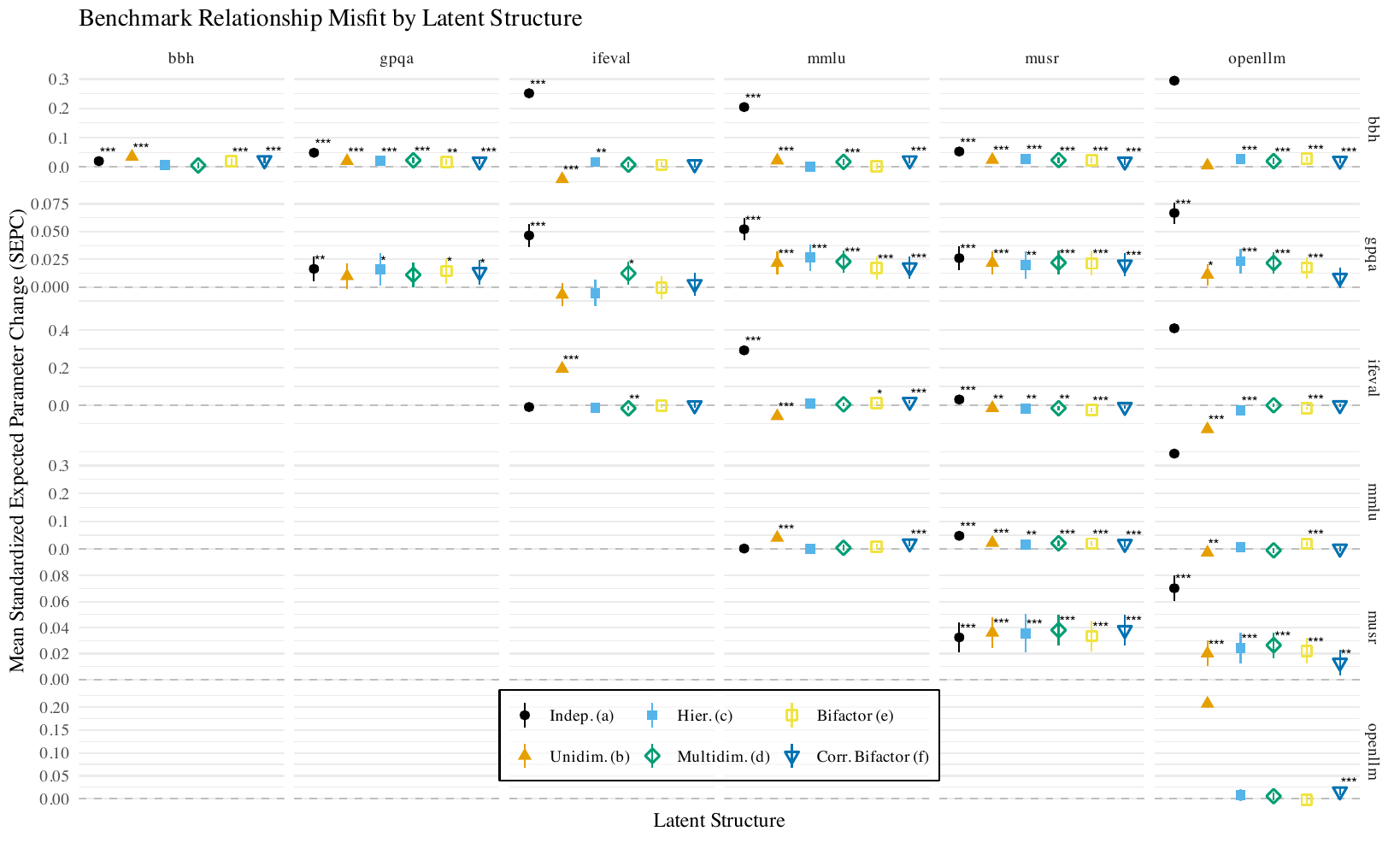}
    \caption{Estimated mean misfit of relationships between benchmark constructs by latent structure using Langrangian Multiplier test standardized expected parameter changes after freeing interitem residual variances over bootstraps. Higher SEPC values indicate the hypothesized latent structure underestimates the inter-benchmark relationships, and lower values indicate overestimation. Labels above each plot indicate which two benchmarks and stars indicate levels of significance: ‘***’$< 0.001$; ‘**’$< 0.01$; ‘*’$< 0.05.$}
    \label{fig:bench_item_misfit}
\end{figure*}

\begin{figure*}[th]
    \centering
    \includegraphics[width=1\linewidth]{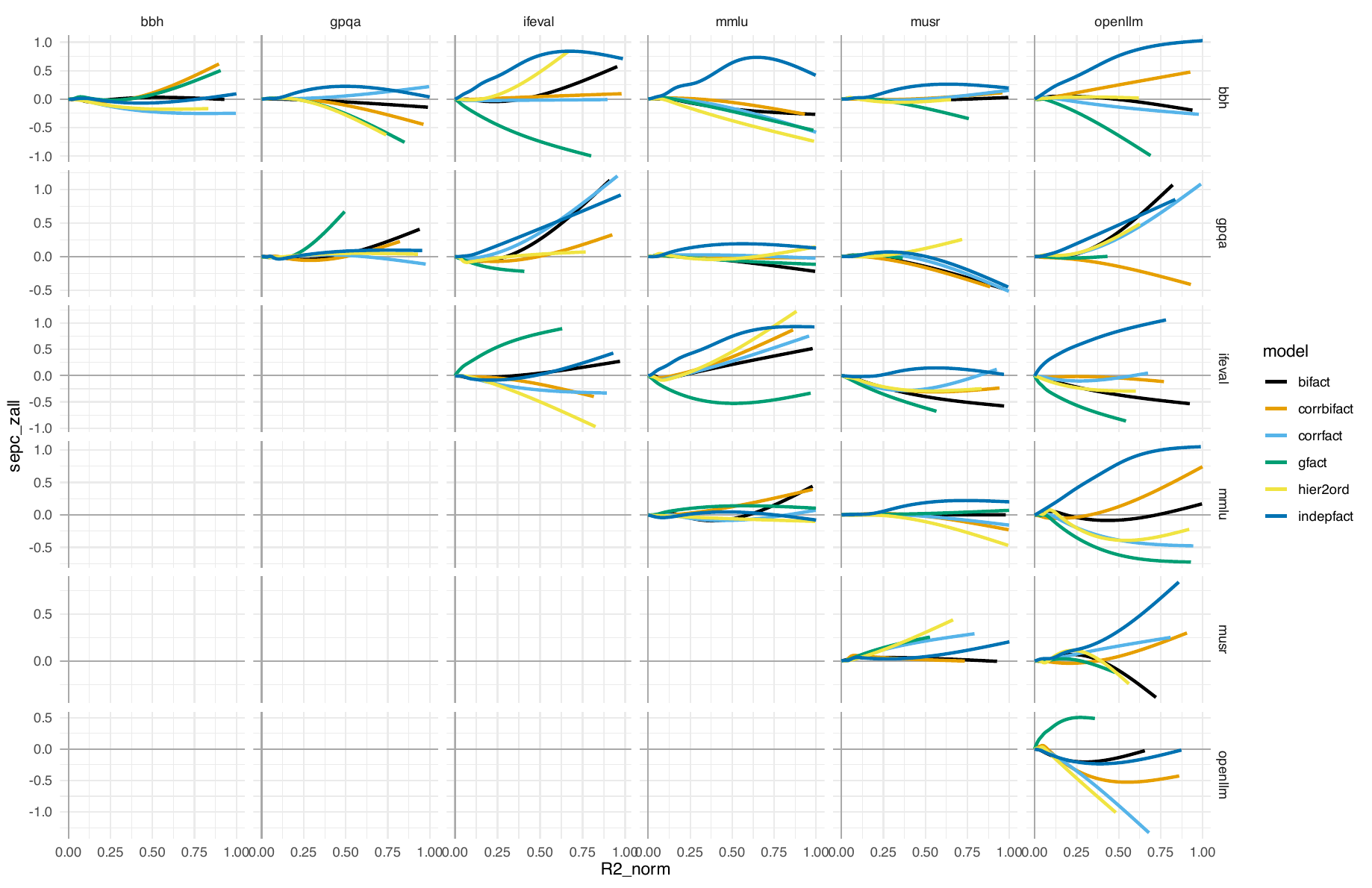}
    \caption{Average Effect Size Changes to models based on single-degree Lagrangian Multiplier tests of residual variance. X axis represents the amount of variance explained implied by the score tests; more positive values indicate that the test is capturing more variation. Y axis represents the actual effect sizes; more positive and more negative values mean that the latent structure is underestimating and overestimating, respectively, the relationships. The columns and rows indicate from which benchmarks the residual variance is being freed. }
    \label{fig:mod_indices_residual_error}
\end{figure*}

\subsection{Permutation Controls}
Additionally, the traditional fit results from the permutation controls from the DWLS estimation are found in Table \ref{tab:trad_fit_perm}.

\begin{table}
\centering
\caption{Traditional Fit Statistics under Meta-analytic bootstrap aggregation from DWLS Estimator}
\label{tab:trad_fit_perm}
\resizebox{\ifdim\width>\linewidth\linewidth\else\width\fi}{!}{
\begin{tabular}{lllllllll}
\toprule
 term & RMSEA fit / Est. & RMSEA fit / S.E. & CFI fit / Est. & CFI fit / S.E. & TLI fit / Est. & TLI fit / S.E. & SRMR fit / Est. & SRMR fit / S.E.\\
\midrule
 structbifact & 0.049*** & 0.001 & 0.843*** & 0.008 & 0.829*** & 0.008 & 0.113*** & 0.002\\
 structcorrbifact & 0.043*** & 0.001 & 0.886*** & 0.007 & 0.874*** & 0.008 & 0.096*** & 0.002\\
 structcorrfact & 0.054*** & 0.001 & 0.804*** & 0.007 & 0.793*** & 0.008 & 0.123*** & 0.002\\
 structgfact & 0.059*** & 0.001 & 0.760*** & 0.007 & 0.750*** & 0.008 & 0.131*** & 0.002\\
 structhier2ord & 0.054*** & 0.001 & 0.803*** & 0.010 & 0.794*** & 0.011 & 0.123*** & 0.002\\
 structindepfact & 0.090*** & 0.001 & 0.344*** & 0.007 & 0.315*** & 0.008 & 0.215*** & 0.002\\
 \addlinespace
 structbifact × randTRUE & 0.009*** & 0.001 & -0.046*** & 0.005 & -0.051*** & 0.005 & 0.012*** & 0.001\\
 structcorrbifact × randTRUE & 0.006*** & 0.001 & -0.017*** & 0.004 & -0.019*** & 0.005 & 0.008*** & 0.001\\
 structcorrfact × randTRUE & 0.005*** & 0.001 & -0.025*** & 0.004 & -0.027*** & 0.005 & 0.008*** & 0.001\\
 structhier2ord × randTRUE & 0.005* & 0.003 & -0.025* & 0.011 & -0.026* & 0.012 & 0.007** & 0.003\\
 structindepfact × randTRUE & 0.015*** & 0.001 & -0.231*** & 0.004 & -0.242*** & 0.005 & 0.022*** & 0.001\\
 \midrule
 SD (intercept item batch) & 0.007 &  & 0.094 &  & 0.098 &  & 0.019 & \\
 SD (rand item batch) & 0.001 &  & 0.011 &  & 0.010 &  & 0.002 & \\
 Cor (intercept $\sim$ rand item batch) & -1.000 &  & 1.000 &  & 1.000 &  & 1.000 & \\
 SD (Observations) & 0.006 &  & 0.046 &  & 0.049 &  & 0.011 & \\
 \midrule
 Num.Obs. & 2062 &  & 2341 &  & 2341 &  & 2341 & \\
 R2 Marg. & 0.905 &  & 0.966 &  & 0.965 &  & 0.944 & \\
 AIC & -14225.2 &  & -6610.5 &  & -6333.2 &  & -13329.5 & \\
 BIC & -14140.8 &  & -6524.1 &  & -6246.8 &  & -13243.1 & \\
 RMSE & 0.01 &  & 0.04 &  & 0.05 &  & 0.01 & \\
\bottomrule
\end{tabular}}
\end{table}

% \section{Variance Decomposition Supplement}\label{apx:var_decomp}

\section{Variance Decomposition}\label{apx:var_decomp}

\subsection{Relationships in Variance Decomposition}\label{apx:vardecomp}

\subsubsection{From Variance to Reliability}
Variance components are the building blocks for estimating reliability. The Generalizability Coefficient (G-coefficient) provides a strong measure of reliability for a given measurement context, defined as the ratio of ``true'' variance to expected observed variance:
\begin{equation}
    G = \frac{\sigma^2_{\text{true}}}{\sigma^2_{\text{true}} + \sigma^2_{\text{error}}}
\end{equation}
The definitions of ``true'' and ``error'' variance depend on the decision being made. For example, if we wish to assess the reliability of ranking models by their average score across our $n_B=6$ benchmarks, the ``true'' variance is the variance associated with the model's stable characteristics (e.g., $\sigma^2_A, \sigma^2_C, \sigma^2_D$ and their non-benchmark interactions). The ``error'' variance includes all components that interact with the Benchmark facet, as these cause a model's rank to change from one benchmark to another. For a relative decision (ranking), the G-coefficient for a model's average score across $n_B$ benchmarks is:
\begin{equation}
    G_{\text{relative}} = \frac{\sigma^2_{\text{Universe Score}}}{\sigma^2_{\text{Universe Score}} + \sigma^2_{\text{Relative Error}}}
\end{equation}
where $\sigma^2_{\text{Universe Score}}$ is the sum of variance components for the object of measurement (e.g., model facets A, C, D) and $\sigma^2_{\text{Relative Error}}$ is the sum of all interaction variances involving the benchmark facet (e.g., $\sigma^2_{AB}/n_B, \sigma^2_{BC}/n_B, \dots$) plus the residual error. This framework allows us to precisely estimate the trustworthiness of leaderboard rankings.

\begin{itemize}
    \item \textbf{Architecture facet (A):} groups by base architecture and model generation (e.g., \texttt{architecture:generation}). This captures systematic differences between underlying model families and releases.
    \item \textbf{Benchmark facet (B):} groups by benchmark identity \texttt{bench} to capture persistent differences in difficulty and scaling across tasks.
    \item \textbf{Contributor facet (C):} groups by the Hugging Face account and submission-related availability metadata. Operationally, we use \texttt{author:removed:not\_avail}, where \texttt{removed} indicates whether the base model in the lineage is removed and \texttt{not\_avail} indicates whether the evaluated model remains available on the hub. This facet measures stable differences attributable to the submission source and associated behaviors (without asserting causal interpretation).
    \item \textbf{Deployment facet (D):} groups by tuning approaches/ deployment type \texttt{type:chat\_template:mo\_e: merged:precision}. This captures differences between base vs.\ chat models, use of chat templates, mixture-of-experts, model merging, and precision/quantization.
\end{itemize}

% \subsubsection{Fully-crossed random-effects model}
% Let $A(i)$ denote the architecture group of model submission $i$, $C(i)$ the contributor group, and $D(i)$ the deployment group. Let $b\in\mathcal{B}$ index benchmark (facet $B$). For an observation $y_{i b}$, the fully-crossed random-intercepts model is:
% \begin{align}
% y_{i b}
% &= \mu
% + u^{B}_{b}
% + u^{A}_{A(i)}
% + u^{C}_{C(i)}
% + u^{D}_{D(i)}
% + u^{BA}_{b,A(i)}
% + u^{BC}_{b,C(i)}
% \nonumber\\
% &\quad
% + u^{BD}_{b,D(i)}
% + u^{AC}_{A(i),C(i)}
% + u^{AD}_{A(i),D(i)}
% + u^{CD}_{C(i),D(i)}
% \nonumber\\
% &\quad
% + u^{BAC}_{b,A(i),C(i)}
% + u^{BAD}_{b,A(i),D(i)}
% + u^{BCD}_{b,C(i),D(i)}
% \nonumber\\
% &\quad
% + u^{ACD}_{A(i),C(i),D(i)}
% + u^{BACD}_{b,A(i),C(i),D(i)}
% + \varepsilon_{ib}.
% \label{eq:lmm_full}
% \end{align}
% We estimate these variance components ($\sigma^2_A, \sigma^2_B, \dots, \sigma^2_{\epsilon}$) using Restricted Maximum Likelihood (REML). 

\subsection{Reliability of scaling effects and rank ordering under a four-facet crossed mixed model}
\label{apx:scaling_reliability}

To study the \emph{reliability of scaling laws} in benchmark ecosystems, it is not enough to report a fixed-effect slope $\hat\beta$ for model size. We must also quantify (i) how much of cross-model rank ordering is \emph{stably} attributable to size, (ii) how strongly the size--score relationship varies across the ecosystem facets (benchmarks, contributors, deployments, architectures), and (iii) how precisely we can generalize the estimated slope to new benchmarks and ecosystem conditions.

\paragraph{Model.}
Let $y_i$ denote a per-benchmark score observation with associated log-size covariate $x_i=\log_{10}(\#\text{params}_i)$ and facet indices
\[
a(i)\in\mathcal{A},\quad b(i)\in\mathcal{B},\quad c(i)\in\mathcal{C},\quad d(i)\in\mathcal{D}
\]
for Architecture (A), Benchmark (B), Contributor (C), and Deployment (D). We fit a fully crossed random-intercept/random-slope LMM:
\begin{align}
y_i
&= \mu + \beta x_i
+ \sum_{S\in\mathcal{S}} \Big(u^{S}_{g_S(i)} + v^{S}_{g_S(i)} x_i\Big)
+ \varepsilon_i,
\label{eq:lmm_size_full}
\end{align}
where $\mathcal{S}$ is the set of random-effect terms (main effects and interactions) included by the measurement design (e.g., $A,B,C,D,AB,AC,\dots,ABCD$), and $g_S(i)$ maps observation $i$ to the level of term $S$ (e.g., $g_{AB}(i)=(a(i),b(i))$). We assume
% \[
$u^{S}_{g}\sim\mathcal{N}(0,\sigma^2_{S,0}),
\;
v^{S}_{g}\sim\mathcal{N}(0,\sigma^2_{S,1}),
\;
\varepsilon_i\sim\mathcal{N}(0,\sigma^2_\varepsilon)$,
% \]
and, to reduce overfitting in a high-dimensional crossed design, we use the 
%``double-bar'' 
independence parameterization
% \[
$\mathrm{Cov}\!\big(u^{S}_{g},v^{S}_{g}\big)=0
\; \forall S,g$.
% \]
% (If data permit, this can be relaxed; Appendix~X reports sensitivity to allowing intercept--slope covariance for selected terms.)
unconditional variance in $y$ is only valid after clearly defining whether the variance is (i) unconditional over $x$ and facets, (ii) conditional on the realized $x$'s in-sample, and (iii) whether benchmark main effects are included (which otherwise dominate the total variance budget). Below we define inferentially precise quantities aligned with ``reliability of scaling laws'' and ``reliability of size-induced ranking''.
\begin{figure}
    \centering
    \includegraphics[width=0.5\linewidth]{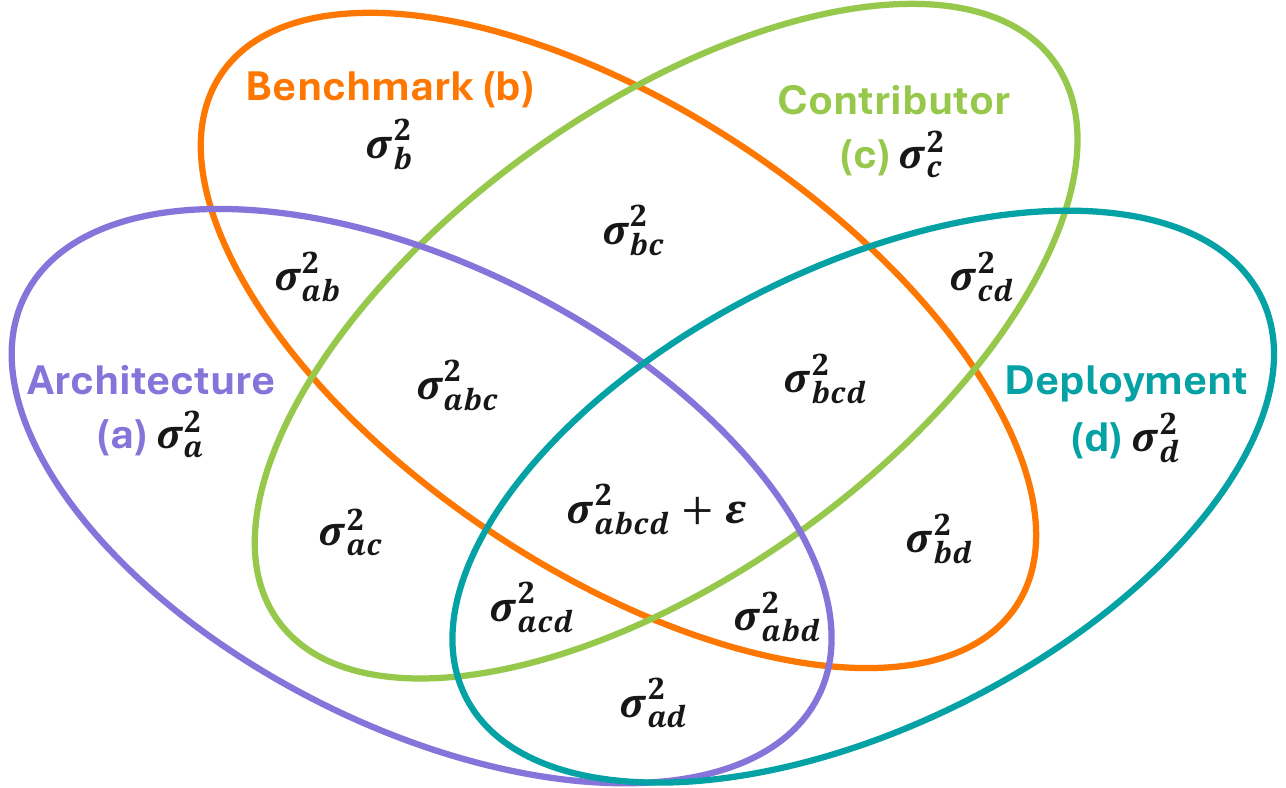}
    \caption{Variance Decomposition diagram for four fully crossed facets of variation}
    \label{fig:full_venn}
\end{figure}

\subsection{Variance Decomposition Robustness Checks for Sensitivity to Categorizations}\label{apx:robust_var_decomp}

\paragraph{Challenge: Sensitivity to categorization.} Variance decompositions can be sensitive to groupings used. \textbf{Solution 1:}  \textbf{\emph{Multi-granularity robustness checks.}} We re-estimate main-effects variance shares at three levels of categorical resolution (Table~\ref{tab:granularity_sensitivity}) to verify that the observed ordering of $B \gg C > A > D$ is not an artifact of a particular coding scheme. \textbf{Solution 2:} \textbf{\emph{Bayesian estimation.}} We complement Restricted ML with a Bayesian MCMC framework (4~chains, weak priors) to obtain full posterior distributions over variance shares, providing 95\% HDI intervals for each component (Table~\ref{tab:bayesfit}).

\subsubsection{Granularity Tests}
We conducted robustness analyses across multiple levels of categorical granularity using Bayesian estimation with 95\% credible intervals. Table~\ref{tab:granularity_sensitivity} reports the percentage of variance attributable to each facet under main-effects-only models.

\begin{table}[htbp]
\centering
\caption{\textbf{Categorization Robustness checks:} Variance-share estimates (\%) across alternative granularity specifications. Numbers in parentheses indicate the number of levels within each facet.}
\label{tab:granularity_sensitivity}
\resizebox{\ifdim\width>\linewidth\linewidth\else\width\fi}{!}{
\begin{tabular}{lcccc}
\toprule
Facet & Finest & Fine (Table~\ref{tab:varcomp_slope}) & Coarsest & Coarsest Groups \\
\midrule
(A) Architecture
& 11.0 (1120)
& 10.1 (119)
& 9.2 (54)
& LLM ``family'' \\

(B) Benchmark
& 46.4 (6) 
& 47.2 (6) 
& 46.8 (6) 
& Benchmarks \\

(C) Contribution
& 15.8 (1163)
& 13.8 (878)
& 13.8 (714)
& HF Account \\

(D) Deployment
& 2.3 (74)
& 3.8 (61)
& 3.5 (7)
& HF model ``type'' \\
\bottomrule
\end{tabular}}
\end{table}

The primary finding---that contributor variance exceeds architecture variance---is robust across all granularity specifications considered. More broadly, the ordering of variance components remains stable across analyses, with benchmark effects dominating contributor effects, followed by architecture and deployment effects: $B \gg C > A > D.$ This stability suggests that the substantive conclusions of the analysis are not artifacts of any particular categorization scheme.

\subsection{Reliability of the fixed scaling slope: generalizability of $\beta$}
\label{sec:beta_generalizability}

The fixed effect $\beta$ summarizes the average size--score relationship across the ecosystem. Its reliability is naturally expressed by an inferential precision measure for $\hat\beta$ under the fitted mixed model.

% \paragraph{Model-based standard error and robust alternatives.}
% Let $\hat\beta$ be the REML/ML estimate. Under correct model specification, $\hat\beta$ is asymptotically normal:
% % \[
% $\hat\beta \;\dot\sim\; \mathcal{N}\!\big(\beta,\;\mathrm{Var}(\hat\beta)\big), 
% % \qquad
% \mathrm{Var}(\hat\beta) = \big(\mathbf{X}^\top \hat{\mathbf{V}}^{-1}\mathbf{X}\big)^{-1}_{\beta\beta}$,
% % \]
% where $\mathbf{V}$ is the marginal covariance induced by all random effects in~\eqref{eq:lmm_size_full}.
% Because ecosystem data are often heteroskedastic and mildly misspecified, we additionally recommend a cluster-robust sandwich estimator with clustering at the finest reliably observed cell (e.g., $ABCD$ if feasible) or at contributor/source:
% \[
% \widehat{\mathrm{Var}}_{\text{CR}}(\hat\beta)
% =
% (\mathbf{X}^\top \hat{\mathbf{V}}^{-1}\mathbf{X})^{-1}
% \Big(\sum_{g\in G}\mathbf{X}_g^\top \hat{\mathbf{V}}_g^{-1}\hat{\mathbf{r}}_g\hat{\mathbf{r}}_g^\top \hat{\mathbf{V}}_g^{-1}\mathbf{X}_g\Big)
% (\mathbf{X}^\top \hat{\mathbf{V}}^{-1}\mathbf{X})^{-1}.
% \]
% We report both model-based and robust SEs when they differ materially. 

\subsubsection{Restricted Maximum Likelihood and Bayesian Estimations}
For the base model, we report its Restricted Maximum Likelihood estimation in Table \ref{tab:varcomp_base} and Bayesian estimation in Table \ref{tab:bayesfit}

\begin{table}
\centering
\caption{Variance Component Estimates, Covariate Model \eqref{eq:gstudy_slopes}}
\label{tab:varcomp_slope_apx}
\resizebox{\ifdim\width>\linewidth\linewidth\else\width\fi}{!}{
\begin{tabular}{lrrr}
\toprule
\textbf{Variation Source} & \textbf{Var. ($\sigma^2$)} & \textbf{$p_S$ (\%)}   &\textbf{$\text{PSI}_S$} \\
\midrule
A (Architecture) & 18.73 & 6.34 & 0.13 \\
B (Benchmark) & 136.30 & 46.13 & 0.30 \\
C (Contributor) & 23.09 & 7.82 & 0.17 \\
D (Deployment) & 3.26 & 1.10 & 0.03 \\
C$\times$D & 3.26 & 1.11 & 0.01 \\
A$\times$C & 12.20 & 4.14 & 0.06 \\
B$\times$A & 10.97 & 3.71 & 0.05 \\
A$\times$D & 0.91 & 0.31 & 0.00 \\
B$\times$D & 13.55 & 4.58 & 0.02 \\
B$\times$C & 7.87 & 2.66 & 0.03 \\
B$\times$C$\times$D & 5.43 & 1.84 & 0.05 \\
B$\times$A$\times$C & 11.12 & 3.77 & 0.07 \\
B$\times$A$\times$D & 1.72 & 0.58 & 0.01 \\
C$\times$A$\times$D & 13.79 & 4.66 & 0.07 \\
$\epsilon_{\text{resid}}$ (+ A$\times$B$\times$C$\times$D)  & 33.24 & 11.25 &  \\
\bottomrule
\end{tabular}}
\begin{tablenotes}
    \item \small Variance component estimates $\sigma^2$ and $p_S$ represent the total variance associated with both slope and intercept. The base model ~\eqref{eq:gstudy} can be found in Table \ref{tab:varcomp_base} and Bayesian posterior draws of $p_S$ can be found in Table \ref{tab:bayesfit}.
\end{tablenotes}
\end{table}

\begin{table}[h!]
\centering
\caption{Variance Component Estimates (No Covariate Model).}
\label{tab:varcomp_base}
\begin{tabular}{@{}llrr@{}}
\toprule
\textbf{Source of Variation} & & \textbf{Var. ($\sigma^2$)} & \textbf{\% of Total} \\ \midrule
\multicolumn{4}{l}{\textit{Main Effects}} \\
(B) Benchmark &  & 150.0 & 43.0\% \\
(C) Contributor &  & 29.91 & 8.6\% \\
(A) Architecture &  & 14.5 & 4.17\% \\
(D) Deployment Type &  & 6.64 & 1.91\% \\
\midrule
\multicolumn{4}{l}{\textit{Two-Way Interactions}} \\
B $\times$ C &  & 12.7 & 3.66\% \\
A $\times$ C &  & 12.3 & 3.5\% \\
B $\times$ A &  & 10.21 & 2.94\% \\
B $\times$ D &  & 17.2 & 4.9\% \\
C $\times$ D &  & 2.64 & 0.76\% \\
A $\times$ D &  & 0.77 & 0.22\% \\
\midrule
\multicolumn{4}{l}{\textit{Higher-Order Interactions}} \\
Three-Way Interactions & & 38.9 & 11.2\% \\
% Four-Way Interactions & & 2.08 & 0.6\% \\
$\epsilon_{\text{resid}}$ (+ A$\times$B$\times$C$\times$D) &  & 52.0 & 15.0\% \\
\bottomrule
\end{tabular}
\end{table}

\begin{table}[!h]
\centering
\caption{Bayesian Estimation posterior draws for percentage of variance ($p_S$) from base variance decomposition model \eqref{eq:gstudy}}\label{tab:bayesfit}
\resizebox{\ifdim\width>\linewidth\linewidth\else\width\fi}{!}{
\begin{tabular}{lrrrrrrrrrr}
\toprule
Facet & mean & median & SD & MAD & q5 & q95 & $\hat{R}$ & MAP & HDI$_{low}$ & HDI$_{high}$\\
\midrule
A & 0.05 & 0.04 & 0.02 & 0.02 & 0.02 & 0.09 & 1.01 & 0.04 & 0.02 & 0.10\\
A$\times$C & 0.03 & 0.03 & 0.01 & 0.01 & 0.01 & 0.05 & 1.02 & 0.02 & 0.01 & 0.06\\
A$\times$D & 0.00 & 0.00 & 0.00 & 0.00 & 0.00 & 0.01 & 1.02 & 0.00 & 0.00 & 0.01\\
C & 0.08 & 0.08 & 0.03 & 0.03 & 0.04 & 0.13 & 1.01 & 0.08 & 0.03 & 0.13\\
C$\times$A$\times$D & 0.08 & 0.08 & 0.02 & 0.03 & 0.04 & 0.12 & 1.01 & 0.09 & 0.03 & 0.12\\
\addlinespace
C$\times$D & 0.01 & 0.00 & 0.01 & 0.00 & 0.00 & 0.02 & 1.15 & 0.00 & 0.00 & 0.02\\
B & 0.50 & 0.49 & 0.15 & 0.17 & 0.27 & 0.75 & 1.01 & 0.46 & 0.25 & 0.79\\
B$\times$A & 0.03 & 0.03 & 0.01 & 0.01 & 0.01 & 0.04 & 1.01 & 0.03 & 0.01 & 0.04\\
B$\times$A$\times$C & 0.02 & 0.02 & 0.01 & 0.01 & 0.01 & 0.03 & 1.01 & 0.02 & 0.01 & 0.03\\
B$\times$A$\times$D & 0.00 & 0.00 & 0.00 & 0.00 & 0.00 & 0.01 & 1.01 & 0.00 & 0.00 & 0.01\\
\addlinespace
B$\times$C & 0.03 & 0.03 & 0.01 & 0.01 & 0.02 & 0.05 & 1.00 & 0.03 & 0.01 & 0.05\\
B$\times$C$\times$D & 0.01 & 0.01 & 0.00 & 0.00 & 0.00 & 0.01 & 1.01 & 0.01 & 0.00 & 0.01\\
B$\times$D & 0.04 & 0.04 & 0.01 & 0.02 & 0.02 & 0.07 & 1.01 & 0.05 & 0.02 & 0.07\\
D & 0.02 & 0.02 & 0.01 & 0.01 & 0.01 & 0.05 & 1.00 & 0.01 & 0.00 & 0.06\\
sigma & 0.14 & 0.14 & 0.04 & 0.05 & 0.07 & 0.20 & 1.01 & 0.15 & 0.06 & 0.21\\
\bottomrule
\end{tabular}}
\begin{tablenotes}
    \small
    \item To directly estimate sensitivity of variance shares to categorizations, we re-estimate Table \ref{tab:varcomp_base} base model using a MCMC Bayesian framework with 4 chains and weak priors. The table values represent proportions of variance explained by component, $p_S$, calculated directly from posterior draws. %The 95\% CIs for C and A do not overlap at their central tendencies, supporting the robustness of our headline finding despite dataset limitations.
\end{tablenotes}
\end{table}

\paragraph{A reliability coefficient for slope $\beta$.}
For comparability across datasets, we report a dimensionless reliability-like coefficient for the scaling slope:
\[
R_\beta \;\equiv\; \frac{\hat\beta^2}{\hat\beta^2 + \widehat{\mathrm{Var}}(\hat\beta)} \in [0,1].
\]
This is the fraction of (squared) signal retained after accounting for estimation noise, analogous to a shrinkage factor. High $R_\beta$ indicates that the estimated scaling law is stable under the current measurement design; low $R_\beta$ indicates that the fixed scaling effect is not reliably estimable given crossed ecosystem variability. 

\subsection{Reliability of rank ordering attributable to size}
\label{sec:rank_reliability_size}

Scaling laws are often used to justify \emph{rank ordering} models by predicted performance from size alone. We therefore define reliability at the level of the \emph{size-induced predictor} and its stability relative to ecosystem noise.

Let the size-only linear predictor be
\[
\hat{y}^{(\text{size})}_i \;=\; \hat\mu + \hat\beta x_i.
\]
A natural population-level reliability of size-driven ranking is the fraction of marginal variance in $y$ explained by $\hat{y}^{(\text{size})}$:
\begin{equation}
\Omega_x
\;\equiv\;
\frac{\mathrm{Var}(\beta x)}{\mathrm{Var}(y)}
=
\frac{\beta^2\,\mathrm{Var}(x)}{\beta^2\mathrm{Var}(x) + \mathrm{Var}\!\Big(\sum_{S}u^S_{g_S}\Big) + \mathrm{Var}(\varepsilon)},
\label{eq:omega_x_def}
\end{equation}
where $\mathrm{Var}(y)$ is the unconditional marginal variance under~\eqref{eq:lmm_size_full}. In our model,
% \[
$\mathrm{Var}\!\Big(\sum_{S}u^S_{g_S}\Big) \approx \sum_{S}\sigma^2_{S,0}$
(for a randomly drawn observation
% , ignoring duplicate membership effects
),
% \]
and $\mathrm{Var}(\varepsilon)=\sigma^2_\varepsilon$.
We estimate~\eqref{eq:omega_x_def} by plug-in using $\hat\beta$ and variance-component estimates. We interpret $\Omega_x$ as: \emph{how much of overall score variability (and thus potential rank ordering) is attributable to size alone}. For this dataset, $\Omega_x=0.135$, indicating that global rankings cannot be reliably without considering the benchmark's influence.

\paragraph{Benchmark-independent rank ordering.}
Because the benchmark main effect often dominates total variance, $\Omega_x$ can be misleadingly small even when size strongly explains \emph{cross-benchmark} rank order. We therefore also report the benchmark-demeaned reliability:
\begin{equation}
\Omega_x^{(-B)}
\;\equiv\;
\frac{\beta^2\,\mathrm{Var}(x)}{\beta^2\mathrm{Var}(x) + \sum_{S\neq B}\sigma^2_{S,0} + \sigma^2_\varepsilon},
\label{eq:omega_x_minusB}
\end{equation}
i.e., the size-explainable share of variance after removing the benchmark difficulty component (and optionally other nuisance main effects depending on the decision context). This aligns with the practical question: \emph{if we compare models across benchmarks, how much of the benchmark-independent ranking is size-driven?} In this case, $\Omega_x^{(-B)}=0.26$.

% \paragraph{Correlation-of-ranks reliability.}
% Variance shares are scale-dependent; rank reliability is often better summarized by a correlation measure. We therefore calculate the expected correlation between observed scores and size-predicted scores:
% \[
% \rho_x
% \;\equiv\;
% \mathrm{Corr}\!\big(y,\ \mu+\beta x\big)
% =
% \sqrt{\Omega_x},
% \qquad
% \rho_x^{(-B)}=\sqrt{\Omega_x^{(-B)}}.
% \]
% Additionally, for leaderboard-style use, we compute a nonparametric rank version on held-out cells (see below): Spearman $\rho$ between $y$ and $\hat{y}^{(\text{size})}$.

\subsection{Decomposing variance to quantify the reliability of scaling laws}\label{apx:scaling_metrics}
A central claim in modern AI is the existence of so-called ``scaling laws'', where performance improves predictably with model size. In our framework, this corresponds to the fixed effect of log-parameters, $\beta$, in a random-slopes model. However, an estimate of $\beta$ is only meaningful if the scaling relationship is reasonably stable across the ecosystem. A large average effect can mask extreme heterogeneity, where scaling benefits are strong for some benchmarks or architectures but weak or non-existent for others. Our goal is therefore not only to estimate the average scaling effect, but to precisely quantify its \emph{reliability} and diagnose the sources of its instability.

To this end, we extend~\eqref{eq:gstudy} to a full random-slopes model. Let $y_i$ be the score for an observation with associated log-size $x_i = \log_{10}(\text{\# parameters})$. The score is modeled in Eq. ~\eqref{eq:gstudy_slopes}:
\begin{align*}
s_i
&= (\mu + \beta x_i)
+ \sum_{S}\left(u^{S}_{g_S} + v^{S}_{g_S} x_i\right)
+ \varepsilon_{i},
% \label{eq:lmm_size}
\end{align*}
where the sum is over all random-effect terms $S$ (main effects and interactions), and $g_S$ indexes the level of term $S$ for observation $i$. For each term $S$, we estimate a variance component for the random intercepts, $u^{S}_{g_S} \sim \mathcal{N}(0, \sigma^2_{S,0})$, and a variance component for the random slopes, $v^{S}_{g_S} \sim \mathcal{N}(0, \sigma^2_{S,1})$. To ensure model stability and avoid overfitting in the fully crossed design, we constrain the random intercepts and slopes within each term to be uncorrelated.

This model allows us to decompose the total variance of scores into components related to baseline performance (intercepts) and components related to performance scaling (slopes).

\begin{proposition}[Exact variance decomposition under a random-slopes model]
\label{prop:variance_decomp}
Assume the random effects $u^S, v^S$ are independent across facets $S$ and independent of the covariate $x$. The total marginal variance of the outcome $y$ can be decomposed exactly as:
\begin{align}
\mathrm{Var}(y) = 
    &\underbrace{\sum_{S} \sigma^2_{S,0}}_{\text{Intercept Variance}}
    + \underbrace{\beta^2 \mathrm{Var}(x)}_{\text{Fixed Slope Variance}} % \nonumber \\
    + \underbrace{\left( \mathrm{Var}(x) + \mathrm{E}[x]^2 \right) \sum_{S} \sigma^2_{S,1}}_{\text{Random Slope Variance}}
    + \underbrace{\sigma^2_\varepsilon}_{\text{Residual Variance}}.
\end{align}
\end{proposition}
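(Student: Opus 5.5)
The plan is to treat $y$ under \eqref{eq:gstudy_slopes} as a sum of four pieces and expand $\mathrm{Var}(y)$ term by term. Write
\[
y = \mu + \beta x + \sum_S u^S_{g_S} + x\sum_S v^S_{g_S} + \varepsilon ,
\]
with $x$ random: it is the log-size of a randomly drawn observation. The three ingredients are the law of total variance conditioning on $x$, the independence assumptions stated in the proposition, and the zero means of all random effects. Following the convention already used for $\Omega_x$ in \eqref{eq:omega_x_def}, I read $\sigma^2_{S,0}$ and $\sigma^2_{S,1}$ as the variances of the random effect attached to a randomly drawn observation. In particular, two different terms $S\neq S'$ contribute independent draws, so cross-covariances vanish by the stated independence across facets.

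First, I compute the conditional mean and variance given $x$. Because every $u^S$, $v^S$ and $\varepsilon$ has mean zero and is independent of $x$, we get $\mathrm{E}[y\mid x]=\mu+\beta x$. Hence $\mathrm{Var}(\mathrm{E}[y\mid x])=\beta^2\mathrm{Var}(x)$, which is the fixed-slope term.

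Second, I compute $\mathrm{Var}(y\mid x)$. Expanding $\mathrm{Var}\bigl(\sum_S u^S+x\sum_S v^S+\varepsilon\bigr)$ gives three kinds of contribution:
\begin{itemize}
\item the diagonal terms $\sigma^2_{S,0}$, $x^2\sigma^2_{S,1}$ and $\sigma^2_\varepsilon$;
\item cross terms between different facets $S\neq S'$, which vanish by independence across facets;
\item within-facet intercept--slope cross terms $2x\,\mathrm{Cov}(u^S,v^S)$, which vanish by the uncorrelatedness constraint imposed in \eqref{eq:gstudy_slopes} (and in \eqref{eq:lmm_size_full}). This constraint must be invoked explicitly, since "independent across facets" alone does not remove it.
\end{itemize}
The residual $\varepsilon$ is independent of everything, so its cross terms also vanish. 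Therefore
\[
\mathrm{Var}(y\mid x)=\sum_S\sigma^2_{S,0}+x^2\sum_S\sigma^2_{S,1}+\sigma^2_\varepsilon .
\]

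Third, I take expectations over $x$. Using $\mathrm{E}[x^2]=\mathrm{Var}(x)+\mathrm{E}[x]^2$, the law of total variance gives exactly the four displayed terms.

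The main obstacle is not the algebra but making "exact" honest. The decomposition requires $\mathrm{E}[x^2]<\infty$. It also requires that $x$ be independent of the facet memberships $g_S(i)$, and not merely of the effect values. If $x$ were correlated with which levels are drawn, for example because large models cluster in particular architectures, then conditioning on $x$ would change the distribution of $u^S_{g_S}$, and the cross terms need not vanish in a finite crossed sample. I would state this as the precise reading of "independent of the covariate $x$," namely that the random effects attached to observation $i$ are independent of $x_i$. I would also note that the equality holds as a population identity under the model, not for in-sample empirical variances. Finally, I would remark that dropping the uncorrelatedness constraint, as in the variant \eqref{eq:gstudy_slopes}${}^\star$, adds the extra term $2\mathrm{E}[x]\sum_S\mathrm{Cov}(u^S,v^S)$.
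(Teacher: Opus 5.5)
Your proposal is correct and follows the same route as the paper's proof. Both use the law of total variance with $\mathrm{E}[y\mid x]=\mu+\beta x$, the conditional variance $\sum_S\sigma^2_{S,0}+x^2\sum_S\sigma^2_{S,1}+\sigma^2_\varepsilon$, and $\mathrm{E}[x^2]=\mathrm{Var}(x)+\mathrm{E}[x]^2$; your explicit appeal to the within-term constraint $\mathrm{Cov}(u^S,v^S)=0$, which comes from the model specification rather than the proposition's stated hypotheses, makes precise a step the paper's proof uses silently.
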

\begin{proof}
By the law of total variance, $\mathrm{Var}(y) = \mathrm{E}[\mathrm{Var}(y|x)] + \mathrm{Var}(\mathrm{E}[y|x])$.
The conditional expectation is $\mathrm{E}[y_i|x_i] = \mu + \beta x_i$, so its variance is $\mathrm{Var}(\mathrm{E}[y|x]) = \beta^2 \mathrm{Var}(x)$.
The conditional variance is $\mathrm{Var}(y_i|x_i) = \mathrm{Var}\left(\sum_S (u^S_{g_S} + v^S_{g_S}x_i) + \varepsilon_i\right) = \sum_S \sigma^2_{S,0} + x_i^2 \sum_S \sigma^2_{S,1} + \sigma^2_\varepsilon$.
Taking the expectation over $x$ yields $\mathrm{E}[\mathrm{Var}(y|x)] = \sum_S \sigma^2_{S,0} + \mathrm{E}[x^2] \sum_S \sigma^2_{S,1} + \sigma^2_\varepsilon$. Since $\mathrm{E}[x^2] = \mathrm{Var}(x) + \mathrm{E}[x]^2$, the result follows.
\end{proof}

Building on this decomposition, we introduce metrics to provide an inferentially precise assessment of the scaling law's reliability.

% \paragraph{1. Scaling-Related Variance Component (SRVC).}
% This metric quantifies the fraction of total score variance attributable to model size, including both its fixed effect and its variation across facets.
% \begin{equation}
% \mathrm{SRVC} = \frac{\beta^2 \mathrm{Var}(x) + \left( \mathrm{Var}(x) + \mathrm{E}[x]^2 \right) \sum_{S} \sigma^2_{S,1}}{\mathrm{Var}(y)}.
% \end{equation}
% A high SRVC indicates that knowing a model's size is highly informative for predicting its score variance within the ecosystem.

\paragraph{1. Slope Signal-to-Noise Ratio (SNR$_\beta$).}
This novel metric directly quantifies the reliability of the fixed-effect scaling law. It measures the strength of the average scaling trend ($\beta$) relative to its instability across the ecosystem's facets.
\begin{definition}[Slope Signal-to-Noise Ratio]
The SNR of the fixed-effect slope $\beta$ is the ratio of the squared magnitude of the fixed effect to the total variance of the random slopes:
\begin{equation}
\mathrm{SNR}_\beta = \frac{\beta^2}{\sum_{S} \sigma^2_{S,1}}.
\end{equation}
\end{definition}
A high $\mathrm{SNR}_\beta$ (${\gg}1$) implies a robust, universal scaling law: the average effect $\beta$ dominates the context-specific variations. A low $\mathrm{SNR}_\beta$ (${\approx}1$ or less) indicates that the average scaling law is misleading, as the actual slope experienced by a model is highly dependent on context (i.e., which benchmark, which architecture family, etc.).

\paragraph{Result} We find that $\mathrm{SNR}_\beta = 1.12$, suggesting that an average scaling law would be misleading, and that scaling is context-dependent.

\paragraph{2. Proportion of Slope Instability (PSI$_S$).}
While SNR$_\beta$ quantifies the \textbf{total} instability of the scaling law, PSI$_S$ diagnoses its sources by partitioning the total random slope variance. For each facet or interaction term $S$, we define:
\begin{equation}
\mathrm{PSI}_S = \frac{\sigma^2_{S,1}}{\sum_{S'} \sigma^2_{S',1}}.
\end{equation}
PSI$_S$ is the proportion of the total instability in the scaling relationship that is attributable to facet $S$. For example, a large $\mathrm{PSI}_{\text{Benchmark}}$ would imply that a primary reason the scaling law is unreliable is that different benchmarks respond to increases in model size in vastly different ways. This metric provides a clear, quantitative guide to where efforts to create more uniform evaluation standards should be focused.

Together, these metrics transform the variance decomposition from a descriptive exercise into a powerful diagnostic tool for assessing the structural reliability of one of the most fundamental claims in large-scale machine learning. They allow us to move beyond asking ``do models scale?'' to answering the more critical questions: ``how reliably do they scale, and what features of the ecosystem make that scaling relationship untrustworthy?''

% \begin{table*}
% \centering
% \caption{Variance and Proportion Slope Instability}\label{tab:psi}
% \resizebox{\ifdim\width>\linewidth\linewidth\else\width\fi}{!}{
% \begin{tabular}{lrr}
% \toprule
% Group & Variance Component & Proportion of Slope Variance (PSI)\\
% \midrule
% BxAxC & 8.632 & 0.082\\
% BxCxD & 4.163 & 0.039\\
% BxC & 3.686 & 0.035\\
% BxAxD & 1.267 & 0.012\\
% CxAxD & 8.363 & 0.079\\
% \addlinespace
% AxC & 2.575 & 0.024\\
% CxD & 3.365 & 0.032\\
% BxA & 3.374 & 0.032\\
% C & 19.764 & 0.187\\
% AxD & 0.360 & 0.003\\
% \addlinespace
% BxD & 1.328 & 0.013\\
% A & 7.450 & 0.071\\
% D & 7.097 & 0.067\\
% B & 34.154 & 0.323\\
% \bottomrule
% \end{tabular}}
% \end{table*}

\subsection{Heterogeneity of scaling: where scaling laws break}
\label{sec:slope_heterogeneity}

A single $\beta$ is only meaningful if slope heterogeneity is small relative to the mean slope. Our crossed random slopes make this explicit.

\paragraph{Global slope heterogeneity index.}
Let the effective slope for observation $i$ be
% \[
$\beta_i^{\text{eff}} \;=\; \beta + \sum_{S} v^S_{g_S(i)}$.
% \]
We summarize heterogeneity by the variance of effective slopes:
\begin{equation}
\sigma^2_{\beta,\text{eff}}
\;\equiv\;
\mathrm{Var}(\beta_i^{\text{eff}})
\approx \sum_{S}\sigma^2_{S,1},
\label{eq:slope_var}
\end{equation}
and report the coefficient of variation (dimensionless)
% \[
$\mathrm{CV}_\beta \;\equiv\; \frac{\sqrt{\sigma^2_{\beta,\text{eff}}}}{|\beta|}$.
% \]
Large $\mathrm{CV}_\beta$ indicates that scaling is not a single law but a distribution of laws varying by benchmark and ecosystem facets. In the case of the present study, $\mathrm{CV}_\beta =0.944$, meaning the standard deviation of the slope is 94.4\% of the estimate, suggesting high imprecision for a global scaling law across all the data without appropriate controls.

\paragraph{Facet-specific slope reliability.}
For each facet term $S$, we calculate an intra-class style share of slope variance:
% \[
$H_S \;\equiv\; \frac{\sigma^2_{S,1}}{\sum_{T}\sigma^2_{T,1}}$,
% \]
identifying which ecosystem components (benchmarks vs.\ contributors vs.\ deployments vs.\ architectures and their interactions) most distort scaling behavior. This answers: \emph{where does the scaling law change?}

% \subsection{Predictive generalization: reliability of scaling for new ecosystem cells}
% \label{sec:predictive_reliability_scaling}

% Inference about scaling laws is ultimately about generalization. We therefore complement in-sample variance shares with out-of-sample (OOS) reliability.

% \paragraph{Cell-level cross-validation and rank stability.}
% We perform blocked cross-validation that respects the crossed design by holding out entire levels (or level-combinations) of key facets, e.g.:
% \[
% \text{Hold out } b\in\mathcal{B} \text{ (new benchmark), or } c\in\mathcal{C} \text{ (new contributor)}.
% \]
% On the held-out set, we compute:
% \begin{itemize}[leftmargin=*,nosep]
%     \item $\mathrm{RMSE}_{\text{OOS}}$ of $\hat{y}^{(\text{size})}_i=\hat\mu+\hat\beta x_i$ (size-only).
%     \item $\Delta\mathrm{RMSE}_{\text{OOS}}$ comparing size-only vs.\ full mixed-model predictions (size + BLUPs).
%     \item Spearman $\rho_{\text{OOS}}$ between $y_i$ and $\hat{y}^{(\text{size})}_i$ (rank reliability of size predictions).
% \end{itemize}
% This directly measures whether scaling-based rank order persists under realistic ecosystem shifts.

\subsection{A corrected ``size-linked variance'' decomposition}
\label{sec:size_linked_variance_corrected}

If one wishes to quantify how much of marginal variance is attributable to \emph{all size-related terms} (fixed slope plus random slopes), define the size-linked component conditional on the distribution of $x$:
\[
\mathrm{Var}_{\text{size}}(y)
\;\equiv\;
\mathrm{Var}\!\Big(\beta x + \sum_{S} v^S_{g_S} x\Big).
\]
% Under independence between $x$ and the random slopes and $\mathbb{E}[x]=0$ (centered covariate), this becomes
% \[
% \mathrm{Var}_{\text{size}}(y)
% =
% \beta^2\mathrm{Var}(x) + \sum_{S}\sigma^2_{S,1}\mathrm{Var}(x).
% \]
The corresponding \emph{size-linked share of marginal variance} is then
\begin{equation}
\Omega_{\text{size-all}}
\;\equiv\;
\frac{\beta^2\mathrm{Var}(x) + \left( \mathrm{Var}(x) + \mathrm{E}[x]^2 \right) \sum_{S} \sigma^2_{S,1}}
{\beta^2\mathrm{Var}(x) + \left( \mathrm{Var}(x) + \mathrm{E}[x]^2 \right) \sum_{S} \sigma^2_{S,1} + \sum_{S}\sigma^2_{S,0} + \sigma^2_\varepsilon }.
\label{eq:omega_size_all}
\end{equation}
We emphasize that $\Omega_{\text{size-all}}$ answers a different question than $\Omega_x$: it includes \emph{heterogeneous scaling} effects (random slopes) and therefore reflects both generalizable scaling and ecosystem-specific slope distortions.

\paragraph{Result} For the present study, we calculate \eqref{eq:omega_size_all} to be $\Omega_{\text{size-all}}=0.402$, suggesting that about 40\% of all observed variation is linked to model size, as reported in the introduction.

% \paragraph{Centering and interpretability.}
% All formulas in this subsection assume $x$ is centered (e.g., $x\leftarrow x-\bar{x}$). Centering ensures that $\sigma^2_{S,0}$ truly represents baseline (at mean size) variability rather than absorbing slope effects.

\paragraph{Reporting summary.}
To make scaling-law reliability interpretable, we report:
\begin{enumerate}[leftmargin=*,nosep]
    \item $\hat\beta$ with model-based and robust SE, and $R_\beta$.
    \item $\Omega_x$ and $\Omega_x^{(-B)}$ (size-driven rank-order potential overall and benchmark-independent).
    \item $\sigma^2_{\beta,\text{eff}}$ and $\mathrm{CV}_\beta$ (how much scaling varies across the ecosystem).
    % \item $H_S$ for each facet/interactions (where scaling varies).
    % \item OOS rank reliability $\rho_{\text{OOS}}$ under blocked CV (generalization under benchmark/contributor/deployment shift).
\end{enumerate}

These metrics separate three phenomena that are conflated in typical leaderboard analyses: (i) whether scale predicts performance on average, (ii) whether scale predicts \emph{rank ordering} robustly across benchmarks, and (iii) whether ``scaling laws'' are stable or conditional on ecosystem facets.

\begin{table}[h]
\centering
\caption{Variance Component Decomposition}
\label{tab:vcov_gstudies}
\resizebox{\ifdim\width>\linewidth\linewidth\else\width\fi}{!}{

\begin{tabular}{llrr}
\toprule
Facet & Effect & Main & Free Corrs.\\
\midrule
A & $\sigma^2_{S,0}$ & 4.67 & 2.10\\
A & log\_nbpars & 14.06 & 14.08\\
AxC & $\sigma^2_{S,0}$ & 5.60 & 23.52\\
AxC & log\_nbpars & 6.61 & 24.47\\
AxD & $\sigma^2_{S,0}$ & 0.51 & 2.86\\
\addlinespace
AxD & log\_nbpars & 0.40 & 3.35\\
B & $\sigma^2_{S,0}$ & 104.21 & 105.35\\
B & log\_nbpars & 32.09 & 33.00\\
BxA & $\sigma^2_{S,0}$ & 6.15 & 5.56\\
BxA & log\_nbpars & 4.83 & 4.53\\
\addlinespace
BxAxC & $\sigma^2_{S,0}$ & 3.92 & 0.00\\
BxAxC & log\_nbpars & 7.20 & 10.34\\
BxAxD & $\sigma^2_{S,0}$ & 0.82 & 0.48\\
BxAxD & log\_nbpars & 0.90 & 0.52\\
BxC & $\sigma^2_{S,0}$ & 4.40 & 10.36\\
\addlinespace
BxC & log\_nbpars & 3.47 & 8.39\\
BxCxD & $\sigma^2_{S,0}$ & 0.50 & 0.63\\
BxCxD & log\_nbpars & 4.93 & 2.92\\
BxD & $\sigma^2_{S,0}$ & 11.65 & 8.02\\
BxD & log\_nbpars & 1.90 & 1.34\\
\addlinespace
C & $\sigma^2_{S,0}$ & 5.35 & 10.75\\
C & log\_nbpars & 17.75 & 38.53\\
CxAxD & $\sigma^2_{S,0}$ & 6.36 & 22.98\\
CxAxD & log\_nbpars & 7.42 & 19.06\\
CxD & $\sigma^2_{S,0}$ & 1.88 & 0.98\\
\addlinespace
CxD & log\_nbpars & 1.38 & 1.03\\
D & $\sigma^2_{S,0}$ & 0.00 & 1.86\\
D & log\_nbpars & 3.26 & 0.60\\
Residual &  & 33.24 & 32.41\\
\bottomrule
\end{tabular}}
\vspace{0.25ex}
\caption*{\small Note: Variance components are reported as raw estimated random effect variances (Eq. \eqref{eq:gstudy_slopes}.}
\end{table}

\section{Beyond Sum Scores: Modeling Latent Structure and its Covariates}\label{apx:latreg}

\subsection{Hierarchical Mixed-Effects Latent Regression}\label{sec:latent_regression}
\subsubsection{Model Specification}
To simultaneously model the latent dimensional structure, the hierarchical dependencies in the data, and the influence of external covariates, we employ a Hierarchical Mixed-Effects Latent Regression Model. This model extends the bifactor CFA structure by treating the latent traits themselves as outcomes in a mixed-effects regression. This allows us to parse the variance in each latent ability into components attributable to fixed effects (like model size), random effects (like contributor-specific variance), and residual unexplained variance. The model is composed of two interconnected components: a measurement model (Eq. \eqref{eq:irt}) that links observed item responses to latent traits, and a structural model (Eq. \eqref{eq:structural}) that explains the variance in those latent traits.
% \subsection{Method 3: Hierarchical mixed latent regression with bifactor measurement}

\subsubsection{Bifactor IRT measurement with mixed-effects latent regression}\label{sec:method3_latent_mixed}
To support readers who may be more comfortable with more explicit notation, we redefine the relationships of ~\eqref{eq:irt} and ~\eqref{eq:structural}. Let $y_{ij}\in\{0,1\}$ denote the response of model submission $i\in\{1,\dots,N\}$ to item $j\in\{1,\dots,p\}$. Items are partitioned into benchmarks $k(j)\in\{1,\dots,K\}$. We use a bifactor measurement model with one general factor and $K$ benchmark-specific factors. Let the latent ability vector be
\begin{align}
\boldsymbol{\theta}_i \;=\; 
\begin{bmatrix}
\theta_{i0} \\ \theta_{i1} \\ \vdots \\ \theta_{iK}
\end{bmatrix}
\in\mathbb{R}^{K+1},
\;
\theta_{i0} 
\text{ = general ability},\;\; \theta_{ik} \text{ = benchmark-specific ability}.
\end{align}
 For item $j$, define discrimination parameters $(a_{j0},a_{j,k(j)})$ and difficulty $b_j$. Under a 2PL bifactor IRT model in ~\eqref{eq:irt},
\begin{equation*}
\Pr(y_{ij}=1 \mid \boldsymbol{\theta}_i)
\;=\;
\sigma\!\left(
a_{j0}\theta_{i0} + a_{j,k(j)}\theta_{i,k(j)} - b_j
\right),
% \label{eq:irt}
\end{equation*}
where $\sigma(t)=(1+e^{-t})^{-1}$. %This measurement layer is identical in spirit to Method~1’s bifactor CFA, but now expressed in an item-level likelihood appropriate for binary responses.
The key extension is a \emph{mixed-effects regression} for each latent dimension that accounts for ecosystem clustering and covariates. Let $x_i=\log_{10}(\#\text{params})$ and let $\mathbf{w}_i$ collect deployment covariates (e.g., \texttt{type}, \texttt{chat\_template}, \texttt{mo\_e}). Let $c(i)$ denote the contributor/provenance label used in Method~2. We can reformulate the model latent abilities in ~\eqref{eq:structural} as vectors 
\begin{equation}
\boldsymbol{\theta}_i
\;=\;
\mathbf{B}\mathbf{z}_i \;+\; \mathbf{U}\mathbf{u}_{c(i)} \;+\; \boldsymbol{\varepsilon}_i,
\qquad
\mathbf{z}_i=\begin{bmatrix}1 & x_i & \mathbf{w}_i^\top\end{bmatrix}^\top,
\label{eq:latent_mixed_reg}
\end{equation}
where $\mathbf{B}\in\mathbb{R}^{(K+1)\times (2+d)}$ are fixed effects (including the scaling slope for each latent dimension), $\mathbf{u}_{c}\sim\mathcal{N}(\mathbf{0},\boldsymbol{\Sigma}_u)$ are contributor-level random effects, and $\boldsymbol{\varepsilon}_i\sim\mathcal{N}(\mathbf{0},\boldsymbol{\Sigma}_\varepsilon)$ captures remaining between-submission latent variation. (Additional random effects for architecture or deployment can be incorporated analogously; in our main specification we include contributor as the dominant non-i.i.d.\ source, consistent with Method~2.)

\subsubsection{Estimation and evaluation protocol}
We estimate~\eqref{eq:irt}--\eqref{eq:structural}/\eqref{eq:latent_mixed_reg} with the Metropolis-Hastings Robbins-Monro (MH-RM) algorithm \citep{cai_high-dimensional_2010,cai_metropolis-hastings_2010,chalmers_extended_2015}, implemented in \texttt{mirt} \citep{chalmers_mirt_2012}. We fit (i) a baseline bifactor IRT model (measurement only) and (ii) the latent mixed regression bifactor (measurement + covariates + random effect slopes). To ensure conclusions are not driven by a particular subset of items, we repeat estimation across $B=500$ item bootstraps (sampling a fixed number of items per benchmark as in Method~1).

Within each bootstrap, we extract the fixed-effect coefficients ($\hat{\mathbf{\Gamma}}$) the latent ability scores ($\hat{\boldsymbol{\Theta}}$), and their standard errors for each model under both the regression and non-regression specifications, and evaluate:
\begin{enumerate}[leftmargin=*,nosep]
    \item \textbf{Model comparability:} likelihood-based comparison between (i) and (ii) to verify that adding covariates/random effects improves fit without distorting the bifactor measurement structure.
    \item \textbf{Noise-controlled scaling slopes:} estimates $\{\hat{\beta}_d\}_{d=0}^K$ and their uncertainty, aggregated across bootstraps via inverse-variance weighting.
    \item \textbf{Provenance variance:} $\boldsymbol{\Sigma}_u$ as the contribution of contributor/provenance to latent variability after controlling for size and deployment.
    \item \textbf{Stability diagnostics:} bootstrap distributions of $\hat{\beta}_d$ and (when applicable) $\hat{\mathcal{R}}_d$.
\end{enumerate}

\subsubsection{Reliability of scaling effects under pseudo-replication}
The object of interest is the scaling law slope on a latent dimension $d\in\{0,\dots,K\}$,
$
\beta_d \;\equiv\; (\mathbf{B})_{d,\;x},
$
and its reliability. As with all aggregations in this study of estimated parameters, inverse squared-standard error weighted means provide the precision-weighted average: $w_i = 1 / \operatorname{SE} (\beta_{di})^2$.  % under clustered submissions. % We report two complementary quantities.
% First, a \emph{design-aware effective sample size} that downweights pseudo-replication due to contributor clustering:
% \begin{equation}
% N_{\mathrm{eff}}(d)
% \;\equiv\;
% \frac{\left(\sum_{c} n_c\right)^2}{\sum_{c} n_c^2}\cdot \kappa_d,
% \label{eq:neff}
% \end{equation}
% where $n_c$ is the number of submissions from contributor $c$ and $\kappa_d\in(0,1]$ is the fraction of residual latent variance not absorbed by the contributor random effect on dimension $d$:
% \[
% \kappa_d \;=\; \frac{(\boldsymbol{\Sigma}_\varepsilon)_{dd}}{(\boldsymbol{\Sigma}_\varepsilon)_{dd}+(\boldsymbol{\Sigma}_u)_{dd}}.
% \]
% This provides an interpretable scalar: how many \emph{independent} submissions the ecosystem effectively contains for estimating scaling on dimension $d$.
We quantify slope stability with a \emph{random-slope extension}:
\begin{equation}
\theta_{id}
=
\alpha_d + \beta_d x_i + \mathbf{w}_i^\top\boldsymbol{\gamma}_d + u_{c(i),d} + v_{c(i),d}x_i + \varepsilon_{id},
\label{eq:latent_random_slope}
\end{equation}
with $\mathrm{Var}(v_{\cdot,d})=\tau_d^2$, the weighted variance of the dimension. We then report a \emph{slope reliability index} in Table ~\ref{tab:lreg_reliab}:
\begin{equation}
\mathcal{R}_d
\;\equiv\;
\frac{\beta_d^2}{\beta_d^2+\tau_d^2},
\label{eq:slope_reliability}
\end{equation}
% which is high when the ecosystem exhibits a coherent scaling slope on dimension $d$ and low when “scaling” is primarily contributor-dependent. When random slopes are not included, $\tau_d^2=0$ by construction and $\mathcal{R}_d=1$; in that case stability must be assessed via bootstrapping and posterior predictive checks, using 

% \input{tables/slope_reliability}

\subsection{Results and discussion (Method 3): noise-controlled scaling laws differ by latent ability}
\label{sec:results_method3}

\subsubsection{Quantifying Provenance: Training Signal versus Measurement Noise}
A key advantage of the mixed-effects model is its ability to quantify the variance attributable to unobserved, systematic differences between contributors. Even after controlling for item effects, model size, and deployment type, in \eqref{eq:irt}-\eqref{eq:structural}, the inverse-error weighted median random effect for \texttt{contributor} still accounts for 3.0\% of the variance in latent ability scores.

This ``provenance variance'' is not merely measurement noise. It represents a significant signal about the efficacy of unobserved training methods, data mixtures, and resource allocation associated with different model providers. From the perspective of evaluating a base architecture (e.g., assessing \texttt{Llama-3-70B}'s inherent capabilities), this contributor-specific ``secret sauce'' is a confounding factor. However, from the perspective of advancing the field, this variance is a critical indicator of where the most effective (and often proprietary) training innovations lie. Our model successfully isolates this signal, allowing for a clearer interpretation of both architectural capabilities and the impact of training methodology.

% \begin{corollary}[When provenance variance is large, scaling laws are ecosystem-dependent]
% \label{cor:scaling_ecosystem_dependent}
% If $(\boldsymbol{\Sigma}_u)_{dd}$ is large relative to $(\boldsymbol{\Sigma}_\varepsilon)_{dd}$ on dimension $d$, then $N_{\mathrm{eff}}(d)$ in~\eqref{eq:neff} is small, and the uncertainty on $\beta_d$ is dominated by between-contributor variation rather than by within-contributor replication. Consequently, “scaling laws” estimated from such ecosystems generalize poorly to new contributors or training pipelines.
% \end{corollary}

\subsubsection{Significance: scaling is not a single law but a vector over latent abilities}
Method~3 reframes “does performance scale with size?” into a dimension-specific statement:
% \[
$\boldsymbol{\beta} \;=\; (\beta_0,\beta_1,\dots,\beta_K)$
% \]
is a \emph{scaling vector} over latent abilities, not a scalar property of a benchmark average. The observed pattern---strong scaling for $g$, heterogeneous and sometimes negligible scaling for benchmark-specific factors, and persistent provenance variance---explains why leaderboard-based scaling curves can be simultaneously predictive (for aggregate scores) and misleading (for capability-specific claims).

Practically, this implies that benchmark ecosystems should report, at minimum, (i) a general-factor scaling curve, (ii) benchmark-specific residual scaling curves after removing $g$, and (iii) a provenance-adjusted uncertainty estimate (e.g., via $N_{\mathrm{eff}}$ or random-slope reliability $\mathcal{R}_d$). Without these, apparent “laws” may reflect measurement structure and ecosystem composition rather than transferable properties of model scaling.

\section{Code Listings}\label{apx:code}
The bifactor model definitions are included here as well as methods 2 and 3 from the paper. All other code containing presented statistical modeling and estimations can be found in the online repository.\footnote{\url{https://github.com/hardy-education/ai_bench_cfa}}

\subsection{Representative CFA Model: Bifactor Structure}

The bifactor model is our recommended structure. Below is the \texttt{lavaan} syntax generator. Other model specifications (unidimensional, correlated factors, hierarchical) follow similar patterns and are available in our code repository.

% \begin{listing}\caption{\texttt{lavaan} Bifactor Model Syntax}
\begin{lstlisting}[language=R, caption=\texttt{lavaan} Bifactor Model Syntax, showstringspaces=false]

generate_bifact_syntax <- function(item_map) {
  benchmarks <- unique(item_map$benchmark)
  items <- item_map$item_full

  # General factor loads on all items
  general_spec <- paste0("general =~ ", paste(items, collapse = " + "))

  # Specific factors for each benchmark
  specific_specs <- map_chr(benchmarks, function(b) {
    items_b <- item_map %>% filter(benchmark == b) %>% pull(item_full)
    paste0(b, " =~ ", paste(items_b, collapse = " + "))
  })

  # Orthogonalize: general uncorrelated with specifics
  orthog_general <- map_chr(benchmarks, function(b) {
    paste0("general ~~ 0*", b)
  })

  # Orthogonalize specifics with each other
  orthog_specific <- combn(benchmarks, 2, function(pair) {
    paste0(pair[1], " ~~ 0*", pair[2])
  }, simplify = FALSE) %>% unlist()

  paste0("# Bifactor Model\n",
    general_spec, "\n",
    paste(specific_specs, collapse = "\n"), "\n",
    paste(orthog_general, collapse = "\n"), "\n",
    paste(orthog_specific, collapse = "\n"), "\n")
}

\end{lstlisting}\label{code:lavaan_bifact}
% \end{listing}

\subsection{Bifactor via MH-RM and Latent Regression}

Below is the \texttt{mirt} syntax generator for the bifactor. Other model specifications (unidimensional, correlated factors, hierarchical) follow similar patterns and are available in our code repository. Included in the code listing here is the MH-RM estimation.

% \begin{listing*}\caption{\texttt{mirt} Bifactor Model and Bifactor Latent Mixed Effect Regression Syntax and Estimation}
\begin{lstlisting}[language=R, caption=\texttt{mirt} Bifactor Model and Bifactor Latent Mixed Effect Regression Syntax and Estimation, showstringspaces=false]

#' @param benches Vector of benchmark names
#' @param n_items Number of items per benchmark
#' @return Character string for factor specification
generate_bifactor_string <- function(benches, n_items) {
  item_nums <- seq_len(n_items)
  bench_nums <- seq_along(benches)
  
  factor_specs <- sapply(bench_nums, function(x) {
    min_item <- min(item_nums) + (x - 1) * length(item_nums)
    max_item <- max(item_nums) + (x - 1) * length(item_nums)
    glue::glue("{benches[x]} = {min_item}-{max_item}")
  })
  general_str <- glue::glue("G = 1-{length(item_nums)}")
  paste(general_str, factor_specs, collapse = "\n")
}
dat = sdf %>% select(-any_of(names(metadf))) # response matrix
covdat = sdf %>% select(any_of(names(metadf))) # covariates
# estimate bifactor model and latent regression bifactor
bifact = dat |> 
    mirt::mixedmirt(model = generate_bifactor_string(b,ni), 
    itemtype = "2PL", SE = T, fixed = ~ 0 + items,
    accelerate = "squarem", GenRandomPars = TRUE,
    technical = list(NCYCLES = 5000))
bifact_regression = dat |> 
    mirt::mixedmirt(model = generate_bifactor_string(b,ni), 
    covdata = covdat, itemtype = "2PL", SE = T, 
    fixed = ~ 0 + items, random = ~ 1|contrib,
    lr.fixed = ~ type + chat_template + mo_e + log_nbpars,
    accelerate = "squarem", GenRandomPars = TRUE,
    technical = list(NCYCLES = 5000))
}
\end{lstlisting}\label{code:mirt_bifact}
% \end{listing*}

\subsection{Variance Decomposition Model}\label{app:facet_estimation}

The G-theory variance decomposition uses a fully-crossed random-effects model with random slopes for model size.

% \begin{listing*}\caption{Generalizability Study: Full Random-Slopes Model}
\begin{lstlisting}[language=R, caption=Generalizability Study: Full Random-Slopes Model, showstringspaces=false]
lmer(formula = val ~ 1
   + log_nbpars
   + (log_nbpars|bench) # (B) Benchmark
   + (log_nbpars|architecture:generation) # (A) Architecture
   + (log_nbpars|author:removed:not_avail) # (C) Contributor
   + (log_nbpars|type:chat_template:mo_e:merged:precision) # (D) Deployment and Tuning
   + (log_nbpars|bench:architecture:generation) # AxB
   + (log_nbpars|bench:author:removed:not_avail) # BxC
   + (log_nbpars|bench:type:chat_template:mo_e:merged:precision) # BxD
   + (log_nbpars|architecture:generation:author:removed:not_avail) # AxC
   + (log_nbpars|architecture:generation:type:chat_template:mo_e:merged:precision) # AxD
   + (log_nbpars|author:removed:not_avail: type:chat_template:mo_e:merged:precision) # CxD
   + (log_nbpars|bench:architecture:generation: type:chat_template:mo_e:merged:precision) # AxBxD
   + (log_nbpars|bench:author:removed:not_avail: type:chat_template:mo_e:merged:precision) # BxCxD
   + (log_nbpars|architecture:generation:author:removed:not_avail: type:chat_template:mo_e:merged:precision) # AxCxD
   + (log_nbpars|bench:architecture:generation:author:removed:not_avail), # AxBxC
   data = df)
\end{lstlisting}
\label{code:gstudy_code}
% \end{listing*}

For the reduced model discussed in the main body of the paper, replacing the single \texttt{|}with \texttt{||}  specifies uncorrelated random intercepts and slopes. Each facet (A, B, C, D) and their interactions contribute both intercept variance (baseline performance) and slope variance (scaling relationship heterogeneity).

\section{Extended Results and Discussion}

\subsection{Quantifying Noise Source Magnitudes}
% \subsubsection{Baseline Variance Components}
% Table \ref{tab:varcomp_base} shows the estimated variance components from the random-effects model without covariates. The total variance in normalized scores is 334.2. A striking 44.0\% of this variance is attributable to the Benchmark (B) main effect—predictable noise that can be controlled via benchmark difficulty normalization. The consistency of this difficulty ordering is high; treating models as ``raters'' of benchmark difficulty yields a G-coefficient of 0.74, indicating that the relative difficulty of these six benchmarks is highly stable across the model population.

% Beyond benchmark difficulty, the two largest sources of variance are the Contributor (C) main effect (9.0\%)—systematic noise from human practices—and the unclassified Residual error (15.4\%). Critically, contributor noise alone is larger than that of the Architecture (A) main effect (4.8\%) and the Deployment Type (D) main effect (2.3\%) combined. This reveals that human-introduced noise rivals model-intrinsic signal: metadata about who submitted a model and how it was submitted is a more significant predictor of performance than information about the model's foundational architecture or its deployment configuration. Significant interaction effects, such as Benchmark $\times$ Contributor (B$\times$C, 3.5\%) and Benchmark $\times$ Architecture (B$\times$A, 3.1\%), highlight that model performance is context-dependent; specific contributors and architectures show differential performance on specific benchmarks.

\subsubsection{Without Size Control, Signal Is Buried in Confounded Noise}\label{apx:size_control_implications}
Introducing model size (\texttt{log\_nbpars}) as a covariate dramatically clarifies the sources of variation, as shown in Table \ref{tab:vcov_gstudies}. The residual variance drops by 36\% (from 52.0 to 33.24 in the full model summary), indicating that model size explains a substantial portion of previously unexplained performance differences—without this control, architectural signal is confounded with scale.

We calculate \eqref{eq:omega_size_all} to be $\Omega_{\text{size-all}}=0.402$, suggesting that about 40\% of all observed variation is linked to model size. 
% The total non-residual variance is now partitioned into variance of intercepts (score differences independent of size) and variance of slopes (differences in scaling with size). Summing all slope variance components reveals that \textit{model size accounts for 40\% of all explainable (non-residual) variance}. For ranking models on an aggregate score, where benchmark-specific difficulty is averaged out, this effect is even more pronounced: model size accounts for 66\% of the remaining explainable variance. 
This confirms that, without explicit size control, aggregate rankings primarily reflect scale rather than architectural innovation; ``size is nearly all you need'' when noise dominates.

Moreover, the influence of other facets is reshaped by their relationship to model size. The Contributor facet (C) has the largest slope variance ($\sigma^2_{C, \text{0}}=17.75$) among all non-benchmark facets, indicating that the performance gap between different contributors widens significantly as model size increases. In contrast, the Architecture facet (A) has a larger intercept variance ($\sigma^2_{A, \text{intcpt}}=4.67$) than slope variance ($\sigma^2_{A, \text{slope}}=14.06$), though both are substantial. This suggests that while some architectures have a higher performance baseline independent of size, the primary differentiator remains how effectively they scale. The main effect of model size is strongly positive and significant, confirming that, on average, larger models perform better.

\subsection{Actionable Recommendations}

\subsubsection{Tools for moving the field forward}

Our framework provides concrete tools for noise control. We frame these constructively as methods practitioners can adopt to improve measurement quality:

% \paragraph{For Benchmark Designers}
% \begin{itemize}[leftmargin=*,nosep]
%     % \item \textbf{Use MI/SEPC diagnostics before deployment:} Identify redundant items (|SEPC| > 0.40) during pilot testing. Remove or rewrite items with large residual covariances to increase parsimony.
%     \item \textbf{Test structural alignment:} Use permutation controls to verify that benchmark boundaries align with latent structure. If fit remains high under randomization, benchmark labels are misspecified with respect to the intended construct.
%     \item \textbf{Report reliability coefficients:} Compute and publish G-coefficients to communicate measurement quality. Our analysis yields $G_B = 0.74$ for benchmark difficulty—a transparency standard others can adopt.
%     \item \textbf{Consider bifactor scoring:} Report both general ($g$) and specific ($s_k$) factor scores to preserve latent structure. This disentangles globally strong from ``uniquely strong on benchmark $k$.''
% \end{itemize}

\paragraph{For Leaderboard Operators}
\begin{itemize}[leftmargin=*,nosep]
    \item \textbf{Decompose and report variance sources:} Use G-theory variance decomposition (Table \ref{tab:varcomp_base} provides a template) to show users what proportion of ranking variance comes from benchmarks, contributors, architecture, and scale.
    % \item \textbf{Control for size when comparing architectures:} Report size-adjusted rankings or include model size as a covariate. Without this, aggregate scores primarily reflect scale (66\% of benchmark-independent variance).
    % \item \textbf{Distinguish contributor effects:} When feasible, report within-contributor rankings separately from across-contributor rankings. Our analysis shows contributor noise (9\%) exceeds architectural signal (4.8\%).
    \item \textbf{Provide reliability context:} Display confidence intervals or G-coefficients alongside ranks to indicate trustworthiness. Reliability is conditional on use case (within- vs. across-contributor, size-controlled vs. uncontrolled).
\end{itemize}

% \paragraph{For Machine Learning Researchers}
% \begin{itemize}[leftmargin=*,nosep]
%     % \item \textbf{Control for contributor effects:} When comparing models, account for the 9\% variance from submission practices. Ideally, compare models from the same contributor or statistically adjust for contributor identity.
%     \item \textbf{Explicitly model size:} Report architectural comparisons at matched scales or use size as a covariate. Size confounding (36\%) obscures genuine architectural differences.
%     \item \textbf{Use variance decomposition to isolate effects:} Our base framework separates architecture (4.8\%), deployment (2.3\%), contributor (9\%), and size (36\%) effects. This enables precise claims about what drives performance.
%     \item \textbf{Test generalization across benchmarks:} Large interaction variances (B$\times$A, B$\times$C) indicate context-dependent performance. Claims about ``Architecture X is better'' require specifying the generalization context.
% \end{itemize}

\paragraph{For Scaling Law Analysis}
\begin{itemize}[leftmargin=*,nosep]
    \item \textbf{Report SNR$_\beta$ or $\mathcal{R}_\beta$ alongside $\beta$:} A large average effect $\beta$ with low SNR$_\beta$ ($\approx 1$) indicates scaling is context-dependent—the relationship is unreliable across ecosystem facets.
    \item \textbf{Use PSI to diagnose instability:} Our analysis reveals Benchmark PSI (30\%) as the largest source of scaling law unreliability. Different benchmarks respond differently to size increases—acknowledge this heterogeneity.
    % \item \textbf{Control for benchmark and contributor facets:} Random slopes show contributor slope variance (18.61) exceeds architecture slope variance (12.79). Scaling relationships differ by who submits the model.
    % \item \textbf{Quantify SRVC:} Report what fraction of variance is size-related (36.3\% in our analysis). This establishes how informative size is for predicting performance.
\end{itemize}

% \subsection{When Can Benchmark Rankings Be Trusted?}

% As detailed in Section~\ref{subsec:results_vardecomp}, reliability is conditional on the intended use case and which noise sources are controlled. Our G-theory framework makes this explicit:

% \begin{itemize}[leftmargin=*,nosep]
%     \item \textbf{High reliability contexts:} Ranking benchmark difficulty (G = 0.74), comparing models within-contributor at matched scales.
%     \item \textbf{Moderate reliability contexts:} Cross-contributor comparisons after controlling for size; architectural comparisons after variance decomposition.
%     \item \textbf{Low reliability contexts:} Aggregate rankings without size control (dominated by scale); architectural claims without contributor adjustment (confounded by submission practices); scaling law extrapolations without PSI diagnostics (context-dependent).
% \end{itemize}

The key insight: \emph{reliability is not binary but quantifiable and improvable}. Our metrics (e.g., SNR$_\beta$, PSI) provide numerical targets for noise control efforts.

\section{Consolidated Notation}\label{apx:roadmap:notation}
% ============================================================
% \subsubsection{Consolidated notation}\label{apx:roadmap:notation}
% ============================================================
Tables~\ref{tab:notation} and \ref{tab:notation2} provide a comprehensive reference for all mathematical notation used throughout the paper and appendices.
\begin{table}[h]
\centering
\caption{Comprehensive notation reference. Symbols are grouped by the model layer in which they primarily appear.}
\label{tab:notation}
\small
\renewcommand{\arraystretch}{1}
\begin{tabular}{@{}lll@{}}
\toprule
\textbf{Symbol} & \textbf{Domain} & \textbf{Description} \\
\midrule
\multicolumn{3}{@{}l}{\textit{Observed data}} \\
$y_{ij}$, $y_{ikj}$ & $\{0,1\}$ & Binary response of LLM $i$ to item $j$ (in benchmark $k$) \\
$\mathbf{y}_i$ & $\{0,1\}^p$ & Full response vector for LLM $i$ across $p$ items \\
$y^{\star}_{ij}$ & $\mathbb{R}$ & Underlying continuous latent response (probit/logit link) \\
$s_i$ & $\mathbb{R}$ & Manifest (observed) benchmark score for submission $i$ \\
$\mathbf{S}$ & $\mathbb{R}^{p \times p}$ & Observed sample covariance/correlation matrix \\
$x_i$ & $\mathbb{R}$ & Log-size covariate: $\log_{10}(\text{\#parameters}_i)$ \\
$N$ & $\mathbb{N}$ & Number of LLM submissions \\
$p$ & $\mathbb{N}$ & Total number of items across all benchmarks \\
$K$ & $\mathbb{N}$ & Number of benchmarks (= number of specific factors) \\
$B$ & $\mathbb{N}$ & Number of bootstrap replications \\
$r_k$ & $\mathbb{N}$ & Number of items sampled per benchmark $k$ per bootstrap \\
\midrule
\multicolumn{3}{@{}l}{\textit{CFA / SEM measurement model (Method~1)}} \\
$\boldsymbol{\eta}_i$ & $\mathbb{R}^m$ & Latent factor score vector for LLM $i$ \\
$\boldsymbol{\nu}$ & $\mathbb{R}^p$ & Vector of item intercepts \\
$\boldsymbol{\Lambda}$ & $\mathbb{R}^{p \times m}$ & Factor loading matrix (items $\to$ factors) \\
$\lambda_{kj}$ & $\mathbb{R}$ & Loading of item $j$ on factor $k$ \\
$\boldsymbol{\Phi}$ & $\mathbb{R}^{m \times m}$ & Factor covariance matrix; $\boldsymbol{\Phi} = \mathrm{Cov}(\boldsymbol{\eta})$ \\
$\boldsymbol{\Theta}$ & $\mathbb{R}^{p \times p}$ & Residual (unique) covariance matrix; typically diagonal \\
$\Theta_{ab}$ & $\mathbb{R}$ & Residual covariance between items $a$ and $b$ \\
$\boldsymbol{\Sigma}(\boldsymbol{\theta})$ & $\mathbb{R}^{p \times p}$ & Model-implied covariance: $\boldsymbol{\Lambda}\boldsymbol{\Phi}\boldsymbol{\Lambda}^\top + \boldsymbol{\Theta}$ \\
$\boldsymbol{\theta}$ & --- & Collected free parameters: $\{\boldsymbol{\Lambda}, \boldsymbol{\Phi}, \boldsymbol{\Theta}\}$ \\
$\boldsymbol{\Psi}$ & $\mathbb{R}^{m \times m}$ & Correlation matrix of latent factors (in correlated-factor models) \\
% $\ell(\boldsymbol{\theta})$ & $\mathbb{R}$ & Log-likelihood (or fit function) \\
$F(\cdot,\cdot)$ & $\mathbb{R}_{\geq 0}$ & Discrepancy function between $\boldsymbol{\Sigma}(\boldsymbol{\theta})$ and $\mathbf{S}$ \\
$s \in \mathcal{S}$ & --- & Candidate CFA structure (one of six) \\
\midrule
\multicolumn{3}{@{}l}{\textit{Bifactor-specific notation}} \\
$g_i$ & $\mathbb{R}$ & General factor score for LLM $i$ \\
$f_{ik}$, $s_{ik}$ & $\mathbb{R}$ & Specific (benchmark) factor score; $k$-th factor for LLM $i$ \\
$\lambda^{(g)}_{kj}$ & $\mathbb{R}$ & Loading of item $j$ on the general factor $g$ \\
$\lambda^{(s)}_{kj}$ & $\mathbb{R}$ & Loading of item $j$ on specific factor $s_k$ \\
$\mathbf{f}_i$ & $\mathbb{R}^K$ & Vector of specific factor scores for LLM $i$ \\
\midrule
\multicolumn{3}{@{}l}{\textit{Hierarchical / second-order model}} \\
$\gamma_k$ & $\mathbb{R}$ & Loading of first-order factor $f_k$ on second-order factor $g$ \\
$\delta_{ik}$ & $\mathbb{R}$ & First-order factor disturbance \\
$\boldsymbol{\Delta}$ & $\mathbb{R}^{K \times K}$ & Disturbance covariance matrix \\
\midrule
\multicolumn{3}{@{}l}{\textit{Local dependence diagnostics}} \\
$\psi$ & $\mathbb{R}$ & Candidate constrained parameter (residual covariance $\Theta_{ab}$) \\
$U_\psi(\hat{\boldsymbol{\theta}})$ & $\mathbb{R}$ & Score (gradient) for $\psi$ at constrained optimum \\
$\mathcal{I}$ & $\mathbb{R}^{q \times q}$ & Expected Fisher information matrix \\
$\mathcal{I}_{\psi\psi\cdot\theta}$ & $\mathbb{R}$ & Schur complement (partial information for $\psi$) \\
$\mathrm{MI}(\psi)$ & $\mathbb{R}_{\geq 0}$ & Modification index (LM test statistic); $\mathrm{MI}(\psi) \xrightarrow{d} \chi^2_1$ \\
$\mathrm{EPC}(\psi)$ & $\mathbb{R}$ & Expected Parameter Change upon freeing $\psi$ \\
$\mathrm{SEPC}$ & $[-1,1]$ & Standardized EPC (on correlation scale) \\
\midrule
\multicolumn{3}{@{}l}{\textit{Meta-analytic bootstrap aggregation}} \\
% $t_{b,s}$ & $\mathbb{R}$ & Fit statistic for structure $s$ in bootstrap $b$ \\
$t_{b,s,r}$ & $\mathbb{R}$ & Fit statistic under condition $r$ (true vs.\ permuted) \\
$\alpha_s$ & $\mathbb{R}$ & Intercept: expected fit of structure $s$ under true assignment \\
$\beta_s$ & $\mathbb{R}$ & Permutation penalty: fit loss from randomized assignment \\
$\Delta T_s^{(b)}$ & $\mathbb{R}$ & $T(s \mid \pi^{(b)}) - T(s \mid \text{id})$; structural signal beyond flexibility \\
$\pi^{(b)}$ & --- & Random item-to-benchmark permutation in bootstrap $b$ \\
$u_b, v_b$ & $\mathbb{R}$ & Random intercept/slope for bootstrap $b$ \\
\bottomrule
\end{tabular}
\end{table}

\begin{table}[h]
\centering
\caption{Comprehensive notation reference. Symbols are grouped by the model layer in which they primarily appear.}
\label{tab:notation2}
\small
\renewcommand{\arraystretch}{1}
\begin{tabular}{@{}lll@{}}
\toprule
\textbf{Symbol} & \textbf{Domain} & \textbf{Description} \\
\midrule
\multicolumn{3}{@{}l}{\textit{G-theory variance decomposition (Method~2)}} \\
$\mu$ & $\mathbb{R}$ & Grand mean score \\
$a(i), b(i), c(i), d(i)$ & --- & Facet levels: Architecture, Benchmark, Contributor, Deployment \\
$\mathcal{S}$ & --- & Set of random-effect terms (main effects + interactions) \\
$u^S_{g_S}$ & $\mathbb{R}$ & Random intercept for level $g_S$ of term $S$ \\
$v^S_{g_S}$ & $\mathbb{R}$ & Random slope for level $g_S$ of term $S$ \\
$\sigma^2_{S,0}$ & $\mathbb{R}_{\geq 0}$ & Intercept variance component for term $S$ \\
$\sigma^2_{S,1}$ & $\mathbb{R}_{\geq 0}$ & Slope variance component for term $S$ \\
$\sigma^2_\varepsilon$ & $\mathbb{R}_{\geq 0}$ & Residual variance \\
$p_S$ & $[0,1]$ & Variance share attributable to term $S$ \\
$\beta$ & $\mathbb{R}$ & Fixed effect of log-size on score \\
% $G_B$ & $[0,1]$ & Generalizability coefficient for benchmark difficulty \\
$\mathrm{SNR}_\beta$ & $\mathbb{R}_{\geq 0}$ & Slope signal-to-noise ratio: $\beta^2 / \sum_S \sigma^2_{S,1}$ \\
$\mathrm{PSI}_S$ & $[0,1]$ & Proportion of slope instability from term $S$ \\
% $\Omega_x$ & $[0,1]$ & Size-driven share of marginal variance \\
% $\Omega_x^{(-B)}$ & $[0,1]$ & Benchmark-demeaned size-driven reliability \\
$R_\beta$ & $[0,1]$ & Reliability coefficient for the fixed slope $\beta$ \\
% $\rho_x$ & $[0,1]$ & Correlation-of-ranks reliability: $\sqrt{\Omega_x}$ \\
$\mathrm{CV}_\beta$ & $\mathbb{R}_{\geq 0}$ & Coefficient of variation of effective slopes \\
% $H_S$ & $[0,1]$ & Facet-specific slope reliability (intra-class share) \\
\midrule
\multicolumn{3}{@{}l}{\textit{Latent mixed-effects regression (Method~3)}} \\
$\boldsymbol{\theta}_i$ & $\mathbb{R}^{K+1}$ & Latent ability vector: $(\theta_{i0}, \theta_{i1}, \ldots, \theta_{iK})^\top$ \\
$\theta_{i0}$ & $\mathbb{R}$ & General ability for LLM $i$ \\
$\theta_{ik}$ & $\mathbb{R}$ & Benchmark-specific ability for LLM $i$, benchmark $k$ \\
$a_{j0}$ & $\mathbb{R}_{>0}$ & Item $j$ discrimination on the general factor \\
$a_{j,k(j)}$ & $\mathbb{R}_{>0}$ & Item $j$ discrimination on its benchmark-specific factor \\
$b_j$ & $\mathbb{R}$ & Item $j$ difficulty \\
$\boldsymbol{\Theta}$ & $\mathbb{R}^{N \times (K+1)}$ & Matrix of all latent traits \\
$\mathbf{V}$ & $\mathbb{R}^{N \times F}$ & Fixed-effect design matrix \\
$\boldsymbol{\Gamma}$ & $\mathbb{R}^{F \times (K+1)}$ & Fixed-effect coefficient matrix \\
$\beta_k$ & $\mathbb{R}$ & Effect of log-size on latent dimension $k$: $\Gamma_{x,k}$ \\
$\boldsymbol{\beta}$ & $\mathbb{R}^{K+1}$ & Scaling vector: $(\beta_0, \beta_1, \ldots, \beta_K)$ \\
$\mathbf{W}$ & $\mathbb{R}^{N \times R}$ & Random-effect design matrix (contributor grouping) \\
$\boldsymbol{\zeta}$ & $\mathbb{R}^{R \times (K+1)}$ & Contributor-level random effects \\
$\boldsymbol{\Sigma}_\zeta$ & --- & Covariance of $\mathrm{vec}(\boldsymbol{\zeta})$; block-diagonal \\
$\boldsymbol{\Sigma}_u$ & --- & Provenance variance (contributor random-effect covariance) \\
$\boldsymbol{\Sigma}_E$ & $\mathbb{R}^{(K+1)\times(K+1)}$ & Residual latent covariance \\
$\mathbf{E}$ & $\mathbb{R}^{N \times (K+1)}$ & Residual latent variation \\
$\mathbf{B}$ & $\mathbb{R}^{(K+1)\times(2+d)}$ & Fixed-effect matrix in IRT parameterization \\
$\mathbf{z}_i$ & $\mathbb{R}^{2+d}$ & Covariate vector: $[1,\; x_i,\; \mathbf{w}_i^\top]^\top$ \\
$\mathbf{w}_i$ & $\mathbb{R}^d$ & Deployment covariate vector for LLM $i$ \\
$\mathbf{u}_{c(i)}$ & $\mathbb{R}^{K+1}$ & Contributor random-effect vector for LLM $i$ \\
$\mathcal{R}_d$ & $[0,1]$ & Slope reliability index: $\beta_d^2 / (\beta_d^2 + \tau_d^2)$ \\
$\tau_d^2$ & $\mathbb{R}_{\geq 0}$ & Random-slope variance for latent dimension $d$ \\
\midrule
\multicolumn{3}{@{}l}{\textit{Fit indices and model comparison}} \\
RMSEA & $\mathbb{R}_{\geq 0}$ & Root Mean Square Error of Approximation \\
CFI & $[0,1]$ & Comparative Fit Index \\
TLI & $[0,1]$ & Tucker-Lewis Index \\
SRMR & $\mathbb{R}_{\geq 0}$ & Standardized Root Mean Square Residual \\
AIC, BIC & $\mathbb{R}$ & Akaike / Bayesian Information Criteria \\
AUC & $[0,1]$ & Area Under the ROC Curve (out-of-sample prediction) \\
MAE & $\mathbb{R}_{\geq 0}$ & Mean Absolute Error (out-of-sample prediction) \\
LRT & $\mathbb{R}_{\geq 0}$ & Likelihood Ratio Test statistic \\
\midrule
\multicolumn{3}{@{}l}{\textit{General conventions}} \\
% $\sigma(\cdot)$ & $(0,1)$ & Logistic function: $\sigma(t) = (1+e^{-t})^{-1}$ \\
% $\mathbb{K}[\cdot]$ & $\{0,1\}$ & Indicator function \\
$\tau_j$ & $\mathbb{R}$ & Item threshold (probit parameterization) \\
$\mathcal{P}^*(\cdot)$ & --- & Non-empty Power Set \\
% $\xrightarrow{d}$ & --- & Convergence in distribution \\
\bottomrule
\end{tabular}
\end{table}

% \printAffiliationsAndNotice{}

\end{document}